\def\eqref#1{equation~\ref{#1}}
\def\1{\bm{1}}
\DeclareMathAlphabet{\mathsfit}{\encodingdefault}{\sfdefault}{m}{sl}
\SetMathAlphabet{\mathsfit}{bold}{\encodingdefault}{\sfdefault}{bx}{n}
\newcommand{\KL}[2]{D_{\mathrm{KL}}\left(#1 \parallel #2\right)} % modified
\newtheorem{theorem}{Theorem}[section]
\newtheorem{corollary}[theorem]{Corollary}
\newtheorem{proposition}[theorem]{Proposition}
\newtheorem{lemma}[theorem]{Lemma}
\newtheorem{definition}{Definition}[section]
\newtheorem{assumption}{Assumption}[section]
\newtheorem{example}{Example}[section]
\newcommand{\abs}[1]{\left| #1 \right|}
\newcommand{\norm}[1]{\left\| #1 \right\|}
\title{On Bits and Bandits: Quantifying the Regret-Information Trade-off}
\author{Itai Shufaro \\
Technion \\
\texttt{itai.shufaro@campus.technion.ac.il} \\
\And
Nadav Merlis \\
ENSAE \\
\texttt{nadav.merlis@ensae.fr} \\
\AND
Nir Weinberger \\
Technion \\
\texttt{nirwein@technion.ac.il} \\
\And 
Shie Mannor \\
Technion, NVIDIA Research \\
\texttt{shie@ee.technion.ac.il}
}
\begin{document}

\maketitle

\begin{abstract}
 In many sequential decision problems, an agent performs a repeated task. He then suffers regret and obtains information that he may use in the following rounds. However, sometimes the agent may also obtain information and avoid suffering regret by querying external sources. We study the trade-off between the information an agent accumulates and the regret it suffers. We invoke information-theoretic methods for obtaining regret lower bounds, that also allow us to easily re-derive several known lower bounds. We introduce the first Bayesian regret lower bounds that depend on the information an agent accumulates. We also prove regret upper bounds using the amount of information the agent accumulates. These bounds show that information measured in {\em bits}, can be traded off for regret, measured in reward. Finally, we demonstrate the utility of these bounds in improving the performance of a question-answering task with large language models, allowing us to obtain valuable insights.
\end{abstract}

\section{Introduction}
\label{sec:intro}
In interactive decision-making problems, an agent repeatedly interacts with a task by sequentially choosing decisions from a decision space. Subsequently, the agent receives feedback that usually includes a reward and, optionally, other types of information. For example, the feedback includes only the reward in multi-armed bandit (MAB) problems \citep{lattimore2020bandit}. In partial monitoring, however, it only includes a signal, which the agent then utilizes to indirectly infer the reward \citep{cesa2006regretpartial}. The goal of the agent in such tasks is to minimize the gap between the accumulated reward and the reward of the best decision in hindsight, also known as regret. 

In these tasks, agents can accumulate information from multiple sources, that can be categorized into two main types. One is information acquired through direct interactions and receiving feedback from the task. For example, in reinforcement learning (RL) such information includes visited states, accumulated rewards, and performed actions. The other type is information provided by external sources, such as human advice \citep{najar2021reinforcement}, text description generated by large language models \citep{du2023guidingllm, hu2023languagellm}, and more.

{{In some tasks both information from direct interactions and external information are present. One example is RL with LLM \citep{du2023guidingllm}, where an agent interacts with an online environment and is provided with advice from an LLM. The agent can interact with the environment as well as ask the LLM for advice.}}

Consider two agents playing on the same online task. Both agents have access to the same observations, but the first agent has already played multiple rounds in the task. Thus, he already suffered regret and gained an advantage over the second agent. Before he starts interacting with the task, the second agent can use external knowledge that will allow him to reach the same performance level as the first agent. In this paper, we focus on how much information the second agent needs to query so he will "catch up" with the first agent without suffering regret. In other words, we focus on the following question:
\begin{center}
    \textit{What is the exact relationship between information bits and regret?}
\end{center}
\begin{figure}
    \centering
    \includegraphics[width=\textwidth]{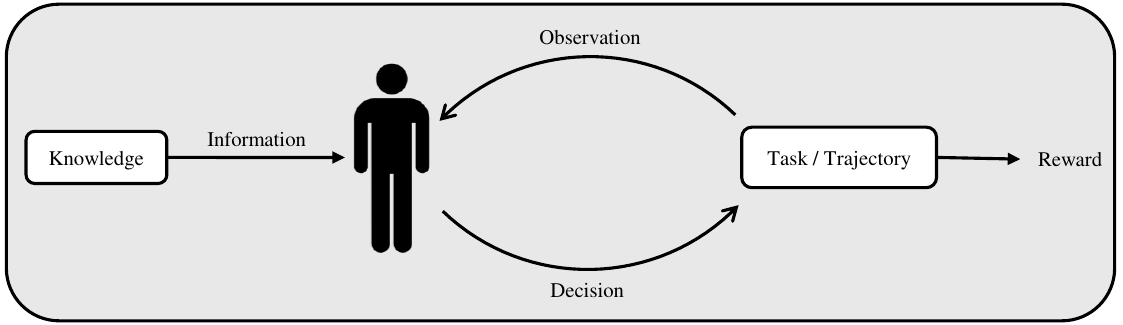}
    \caption{Schematic of a general interactive decision-making task with contextual information. Every round, a source of knowledge provides our agent with information. The agent then makes a decision, that causes the task to generate an observation and a reward. The observation is revealed to the agent, who updates his next decision according to the past observations and the information he received.}
    \label{fig:figure_intro}
\end{figure}
\subsection{Contributions}
\paragraph*{New method for obtaining regret lower bounds.} {We consider an interactive decision-making framework that describes general online tasks with contextual information in a Bayesian setting.} We then introduce a novel information-theoretic method to obtain regret lower bounds for decision-making problems in this setting. This method uses Fano's inequality \citep{cover1999elements} and the construction method introduced by \cite{yang1999information} to lower bound the worst-case Bayesian regret. In Theorem \ref{thm:covering_lowerbound} we introduce the general lower bound, and Table \ref{tab:lower_bounds} shows its application on MAB, contextual bandit, and RL tasks.

\paragraph*{Information-theoretic prior-dependent bounds.} {The total amount of information an agent gathers can be quantified to bits by the mutual information functional. Proposition \ref{prop:bayes_bits_bound} presents} Bayesian regret lower bounds that depend on the amount of information an agent accumulates. Additionally, {Proposition \ref{prop:bits_ub}} upper bounds the regret for Thompson sampling \citep{thompson1933likelihood} in an MAB environment, which depends on the information the agent collects. We also present lower bounds for scenarios where the entropy of the prior is constrained in {Proposition \ref{prop:entropy_bits_mab}}.

\paragraph*{Regret-information trade-off.} We show that the presented lower bounds quantify the relationship between external information and regret. Furthermore, these bounds quantify the relationship between the rate of information accumulation and regret in online tasks. These relationships allow us to measure how much regret can be avoided by looking at the information that can be accumulated. We then demonstrate how to easily utilize this insight in an online question-answering task with large language models.

The rest of the paper is organized as follows: In Section \ref{sec:setting} we present our setting and relevant preliminaries. In Section \ref{sec:fano_bounds} we present a novel information-theoretic method for obtaining regret lower bounds. In Section \ref{sec:bayes_lb} we present regret bounds that depend on the information accumulated by the agent, followed by experiments in Section \ref{sec:experiments}. In Section \ref{sec:related} we review related work, and draw conclusions in Section \ref{sec:discussion}.
\section{Setting and Preliminaries} 
\label{sec:setting}
We introduce a new setting called interactive decision-making in Bayesian environments with contextual information, which is illustrated in Figure \ref{fig:figure_intro}. This setting is a Bayesian adaptation of the frequentist interactive decision-making setting introduced by \cite{foster2021statistical}. We denote the set of Borel probability measures over a locally compact space by $\Delta(\cdot)$. Let $\Pi$ be a compact decision space, $\mathcal{M}$ be a compact model space and $\mathcal{C}$ be a compact context space. Every model $M \in \mathcal{M}$ is a mapping from the decision and context space to an observation space $\mathcal{O}$, i.e., $M : \Pi \times \mathcal{C} \to \mathcal{O}$. Every model $M$ has a reward function, $R_M : \Pi \times \mathcal{C} \to \Delta([0,1])$ that maps every decision-context pair to a reward distribution over $[0,1]$. A task in this setting is defined by the decision space $\Pi$, the model class $\mathcal{M}$, the context space $\mathcal{C}$, the reward functions $R_M$, and the prior $P \in \Delta(\mathcal{M} \times \mathcal{C})$. We denote by $\mu_M(\pi, C) = \mathbb{E}_{x \sim R_M(\pi, C)} [x]$ the mean reward of decision $\pi$ under model $M$ and context $C$. With a slight abuse of notation, we also denote by $\mu_M(\phi, C) = \mathbb{E}_{\pi \sim \phi} [\mu_M(\pi, C)]$, the mean reward of a {stochastic decision} $\phi \in \Delta(\Pi)$ under model $M$ and context $C$.

In a $T$-round game, a random model $M$ is sampled before the game starts according to the prior $P$. Then a context $C_t$ is sampled every round, independently from previous rounds, according to the marginal prior probability $P(\cdot \mid M)$. Before each round starts, the context $C_t$ is revealed to the agent, who determines his {stochastic decision} $p_t : \mathcal{C} \to \Delta(\Pi)$ according to the context and the observed history. Then, a decision $\pi_t \sim p_t$ is sampled and an observation $O_t \sim M(\pi_t, C_t)$ is revealed to the agent. A reward, $r_t \sim R_{M}(\pi_t, C_t)$ is also generated. We denote the history up to time $t$ as $\mathcal{H}_t = \{C_i, \pi_i, O_i\}_{i=1}^t$. {We denote $\pi^*_{M,C} = \arg \max_{\pi \in \Pi} \mu_M(\pi, C)$.} We measure the performance with respect to the best context-dependent decision in hindsight. This performance metric is called the Bayesian regret and is defined by
\begin{equation}
    \mathcal{BR}_P (T ; \{p_t\}_{t=1}^T) = \mathbb{E} \left[\sum_{t=1}^T \left(\mu_{M}({\pi^*_{M,C_t}}, C_t) - r_t\right)\right]
\end{equation}
where the expectation is taken with respect to the randomness in decisions, contexts, observations, and rewards ($(M,C_t) \sim P$, $\pi_t \sim p_t$ and $r_t \sim R_{M}(\pi_t, C_t)$). We also note that by fixing a single context, $\abs{\mathcal{C}} = 1$, the setting reduces to regular Bayesian interactive decision-making. When obvious from the context, $P$ and $\{p_t\}_{t=1}^T$ are omitted. We next show how our general framework covers common decision-making tasks with contextual information.
\begin{example}[Contextual MAB with Bernoulli rewards]
    \normalfont
    The contextual MAB problem is defined by the rewards of each arm, which are, say, Bernoulli random variables.  For simplicity, we assume a finite set of decisions, which are called arms in this setting, $\Pi = \{1,2,\dots,K\}$. Furthermore, we assume that the context space is finite as well, $\mathcal{C} = \{1,2,\dots,C\}$. The model class is then given by $\mathcal{M} = \{f | f:\Pi \times \mathcal{C} \to [0,1]\}$, so every model maps $\left(\pi,C\right)$ to $\mathrm{Bern}(f(\pi,C))$. At each round $t$, a context is sampled and revealed. A decision $\pi_t$ is selected and the observation is the incurred reward, $r_t = O_t \sim \mathrm{Bern}(f_t(\pi_t,C_t))$. {We note that the Bernoulli rewards were selected to maintain a simple example, and that our setting also covers general reward distributions.}
\end{example}
\begin{example}[Tabular reinforcement learning with a finite horizon] 
    \normalfont
      The finite-horizon tabular MDP  is defined by the tuple $(\mathcal{S},\mathcal{A},\mathbb{P},\mathbb{R},H)$ (which are the state space, action space, transition kernel, reward function and horizon, accordingly).  We assume the episodic and stationary setting, where we update the policy only at the beginning of every episode. Additionally, we assume that the reward of every episode is bounded in $[0,1]$. In this setting, the decision space is all of the deterministic policies, so $\Pi = \{ \pi : \mathcal{S} \times \mathcal{H} \to \mathcal{A}\}$. The model class $\mathcal{M}$ is defined by the set of all MDPs with the same state space, action set, and horizon. At every time step, the agent determines a policy and plays it over the entire horizon. The observations are the trajectories, which include the visited states, rewards, and actions performed.
\end{example}

\paragraph*{Covering and packing.} Consider a normed space $A$ with metric $d$ induced by the norm, and let $\epsilon > 0$.  We denote by $\mathcal{N}(d,A,\epsilon)$ and $\mathcal{M}(d,A,\epsilon)$ the $\epsilon$-covering and $\epsilon$-packing numbers of $A$ under the norm $d$, respectively. We define an {$\epsilon$-ball} of $b \in A$ as $B(b,\epsilon) = \{a \in A : d(b,a) \leq \epsilon\}$, for every $b\in A$ and $\epsilon > 0$. The local packing number, denoted by $\mathcal{M}^{\mathrm{loc}}\left(d,A,\epsilon\right)$, is the largest $(\epsilon/2)$-packing set of any set $B(b,\epsilon)$, i.e.,
\begin{align*}
    \mathcal{M}^{\mathrm{loc}}\left(d,A,\epsilon\right) = \max \left\{M : \text{there is } b \text{ such that the } (\epsilon/2)\text{-packing number of } B(b,\epsilon) \text{ is } M\right\}.
\end{align*}
When $d(x,y) = \norm{x-y}_p$ we denote by $\mathcal{M}_p(A,\epsilon), \mathcal{N}_p(A,\epsilon)$ and $\mathcal{M}^{\mathrm{loc}}_p\left(A,\epsilon\right)$ the $\epsilon$-packing, $\epsilon$-covering and $\epsilon$-local packing number, respectively. Detailed definitions and properties of the covering, packing and local packing numbers are provided in Appendix \ref{app:covering}.
\paragraph*{Additional notations.} We denote by $H(X) = \mathbb{E}\left[-\log Q(X)\right]$ the Shannon entropy of a random variable $X \sim Q$, and with a slight abuse of notation, we also denote $H(Q) = H(X)$. Given two jointly distributed random variables, $(X,Y) \sim Q$, we denote their marginal distributions by $Q(x)$ and $Q(y)$. We also define the mutual information of two jointly distributed random variables $(X,Y) \sim Q$ as $I(X;Y) = \mathbb{E}\left[\log\left(\frac{Q(X,Y)}{Q(X)Q(Y)}\right)\right]$. The KL-divergence is defined as $\KL{P}{Q} = \mathbb{E}_{X \sim P} \left[\log\frac{P(X)}{Q(X)}\right]$ \citep{cover1999elements}. Unless stated otherwise, logarithms are assumed to be in base-2, denoted by $\log$. We also denote $x \gtrsim y$ if there is some global constant $c > 0$ such that $x \geq cy$. We define by $\mathbb{B}_p^d = \{x \in \mathbb{R}^d : \norm{x}_p \leq 1\}$ the unit sphere in $\mathbb{R}^d$ under the $L_p$ norm.
\begin{table}[t]
  \caption{Lower bounds for online problems, recovered using methods presented in the paper.}
  \label{tab:lower_bounds}
  \centering
  \begin{tabular}{lll}
    \toprule
    Problem     & Retrieved lower bound     & Optimal? \\
    \midrule
    MAB & $\Omega(\sqrt{KT})$ & Yes \citep{audibert2009minimax}     \\
    Tabular RL & $\Omega(\sqrt{HSKT})$ & Yes \citep{zhang2021rlbound} \\
    Linear bandits & $\Omega(\sqrt{dT})$ & No  \\
    Lipschitz bandit & $\Omega\left(T^{\frac{d+1}{d+2}}\right)$ & Yes \citep{kleinberg2019banditsbound}\\
    \textbf{Bayesian MAB ($R$ bits)} & ${\Omega\left(\sqrt{K T \frac{\log K}{R}}\right)}$ & First bound \\
    \textbf{Bayesian linear MAB ($R$ bits)} & $\Omega\left(\sqrt{dT\frac{\log K}{R}}\right)$ & First bound \\
    \textbf{Bayesian MAB} & $\Omega\left(\sqrt{KT\frac{H(\pi^*)}{\log K}}\right)$ & First bound\\
    \bottomrule
  \end{tabular}
\end{table}

\section{Regret Lower Bounds Using Fano's Inequality}
\label{sec:fano_bounds}

We now introduce a novel approach for obtaining worst-case Bayesian regret lower bounds, which can be applied to a wide variety of interactive decision-making tasks. This method can be viewed as an extension of Fano's inequality for non-exact recovery \citep{scarlett2019introductoryfano} to online problems. We define worst-case Bayesian regret in the following manner,
\begin{equation}
    \sup_{\nu\in\Delta(\mathcal{M} \times \mathcal{C})} \inf_{\{p_t\}_{t=1}^T} \mathcal{BR}_{\nu}(T;\{p_t\}_{t=1}^T) \equiv \mathcal{BR}^*(T).
\end{equation}
 To lower bound the worst-case Bayesian regret we use Fano's inequality.
\begin{theorem}[Fano's inequality, Theorem 2.10 of \cite{cover1999elements}]
\label{thm:fano}
    Let $X,Y \sim Q$ be two jointly distributed random variables, where $X$ can take values over a finite set, whose cardinality is $\mathcal{X}$. Let $\hat{X} = f(Y)$ for some $f$ be an estimator of $X$. If $\hat{X}$ is uniformly distributed over all possible values in $\mathcal{X}$, then the following holds for all $f$,
    \begin{equation}
        \mathbb{P}(X \neq \hat{X}) \geq 1 - \frac{I(X;Y) + 1}{\log \mathcal{X}}.
    \end{equation}
\end{theorem}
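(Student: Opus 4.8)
The plan is to establish Theorem~\ref{thm:fano} by the classical two-step argument behind Fano's inequality: an entropy decomposition via the chain rule, followed by the data-processing inequality. First I would introduce the error indicator $E = \mathbf{1}\{X \neq \hat{X}\}$ and write $P_e = \mathbb{P}(X \neq \hat{X})$. Since $E$ is a deterministic function of the pair $(X,\hat{X})$, expanding $H(E,X \mid \hat{X})$ in two ways with the chain rule gives
\[
H(X \mid \hat{X}) + H(E \mid X,\hat{X}) = H(E \mid \hat{X}) + H(X \mid E,\hat{X}),
\]
where $H(E \mid X,\hat{X}) = 0$ because $E$ is determined by $(X,\hat{X})$.

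Next I would bound the right-hand side term by term. As $E$ is binary, $H(E \mid \hat{X}) \leq H(E) \leq \log 2 = 1$. Conditioning on the value of $E$,
\[
H(X \mid E,\hat{X}) = (1-P_e)\, H(X \mid \hat{X}, E=0) + P_e\, H(X \mid \hat{X}, E=1) \leq 0 + P_e \log \mathcal{X},
\]
since on the event $\{E=0\}$ we have $X = \hat{X}$ (zero conditional entropy), while on $\{E=1\}$ the variable $X$ ranges over a set of at most $\mathcal{X}$ values. Combining the two displays yields $H(X \mid \hat{X}) \leq 1 + P_e \log \mathcal{X}$.

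Finally I would use that $\hat{X} = f(Y)$: the data-processing inequality gives $I(X;\hat{X}) \leq I(X;Y)$, hence $H(X \mid \hat{X}) = H(X) - I(X;\hat{X}) \geq H(X) - I(X;Y)$. Under the stated uniformity hypothesis $H(X) = \log \mathcal{X}$, so $\log \mathcal{X} - I(X;Y) \leq 1 + P_e \log \mathcal{X}$, and rearranging gives $P_e \geq 1 - \frac{I(X;Y)+1}{\log \mathcal{X}}$, as claimed. Since this is a textbook result reproduced from \cite{cover1999elements}, I do not expect a genuine obstacle; the only points deserving care are (i) that the uniformity assumption is precisely what converts $H(X)$ into $\log \mathcal{X}$ (it should be read as uniformity of the estimand $X$, not of $\hat X$), and (ii) that one could sharpen the denominator's constant, replacing $\log \mathcal{X}$ by $\log(\mathcal{X}-1)$, by observing that only $\mathcal{X}-1$ values of $X$ survive on the error event $\{E=1\}$.
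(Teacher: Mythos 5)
Your proof is correct and is precisely the classical chain-rule-plus-data-processing argument from Cover and Thomas that the paper cites without reproducing (the paper offers no proof of this theorem, only the reference), so there is nothing to compare beyond noting the match. Your parenthetical remark is also well taken: the uniformity hypothesis must be read as uniformity of the estimand $X$, not of $\hat{X}$, since that is exactly what turns $H(X)$ into $\log \mathcal{X}$ in the final step; as written in the theorem statement the hypothesis is slightly misplaced.
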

While Theorem \ref{thm:fano} is used to lower-bound the error in hypothesis testing problems, we now show how it can also be applied to regret minimization. We assume access to some set $\Phi = \{\phi_1,\dots,\phi_K\}$ of (possibly stochastic) {decisions}. We also assume that a metric $\rho$ exists such that $\mathbb{E} [\abs{\mu_M(\psi,C)-\mu_M(\phi, C)}] \geq \rho(\phi,\psi)$ for all $\psi,\phi \in \Delta(\Pi)$. The following proposition applies Theorem \ref{thm:fano} to regret minimization, by reducing it to a hypothesis testing over the finite set $\Phi$.
\begin{proposition} \label{prop:fano_regret}
    For any algorithm, prior $P$, a decision set $\Phi = \{\phi_1,\dots,\phi_K\} \subseteq \Delta(\Pi)$ and for all $\epsilon > 0$ such that $\rho(\phi_i,\phi_j) \geq \epsilon$ for all $i \neq j$,
    \begin{align} \label{eq:fano_regret}
        \frac{\mathcal{BR}_P(T)}{T\epsilon} \geq \frac12\left[1 - \frac{I(V;\mathcal{H}_T) + 1}{\log K}\right]
    \end{align}
    where $V$ is the index of the best decision in hindsight from the set $\Phi$.
\end{proposition}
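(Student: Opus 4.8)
The plan is to reduce regret minimization to a multiple hypothesis test over $\{1,\dots,K\}$, with $V$ as the hidden hypothesis, and then invoke Fano's inequality (Theorem~\ref{thm:fano}). I would first integrate out the reward noise to write $\mathcal{BR}_P(T)=\sum_{t=1}^T\mathbb{E}\bigl[\mu_M(\pi^*_{M,C_t},C_t)-\mu_M(p_t,C_t)\bigr]$. Then, for each round $t$, I would read off an estimator of $V$ from the agent's own play: let $\hat V_t:=\argmin_{i\in[K]}\rho(\phi_i,p_t)$ be the index of the member of $\Phi$ closest (in $\rho$) to the decision $p_t$. Since $p_t$ is chosen from $\mathcal{H}_{t-1}$ and $C_t$, the estimator $\hat V_t$ is a function of the history $\mathcal{H}_T$, so the data-processing inequality gives $I(V;\hat V_t)\le I(V;\mathcal{H}_T)$ for every $t$.

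The crux is a per-round inequality lower bounding expected instantaneous regret by the probability of misidentifying $V$:
\begin{equation*}
\mathbb{E}\bigl[\mu_M(\pi^*_{M,C_t},C_t)-\mu_M(p_t,C_t)\bigr]\ \ge\ \frac{\epsilon}{2}\,\mathbb{P}(\hat V_t\neq V).
\end{equation*}
To see this, note that on the event $\{\hat V_t\neq V\}$ the triangle inequality for $\rho$ and the minimality of $\hat V_t$ force $\rho(\phi_V,p_t)\ge\tfrac12\rho(\phi_V,\phi_{\hat V_t})\ge\tfrac{\epsilon}{2}$; the reward-gap assumption $\mathbb{E}\abs{\mu_M(\psi,C)-\mu_M(\phi,C)}\ge\rho(\phi,\psi)$ then converts this metric separation into an expected reward gap against $\phi_V$, and since $\phi_V$ is the best decision in hindsight from $\Phi$ — so that in the regime the proposition targets $\mu_M(\phi_V,\cdot)=\mu_M(\pi^*_{M,\cdot},\cdot)$, making $\mu_M(\pi^*_{M,C_t},C_t)-\mu_M(p_t,C_t)=\abs{\mu_M(\phi_V,C_t)-\mu_M(p_t,C_t)}\ge 0$ — this expected reward gap is exactly the expected instantaneous regret. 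Summing over $t=1,\dots,T$ gives $\mathcal{BR}_P(T)\ge\frac{\epsilon}{2}\sum_{t=1}^T\mathbb{P}(\hat V_t\neq V)$.

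Finally, I would apply Theorem~\ref{thm:fano} to $V$ and each estimator $\hat V_t$ (taking the prior so that $V$ is uniform over $[K]$, as the packing construction accompanying this proposition does) to get $\mathbb{P}(\hat V_t\neq V)\ge 1-\frac{I(V;\hat V_t)+1}{\log K}\ge 1-\frac{I(V;\mathcal{H}_T)+1}{\log K}$, using the data-processing bound from the first paragraph; plugging this into the summed inequality and dividing by $T\epsilon$ yields \eqref{eq:fano_regret}. The main obstacle is the per-round comparison in the second paragraph: one must make the closest-point/triangle argument rigorous for the (pseudo)metric $\rho$ on $\Delta(\Pi)$, pin down the conditioning under which the reward-gap inequality is applied (freezing $p_t$ and leaving only $(M,C_t)$ random, so that the assumption is used for fixed decisions), and justify the replacement of $\mu_M(\pi^*_{M,C_t},C_t)$ by $\mu_M(\phi_V,C_t)$ — precisely where $\Phi$ being a packing of (near-)optimal decisions is essential. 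Once that lemma is in place, the reduction, data-processing, and Fano steps are routine; an equivalent route in the context-free case is to compare the agent's time-averaged decision $\bar p=\frac1T\sum_t p_t$ to $\Phi$ and apply Fano once.
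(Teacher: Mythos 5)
Your proposal is correct at the same level of rigor as the paper's own argument and rests on the same core idea—reduce regret minimization to identifying the best element of the $\epsilon$-separated set $\Phi$, convert the $\epsilon/2$-separation under $\rho$ into a reward gap, and finish with Fano's inequality (Theorem \ref{thm:fano}) plus data processing through $\mathcal{H}_T$—but you organize the reduction differently. The paper first applies Markov's inequality to the cumulative regret, passes to the event that the instantaneous gap exceeds $\epsilon/2$ at every round, and then uses the agent's own decision as the estimate of $\phi_V$, bounding $\mathbb{P}(p_t \neq \phi_k)$ by Fano; you instead build the per-round closest-point estimator $\hat V_t = \argmin_i \rho(\phi_i,p_t)$, prove the per-round inequality $\mathbb{E}[\mu_M(\pi^*_{M,C_t},C_t)-\mu_M(p_t,C_t)] \geq \tfrac{\epsilon}{2}\,\mathbb{P}(\hat V_t \neq V)$, and sum over $t$. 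Your route is the more standard Yang--Barron/Tsybakov-style argument, avoids the paper's ``for all $t$'' event (whose union-bound justification is loose), and produces the same constant $\tfrac{\epsilon T}{2}\bigl(1-\tfrac{I(V;\mathcal{H}_T)+1}{\log K}\bigr)$. The two obstacles you flag are genuine but are exactly the points the paper's proof also glosses over: (i) replacing $\mu_M(\pi^*_{M,C_t},C_t)$ by $\mu_M(\phi_V,C_t)$ corresponds to the paper's step $\sup_\nu(\cdots)\geq\max_k(\cdots)$, which implicitly restricts to priors under which the elements of $\Phi$ are optimal—indeed the paper actually bounds the worst-case regret $\mathcal{BR}^*(T)$ rather than $\mathcal{BR}_P(T)$ for an arbitrary $P$, and this is how the proposition is consumed in Theorem \ref{thm:covering_lowerbound} via a (local) packing with $V$ uniform, which is also what your invocation of Theorem \ref{thm:fano} requires; and (ii) your conditional use of the expectation-level assumption $\mathbb{E}\abs{\mu_M(\psi,C)-\mu_M(\phi,C)}\geq\rho(\psi,\phi)$ on the event $\{\hat V_t\neq V\}$ mirrors the paper's passage from $\{\rho(\phi_k,\pi_t)\geq\epsilon/2\}$ to $\{\abs{\mu_M(\phi_k,C_t)-\mu_M(p_t,C_t)}\geq\epsilon/2\}$, which is valid only if the domination by $\rho$ holds pointwise (or conditionally on the history), not merely on average under the prior. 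So your proof goes through under exactly the same implicit strengthenings the paper uses; making either argument airtight would require stating the $\rho$-domination pointwise/conditionally and stating the proposition for the packing-supported prior with (near-)uniform $V$.
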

The proof of Proposition \ref{prop:fano_regret} is provided in Appendix \ref{sub_sec:fano_regret}. Proposition \ref{prop:fano_regret} allows us to lower-bound the Bayesian regret using a specific set of decisions. We suggest a method for selecting the decision set $\Phi$, which provides a general approach for obtaining regret lower bounds. We separate our analysis into finite, parametric with infinite decisions, and non-parametric decision spaces. This separation is done to make the lower bound in Equation \ref{eq:fano_regret} tighter. We observe that the lower bound is tighter for larger values of $K$. Thus, we should select the largest value of $K$ that satisfies the requirement $\rho(\phi_i,\phi_j) \geq \epsilon$ for a given $\epsilon > 0$. For finite decision spaces, we choose $\Phi$ to be an $\epsilon$-local packing set of the decision simplex, $\Delta(\Pi)$, according to $\rho$, under $P$. So, we have $K = {\mathcal{M}}^{\mathrm{loc}}(\rho, \Delta(\Pi),\epsilon)$. Similarly, for parametric decision spaces with infinite decisions, we choose $\Phi$ to be an $\epsilon$-local packing set of the decision set, $\Pi$, i.e., $K = {\mathcal{M}}^{\mathrm{loc}}(\rho,\Pi,\epsilon)$. For non-parametric decision spaces, we use an $\epsilon$-global packing set of our decision space $\Pi$, which means that $K=\mathcal{M}(\rho,\Pi,\epsilon)$.
After selecting $\Phi$, we need to handle $I(V;\mathcal{H}_T)$. To do this, we make the following assumption about the {decisions}. 
\begin{assumption}
\label{asmp:constant_bound}
    There exist constants $\bar{A}, \epsilon_0 > 0$ such that for all {stochastic decisions} $p_1,p_2$ such that $\rho(p_1,p_2) \leq \epsilon_0$, $\KL{p_1}{p_2} \leq 2\bar{A} \rho(p_1,p_2)^2$.
\end{assumption}
{This assumption means that for all stochastic decisions in an $\epsilon_0$-ball around $p_1$, the KL divergence can be upper bounded using $\rho$, and is commonly made, for example, by \cite{yang1999information}.} The following theorem upper bounds $I(V;\mathcal{H}_T)$.
\begin{theorem}[\cite{yang1999information}]
    \label{thm:yang}
    Let Assumption \ref{asmp:constant_bound} hold. Under the assumptions and notations of Proposition \ref{prop:fano_regret},
    if $\Pi$ is a parametric decision space then
    \begin{equation}
        I(V ; \mathcal{H}_T) \leq 2\bar{A}\epsilon^2 T
    \end{equation} for $\epsilon \leq \epsilon_0$. If $\Pi$ is a non-parametric decision space then
    \begin{equation}
        I(V ; \mathcal{H}_T) \leq \inf_{\delta > 0} \left(\log \mathcal{N}\left(\rho,\Pi, \sqrt{\delta}\right) + T\delta\right).
    \end{equation}
\end{theorem}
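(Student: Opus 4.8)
The plan is to reduce both statements to an upper bound on $I(M;\mathcal{H}_T)$, where $M$ is the random model drawn from the adversarial prior that instantiates Proposition \ref{prop:fano_regret}. That prior is supported on a finite family of models built around the decision set $\Phi$ so that $\phi_i$ is the unique optimal decision under the $i$-th model; consequently $V$ is a deterministic function of $M$, the chain $V - M - \mathcal{H}_T$ is Markov, and the data-processing inequality gives $I(V;\mathcal{H}_T)\le I(M;\mathcal{H}_T)$. The workhorse is the chain rule for the KL divergence of histories: for models $M,M'$,
\[
\KL{P_{\mathcal{H}_T\mid M}}{P_{\mathcal{H}_T\mid M'}}=\sum_{t=1}^T\mathbb{E}\big[\KL{P_{(C_t,\pi_t,O_t)\mid\mathcal{H}_{t-1},M}}{P_{(C_t,\pi_t,O_t)\mid\mathcal{H}_{t-1},M'}}\big],
\]
with the outer expectation under $M$. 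Because the agent's randomized, history-dependent policy is shared across the two branches, the decision term vanishes, and if the construction uses a model-independent context marginal the context term vanishes too, leaving only $\mathbb{E}\big[\KL{M(\pi_t,C_t)}{M'(\pi_t,C_t)}\big]$ at each step.

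For the parametric case, take $\Phi$ to be the $\epsilon$-local packing of size $K$ and the prior uniform over the associated models $M_1,\dots,M_K$, so the marginal history law is $\tfrac1K\sum_j P_{\mathcal{H}_T\mid M_j}$. Convexity of $Q\mapsto\KL{P}{Q}$ then yields $I(M;\mathcal{H}_T)=\tfrac1K\sum_i\KL{P_{\mathcal{H}_T\mid M_i}}{\tfrac1K\sum_j P_{\mathcal{H}_T\mid M_j}}\le\tfrac1{K^2}\sum_{i,j}\KL{P_{\mathcal{H}_T\mid M_i}}{P_{\mathcal{H}_T\mid M_j}}$. By the chain rule each summand is $\sum_{t=1}^T\mathbb{E}[\KL{M_i(\pi_t,C_t)}{M_j(\pi_t,C_t)}]$; since the models sit at $\rho$-distance at most $\epsilon\le\epsilon_0$, Assumption \ref{asmp:constant_bound} bounds each per-step term by $2\bar{A}\epsilon^2$, hence each pairwise divergence by $2\bar{A}\epsilon^2 T$, and therefore $I(M;\mathcal{H}_T)\le 2\bar{A}\epsilon^2 T$.

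For the non-parametric case, keep the prior supported on models indexed by an $\epsilon$-packing of $\Pi$, but compare against a better reference distribution than the true mixture: for any law $Q$ over histories, $I(M;\mathcal{H}_T)\le\mathbb{E}_M\KL{P_{\mathcal{H}_T\mid M}}{Q}$, since the marginal $P_{\mathcal{H}_T}$ minimizes $Q\mapsto\mathbb{E}_M\KL{P_{\mathcal{H}_T\mid M}}{Q}$. Fix $\delta>0$, let $\{N_k\}_{k\le\mathcal{N}}$ be a $\sqrt\delta$-cover of $\Pi$ under $\rho$ with $\mathcal{N}=\mathcal{N}(\rho,\Pi,\sqrt\delta)$, and take $Q=\tfrac1{\mathcal{N}}\sum_k P_{\mathcal{H}_T\mid N_k}$. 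Keeping only the nearest cover point $N_{k(M)}$ gives $\KL{P_{\mathcal{H}_T\mid M}}{Q}\le\log\mathcal{N}(\rho,\Pi,\sqrt\delta)+\KL{P_{\mathcal{H}_T\mid M}}{P_{\mathcal{H}_T\mid N_{k(M)}}}$, and the chain rule plus Assumption \ref{asmp:constant_bound} (now with $\rho\le\sqrt\delta$, so the per-step divergence is $\lesssim\delta$) controls the last term by $O(T\delta)$. As $\delta>0$ was arbitrary, taking the infimum gives the stated bound.

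The main obstacle is not the information-theoretic manipulation above — which is essentially the Yang--Barron argument — but the construction of the model family around the decision packing set, which must be arranged so that (i) each model lies in $\mathcal{M}$; (ii) $\phi_i$ is optimal under $M_i$, so $V=i$ on that event and the Fano reduction of Proposition \ref{prop:fano_regret} is legitimate; (iii) the context marginal is model-independent, so the chain rule leaves only observation terms; and (iv) the pairwise $\rho$-distances between decisions translate into control of the per-step observation KL at the correct scale, so Assumption \ref{asmp:constant_bound} may be invoked with $\epsilon$ (respectively $\sqrt\delta$). A secondary point is making the chain-rule decomposition rigorous for randomized, history-dependent policies on a general locally compact observation space, and tracking the universal constants (the factor $2\bar{A}$, and the factor-of-two slack between packing radius and pairwise distance inside a local packing) so that the inequalities hold exactly as written.
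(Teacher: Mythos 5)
Your proposal is correct and is essentially the paper's own route: the paper does not reprove this statement but imports it from Yang and Barron, and the only place it reproduces the argument --- the proof of Lemma \ref{lma:mi_constraint_bound} --- uses exactly your chain-rule-plus-convexity decomposition of $I(V;\mathcal{H}_T)$ into per-round KL terms bounded by the packing-set diameter via Assumption \ref{asmp:constant_bound}, and the non-parametric case is the standard mixture-over-a-$\sqrt{\delta}$-cover bound you describe. The one caveat you already flag yourself --- that pairwise distances in a local packing of $B(b,\epsilon)$ can be $2\epsilon$ rather than $\epsilon$, costing a constant factor against the stated $2\bar{A}\epsilon^2 T$ --- is an elision the paper makes as well, so it is not a gap relative to the source.
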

Now, substituting Theorem \ref{thm:yang} in Proposition \ref{prop:fano_regret}, and selecting $\Phi$ as we described above, results in the following Theorem.
\begin{theorem} \label{thm:covering_lowerbound}
    Let there be a Bayesian interactive decision-making problem as defined in Section \ref{sec:setting}, and let $\epsilon > 0$. If $\Pi$ is a finite decision space,
        \begin{equation} \label{eq:br_finite}
            \frac{\mathcal{BR}^*(T)}{T\epsilon} \geq \frac12\left[1 - \frac{2\epsilon^2 \bar{A}T + 1}{\log\mathcal{M}^{\mathrm{loc}}(\rho,\Delta(\Pi), \epsilon)}\right].
        \end{equation}
    If $\Pi$ is an infinite parametric decision space,
        \begin{equation} \label{eq:br_parametric}
        \frac{\mathcal{BR}^*(T)}{T\epsilon} \geq \frac12\left[1 - \frac{2\epsilon^2 \bar{A} T + 1}{\log\mathcal{M}^{\mathrm{loc}}(\rho,\Pi, \epsilon)}\right].
        \end{equation}
    If $\Pi$ is a non-parametric decision space,
    \begin{equation} \label{eq:br_nonparametric}
        \frac{\mathcal{BR}^*(T)}{T\epsilon} \geq \frac12\left[1 - \frac{\inf_{\delta > 0} \left(\log \mathcal{N}\left(\rho,\Pi, \sqrt{\delta}\right) + T\delta\right) + 1}{\log \mathcal{M}(\rho,\Pi, \epsilon)}\right].
    \end{equation}
\end{theorem}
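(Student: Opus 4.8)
The plan is to read Theorem~\ref{thm:covering_lowerbound} as a corollary of Proposition~\ref{prop:fano_regret} and Theorem~\ref{thm:yang}. Since $\mathcal{BR}^*(T) = \sup_{\nu} \inf_{\{p_t\}} \mathcal{BR}_{\nu}(T)$, it suffices to exhibit, for a given separation $\epsilon$, a single decision set $\Phi = \{\phi_1,\dots,\phi_K\}$ and a single prior $\nu$ under which the index $V$ of the best decision in hindsight is uniform on $\{1,\dots,K\}$ and the $\phi_i$ are mutually $\epsilon$-separated in $\rho$. Then Proposition~\ref{prop:fano_regret} lower bounds $\mathcal{BR}_{\nu}(T)$ for every algorithm, Theorem~\ref{thm:yang} controls $I(V;\mathcal{H}_T)$ uniformly over algorithms, and the three displayed inequalities drop out by substitution.

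First I would fix $\Phi$ according to the three regimes. In the finite case, take $\Phi$ to be an $\epsilon$-local packing of $\Delta(\Pi)$ under $\rho$, so that $K = \mathcal{M}^{\mathrm{loc}}(\rho,\Delta(\Pi),\epsilon)$ and, up to the constant implicit in the local-packing definition, the $\phi_i$ are $\epsilon$-separated; in the infinite parametric case, take an $\epsilon$-local packing of $\Pi$, $K = \mathcal{M}^{\mathrm{loc}}(\rho,\Pi,\epsilon)$; in the non-parametric case, take an $\epsilon$-global packing of $\Pi$, $K = \mathcal{M}(\rho,\Pi,\epsilon)$. The reason for the finite/parametric versus non-parametric split is exactly the two branches of Theorem~\ref{thm:yang}: in the first two regimes the packing lives inside a single $\epsilon$-ball, so all candidate models can be made mutually close and Assumption~\ref{asmp:constant_bound} applies (for $\epsilon \le \epsilon_0$), whereas in the non-parametric regime one instead pays the covering-number term. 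Next I would build the prior: put uniform mass on models $M_1,\dots,M_K \in \mathcal{M}$ (with a fixed context marginal) for which $\phi_i$ is optimal under $M_i$, using the standing comparison $\mathbb{E}[\abs{\mu_M(\psi,C)-\mu_M(\phi,C)}] \ge \rho(\phi,\psi)$ to guarantee both the $\epsilon$-separation of the induced mean rewards and, in the parametric regimes, that the models all lie in the prescribed $\epsilon$-ball so the KL bound of Assumption~\ref{asmp:constant_bound} is available. With this prior, $V = i$ exactly when $M = M_i$, hence $V$ is uniform, which is what Theorem~\ref{thm:fano} requires.

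The bound then assembles mechanically. Proposition~\ref{prop:fano_regret} gives
\begin{equation*}
    \frac{\mathcal{BR}^*(T)}{T\epsilon} \ge \frac{\mathcal{BR}_{\nu}(T)}{T\epsilon} \ge \frac12\left[1 - \frac{I(V;\mathcal{H}_T)+1}{\log K}\right],
\end{equation*}
and Theorem~\ref{thm:yang} supplies $I(V;\mathcal{H}_T) \le 2\bar{A}\epsilon^2 T$ in the finite and infinite parametric cases, and $I(V;\mathcal{H}_T) \le \inf_{\delta>0}\left(\log\mathcal{N}(\rho,\Pi,\sqrt{\delta}) + T\delta\right)$ in the non-parametric case. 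Substituting each estimate together with the matching value of $K$ yields \eqref{eq:br_finite}, \eqref{eq:br_parametric} and \eqref{eq:br_nonparametric}, respectively.

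I expect the main obstacle to be the construction step: one must realize the chosen packing set as the family of optimal decisions of a uniform prior over the abstract model class $\mathcal{M}$ while simultaneously keeping those models close enough (inside the $\epsilon$-ball, in the parametric regimes) for Assumption~\ref{asmp:constant_bound}, and hence Theorem~\ref{thm:yang}, to apply, and keeping the induced mean gaps at least $\rho$ so that confusing two elements of $\Phi$ genuinely costs $\Omega(\epsilon)$ regret per round. The remaining subtlety is purely bookkeeping — reconciling the radius appearing in the local-packing definition, the separation demanded by Proposition~\ref{prop:fano_regret}, and the constraint $\epsilon \le \epsilon_0$ — which affects only universal constants and the admissible range of $\epsilon$, not the stated form of the bounds.
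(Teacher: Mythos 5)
Your proposal matches the paper's proof essentially verbatim: both arguments instantiate Proposition~\ref{prop:fano_regret} with $\Phi$ chosen as a local packing of $\Delta(\Pi)$, a local packing of $\Pi$, or a global packing of $\Pi$ in the three regimes, then substitute the corresponding mutual-information bound from Theorem~\ref{thm:yang}. Your added remarks on realizing the packing as the optimal decisions of a uniform prior and on the $\epsilon$ versus $\epsilon/2$ separation constant are points the paper's own proof leaves implicit, and they affect only constants, not the structure of the argument.
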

The proof of Theorem \ref{thm:covering_lowerbound} is provided in Appendix \ref{sub_sec:covering_lowerbound}. To obtain regret lower bounds, all we need are the values of the packing or local packing, and covering numbers of the decision space. Then, we simply need to select a value of $\epsilon$ such that the right-hand side of Equations \ref{eq:br_finite}, \ref{eq:br_parametric} or \ref{eq:br_nonparametric} is a positive constant. In Appendix \ref{sub_sec:additional_proofs} we demonstrate this, by deriving regret lower bounds for several known online problems, which are summarized in Table \ref{tab:lower_bounds}. Moreover, this method can be used to obtain lower bounds for the frequentist regret, due to the mini-max theorem, which states that the worst-case Bayesian regret is equal to the mini-max frequentist regret \citep{lattimore2019informationpartial}.

\section{Information Theoretic Bayesian Regret {Upper and} Lower Bounds}
\label{sec:bayes_lb}
\subsection{Mutual Information Constraint}
\label{sub_sec:mi_constraint}
We now derive Bayesian regret bounds under a constraint on the total amount of information that the agent accumulates. Intuitively, in information-theoretic terms, mutual information quantifies the amount of information one random variable contains about another one \citep{cover1999elements}.  This means that mutual information can be utilized to measure the information that is aggregated from various sources using a common measure of bits. We are interested in lower bounding the regret, under a constraint on the amount of information the agent accumulates. Then, we ask the following question: What is the lower bound of the worst-case Bayesian regret, when the agent accumulates less than $R$ bits of information? We will use the method described in Section \ref{sec:fano_bounds} to derive regret lower bounds under this constraint. We note that in general $I(\pi^*;\mathcal{H}_T) = \KL{P^{\pi^* \mid \mathcal{H}_T}}{P^{\pi^*}}$. Thus, for general agents, $\KL{p_t}{P^{\pi^*}}$ is the information accumulated up to round $t$, in bits. The following proposition answers this question, for problems with a finite decision space.
\begin{proposition} \label{prop:bayes_bits_bound}
    Let $0 < R \leq \log K$ be given. Assume that the Bayesian decision problem has a finite decision space of size $K$, and that the agent accumulates no more than $R$ bits. Then the worst-case Bayesian regret can be lower bounded by $\Omega\left(\sqrt{TK\frac{\log K}{R}}\right)$. Additionally, if $\ \Pi \subset \mathbb{B}_2^d$ and the rewards are a linear function of the decisions, then the worst-case Bayesian regret can be lower bounded by $\Omega\left(\sqrt{Td\frac{\log K}{R}}\right)$. Furthermore, if $R=0$ the worst-case regret is linear.
\end{proposition}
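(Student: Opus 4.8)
The plan is to run the Fano/covering machinery of Section~\ref{sec:fano_bounds} on the classical worst-case families already used for Table~\ref{tab:lower_bounds}, and to replace the ``learning-rate'' term $2\bar{A}\epsilon^2T$ by the information budget $R$. For the finite case I would fix the prior over the standard family of $K$-armed instances indexed by the identity of the optimal arm (the one behind the $\Omega(\sqrt{KT})$ entry of Table~\ref{tab:lower_bounds}), with optimal-arm gap $\epsilon$. The point of this particular family is that the Fano index $V$ of Proposition~\ref{prop:fano_regret} coincides with $\pi^*$, and that it supplies a $\rho$-local packing of $\Delta(\Pi)$ with $\log \mathcal{M}^{\mathrm{loc}}(\rho,\Delta(\Pi),\epsilon)\gtrsim K$ on a range of $\epsilon$ extending up to a constant, with $\bar{A}=\Theta(1)$ in Assumption~\ref{asmp:constant_bound}. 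Following the proof of Theorem~\ref{thm:covering_lowerbound} verbatim up to the step where $I(V;\mathcal{H}_T)$ is bounded then gives
\[
  \frac{\mathcal{BR}^*(T)}{T\epsilon}\;\ge\;\frac12\Bigl(1-\frac{I(V;\mathcal{H}_T)+1}{\log \mathcal{M}^{\mathrm{loc}}(\rho,\Delta(\Pi),\epsilon)}\Bigr).
\]

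The new ingredient is the control of $I(V;\mathcal{H}_T)$. Since ``accumulates at most $R$ bits'' means $I(\pi^*;\mathcal{H}_T)=\KL{P^{\pi^*\mid\mathcal{H}_T}}{P^{\pi^*}}\le R$, and the family is chosen so that $V$ is a deterministic function of $\pi^*$, the data-processing inequality yields $I(V;\mathcal{H}_T)\le I(\pi^*;\mathcal{H}_T)\le R$. Combining with Yang's bound (Theorem~\ref{thm:yang}), $I(V;\mathcal{H}_T)\le \min\{2\bar{A}\epsilon^2T,\,R\}$, and one plugs this into the display above in place of $2\bar{A}\epsilon^2T$. For the $\epsilon$-optimization, note that in the unconstrained case one balances $2\bar{A}\epsilon^2T$ against $\log\mathcal{M}^{\mathrm{loc}}\gtrsim K$ and lands at $\epsilon\asymp\sqrt{K/T}$, recovering $\Omega(\sqrt{KT})$; here the numerator instead saturates at $R$, so one may push $\epsilon$ past that point. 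Taking $\epsilon\asymp\sqrt{K\log K/(RT)}$ — which is $\ge\sqrt{K/T}$ because $R\le\log K$, and which is $O(1)$ in the non-trivial regime $T\gtrsim K\log K/R$, so that $\log\mathcal{M}^{\mathrm{loc}}(\rho,\Delta(\Pi),\epsilon)\gtrsim K$ still holds — one checks $\min\{2\bar{A}\epsilon^2T,R\}=R$ and $R\le \tfrac12\log\mathcal{M}^{\mathrm{loc}}$, so the bracket is $\gtrsim 1$ and $\mathcal{BR}^*(T)\gtrsim T\epsilon\asymp\sqrt{KT\log K/R}$. For $R$ close to $\log K$ the conclusion follows a fortiori from the unconstrained $\Omega(\sqrt{KT})$ bound, and when $T\lesssim K\log K/R$ the claimed quantity exceeds $T$ so the statement is read in the $\min(T,\cdot)$ sense.

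The linear case is the same argument with $\Pi\subset\mathbb{B}_2^d$ and linear rewards: one uses the infinite-parametric branch~(\ref{eq:br_parametric}) of Theorem~\ref{thm:covering_lowerbound} instead, with $\log\mathcal{M}^{\mathrm{loc}}_2(\Pi,\epsilon)\gtrsim d$ replacing $\gtrsim K$, and (keeping a finite sub-family of $K$ ``planted'' directions so that $\pi^*$ still has entropy $\log K$) the identical balancing of $\epsilon$ gives $\Omega(\sqrt{dT\log K/R})$. Finally, the $R=0$ case is direct: $I(\pi^*;\mathcal{H}_T)=0$ forces $\pi^*\perp\mathcal{H}_T$, hence every decision $\pi_t$ — a function of $\mathcal{H}_{t-1}$ and independent randomness — is independent of the optimal arm; on the above family with a constant gap $\epsilon=\Theta(1)$ and uniform prior on $\pi^*$, $\mathbb{P}(\pi_t\neq\pi^*)=1-1/K$ for every $t$, so $\mathcal{BR}^*(T)\ge \epsilon(1-1/K)T=\Omega(T)$.

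I expect the main obstacle to be the interface between the abstract information budget and the mutual-information term that actually appears in Fano: one must design the hard family so that the hypothesis index $V$ is measurable with respect to $\pi^*$ (otherwise the budget only controls $I(M;\mathcal{H}_T)$, which is larger than $I(\pi^*;\mathcal{H}_T)$), and one must make the $\epsilon$-optimization rigorous — in particular verifying that the enlarged $\epsilon\asymp\sqrt{K\log K/(RT)}$ stays inside the range on which $\log\mathcal{M}^{\mathrm{loc}}(\rho,\Delta(\Pi),\epsilon)\gtrsim K$, since that range is exactly what pins down the $\log K/R$ factor. A small additional point, needed for the degenerate $R=0$ and small-$T$ regimes, is the monotonicity $I(\pi^*;\mathcal{H}_{t-1})\le I(\pi^*;\mathcal{H}_T)\le R$, which lets one turn the budget into a per-round guarantee on $\mathbb{P}(\pi_t\neq\pi^*)$.
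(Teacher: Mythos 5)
Your overall scaffolding (run Proposition~\ref{prop:fano_regret} on the standard hard family, keep the Fano denominator at $\log\mathcal{M}^{\mathrm{loc}}\gtrsim K-1$, and tighten only the bound on $I(V;\mathcal{H}_T)$ using the budget) matches the paper, and your $R=0$ argument is fine. But your central new ingredient --- $I(V;\mathcal{H}_T)\le I(\pi^*;\mathcal{H}_T)\le R$ by data processing, hence $I(V;\mathcal{H}_T)\le\min\{2\bar{A}\epsilon^2T,\,R\}$ --- does not survive the obstacle you yourself flagged. For the data-processing inequality you need $V$ to be a deterministic function of $\pi^*$; but then $V$ takes at most $K$ values and $H(V)\le H(\pi^*)\le\log K$, so Fano's denominator cannot legitimately be $\log|\Phi|\approx K-1$ (Theorem~\ref{thm:fano} needs the index to be uniform over the $|\Phi|$ hypotheses). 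Conversely, in the construction that actually justifies the denominator $K-1$ --- the Yang--Barron setup where the prior is uniform over the $2^{K-1}$ packing points of $\Delta(\Pi)$ --- the index $V$ is a function of the model $M$, not of $\pi^*$, and $I(V;\mathcal{H}_T)$ is not controlled by $I(\pi^*;\mathcal{H}_T)$. A sanity check exposes the problem: \emph{every} agent satisfies $I(\pi^*;\mathcal{H}_T)\le H(\pi^*)\le\log K$, so if your bound $I(V;\mathcal{H}_T)\le R$ were valid at $R=\log K$ with denominator $K-1$, the Fano bracket $1-\frac{\log K+1}{K-1}$ would be a positive constant independent of $\epsilon$ and $T$, and you could push $\epsilon$ to its geometric cap to conclude a worst-case regret linear in $T$ for the unconstrained $K$-armed bandit --- contradicting the known $\Theta(\sqrt{KT})$ minimax rate.

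The paper avoids this by never invoking data processing: Lemma~\ref{lma:mi_constraint_bound} decomposes $I(V;\mathcal{H}_T)$ by the chain rule into per-round KL terms over the packing set and converts the budget $I(\pi^*;\mathcal{H}_T)\le R$ into a \emph{multiplicative} rescaling of Yang's bound, yielding $I(V;\mathcal{H}_T)\le 2\bar{A}T\frac{R}{\log K}\epsilon^2$. Crucially this retains the $\epsilon^2 T$ dependence, so the subsequent optimization is forced to balance $2\bar{A}T\frac{R}{\log K}\epsilon^2$ against $K-1$, landing at $\epsilon\asymp\sqrt{K\log K/(RT)}$ and the stated $\Omega\bigl(\sqrt{TK\log K/R}\bigr)$ (and analogously with $d$ in place of $K$ via Equation~\ref{eq:br_parametric} for the linear case). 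Your proposal reaches the same numerical answer only because you happened to evaluate the bracket at that particular $\epsilon$; under your (invalid) $\epsilon$-independent bound on $I(V;\mathcal{H}_T)$ nothing pins $\epsilon$ there, which is the signature that the step is wrong rather than merely differently routed. To repair your argument you would need to replace the data-processing step with something like the paper's rescaling lemma, i.e.\ an inequality that distributes the $R$-bit budget across the $T$ rounds and across the $2^{K-1}$ hypotheses rather than applying it to $V$ wholesale.
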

The proof of Proposition \ref{prop:bayes_bits_bound} utilizes Lemma \ref{lma:mi_constraint_bound}, which, similarly to the proof of Theorem \ref{thm:yang}, introduces an upper bound for the mutual information, but this time taking the information constraint into account.
The rest of the proof follows a similar structure to the proof of Theorem \ref{thm:covering_lowerbound}, substituting the tighter bound into Equation \ref{eq:br_finite}. This yields the following proposition.
\begin{proposition} \label{prop:regret_fano_constraint}
    Let $R \leq \log K$ be given. Assume that the Bayesian decision problem has a finite decision space of size $K$, such that $\KL{p_t}{P^{\pi^*}}\leq R$ for all $t$. Then,
    \begin{equation} \label{eq:br_finite_constrained}
        \frac{\mathcal{BR}^*(T)}{T\epsilon} \geq \frac12\left[ 1 - \frac{2\epsilon^2 \bar{A} T \frac{R}{\log K} + 1}{\log\mathcal{M}^{\mathrm{loc}}(\rho,\Delta(\Pi), \epsilon)}\right]
    \end{equation}
    Furthermore, if the decision-making problem is a part of a parametric set, then
    \begin{equation}
        \frac{\mathcal{BR}^*(T)}{T\epsilon} \geq \frac12\left[1 - \frac{2\epsilon^2 \bar{A} T \frac{R}{\log K} + 1}{\log\mathcal{M}^{\mathrm{loc}}(\rho,\Pi, \epsilon)}\right]
    \end{equation}
\end{proposition}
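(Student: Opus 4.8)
This proposition is obtained from Theorem~\ref{thm:covering_lowerbound} by tightening a single estimate: comparing Equation~\ref{eq:br_finite_constrained} with Equation~\ref{eq:br_finite}, the only change is that the bound $I(V;\mathcal{H}_T)\le 2\epsilon^2\bar{A}T$ of Theorem~\ref{thm:yang} is replaced by the constrained bound $I(V;\mathcal{H}_T)\le 2\epsilon^2\bar{A}T\cdot\frac{R}{\log K}$, which is Lemma~\ref{lma:mi_constraint_bound} (the same estimate used for Proposition~\ref{prop:bayes_bits_bound}). So the plan is: (1) set up the worst-case construction and reduce, via Proposition~\ref{prop:fano_regret}, to controlling $I(V;\mathcal{H}_T)$; (2) prove the constrained mutual-information bound; (3) substitute and take the supremum over priors.

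For step~(1): fix $\epsilon>0$, let $\Phi=\{\phi_1,\dots,\phi_{K'}\}$ be the $\epsilon$-local packing set of $\Delta(\Pi)$ (of $\Pi$ in the parametric case), so $K'=\mathcal{M}^{\mathrm{loc}}(\rho,\Delta(\Pi),\epsilon)$ and $\rho(\phi_i,\phi_j)\ge\epsilon$ for $i\ne j$ (with the scaling conventions of the proof of Theorem~\ref{thm:covering_lowerbound}), and take the adversarial prior $\nu$ built from the models under which each $\phi_i$ is, in turn, the unique optimizer, with the marginal over $\pi^*$ near-uniform on the size-$K$ decision space. This is exactly the construction behind Theorem~\ref{thm:covering_lowerbound}, so Proposition~\ref{prop:fano_regret} applies and yields, for every algorithm with $\KL{p_t}{P^{\pi^*}}\le R$ for all $t$, that $\mathcal{BR}_\nu(T)/(T\epsilon)\ge\frac{1}{2}\left(1-\frac{I(V;\mathcal{H}_T)+1}{\log K'}\right)$.

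Step~(2) is the crux. I would follow the structure of the proof of Theorem~\ref{thm:yang}: chain-rule $I(V;\mathcal{H}_T)=\sum_{t=1}^T I\!\left(V;(C_t,\pi_t,O_t)\mid\mathcal{H}_{t-1}\right)$; observe that in this construction $C_t$ and $\pi_t$ carry no information about $V$ given $\mathcal{H}_{t-1}$, so each term reduces to the expected KL between the observation law under the true model and the posterior-averaged one, which is bounded by an average pairwise KL over the packing, hence by $2\epsilon^2\bar{A}$ per round through Assumption~\ref{asmp:constant_bound} and $\rho(\phi_i,\phi_j)\le\epsilon$. The new input is the budget $\KL{p_t}{P^{\pi^*}}\le R$: with $P^{\pi^*}$ near-uniform on the $K$-point space this reads $H(p_t)\ge\log K-R$ --- that is, the posterior over $\pi^*$ stays within $R$ bits of uniform, so its $\ell_\infty$-mass is $\lesssim R/\log K$ --- and since the per-round KL estimate is largest exactly when the agent can concentrate its play on the model-discriminating decisions, this budget costs the agent a factor $\lesssim R/\log K$ in that estimate. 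Summing over the $T$ rounds gives $I(V;\mathcal{H}_T)\le 2\epsilon^2\bar{A}T\cdot\frac{R}{\log K}$.

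Finally, step~(3): substituting into the bound of step~(1) gives $\mathcal{BR}_\nu(T)/(T\epsilon)\ge\frac{1}{2}\left(1-\frac{2\epsilon^2\bar{A}T\frac{R}{\log K}+1}{\log\mathcal{M}^{\mathrm{loc}}(\rho,\Delta(\Pi),\epsilon)}\right)$; since $\mathcal{BR}^*(T)\ge\inf_{\{p_t\}}\mathcal{BR}_\nu(T)$ and the mutual-information bound is uniform over admissible algorithms, Equation~\ref{eq:br_finite_constrained} follows, and the parametric case is identical with $\Pi$ replacing $\Delta(\Pi)$. I expect the bulk of the work to be step~(2): pinning down precisely how the $R$-bit budget multiplies into the $R/\log K$ factor in the per-round mutual information, choosing the worst-case prior so that it simultaneously realizes the local-packing cardinality $\mathcal{M}^{\mathrm{loc}}$ and the near-uniform marginal over $\Pi$ that makes the budget bite, and tracking the additive lower-order terms so the clean inequality --- not merely an $\Omega(\cdot)$ rate --- comes out.
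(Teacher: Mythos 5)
Your proposal follows the paper's proof essentially verbatim: the paper derives this proposition as an immediate corollary of Lemma~\ref{lma:mi_constraint_bound} (the constrained bound $I(V;\mathcal{H}_T)\le 2\bar{A}T\epsilon^2\frac{R}{\log K}$) substituted into the Fano-based regret bound of Proposition~\ref{prop:fano_regret}, with the same local-packing choices of $\Phi$ as in Theorem~\ref{thm:covering_lowerbound} and $\rho(x,y)=\norm{x-y}_1$ as in Proposition~\ref{prop:finite_lb}. Your step~(2) heuristic for where the $R/\log K$ factor comes from (posterior mass near uniform) differs cosmetically from the paper's one-line ``scaling argument'' bounding $\max_{P_1,P_2\in\tilde{E}}\KL{P_1}{P_2}$ by $\frac{R}{\log K}\max_{P_1,P_2\in E}\KL{P_1}{P_2}$, but neither is spelled out in more detail and the overall route is identical.
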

The proof follows from Proposition \ref{prop:regret_fano_constraint}, by selecting an appropriate value of $\epsilon$. We now state an upper bound for MAB problems where the agent accumulates $R$ bits.
\begin{proposition}
\label{prop:bits_ub}
    Assume an $K$-MAB problem with no constraints. Let $\tilde{O}_t,\tilde{\mathcal{H}}_T$ be constrained observations and history such that $I(\pi^*;\tilde{\mathcal{H}_T}) = R$. The regret of Thompson sampling with these observations is upper bounded by $O\left(\log K \sqrt{\frac{KT}{R}}\right)$. Additionally, if the MAB is linear the regret can be upper bounded by $O\left(\log K \sqrt{\frac{dT}{R}}\right)$.
\end{proposition}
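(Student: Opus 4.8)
The plan is to derive this bound from the information-theoretic analysis of Thompson sampling due to Russo and Van Roy, rescaled by the information budget. Recall the core of that analysis: for Thompson sampling on a $K$-armed bandit, writing $\Delta_t$ for the expected per-round regret conditioned on the history and $g_t = I(\pi^*;(\pi_t,O_t)\mid\mathcal{H}_{t-1})$ for the per-round information gain about the optimal arm, the \emph{information ratio} $\Gamma_t = \mathbb{E}[\Delta_t]^2/g_t$ satisfies $\Gamma_t \le K/2$ (and $\Gamma_t \le d/2$ for a $d$-dimensional linear payoff). A Cauchy--Schwarz step over the $T$ rounds together with the chain rule $\sum_{t=1}^T g_t = I(\pi^*;\mathcal{H}_T) \le H(\pi^*) \le \log K$ then gives $\mathcal{BR}(T) \le \sqrt{\bar\Gamma\, T\, H(\pi^*)} \le \sqrt{(K/2)\,T\log K}$, the familiar $O(\sqrt{KT\log K})$ bound (and $O(\sqrt{dT\log K})$ in the linear case). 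My first step would be to isolate these three ingredients --- the per-round information-ratio bound, the Cauchy--Schwarz aggregation, and the entropy bound $H(\pi^*)\le\log K$ --- since only the first changes under a feedback constraint.

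Next I would rerun this argument with the constrained observations $\tilde O_t$ in place of the full reward feedback, so that the cumulative information the agent extracts is $\sum_t I(\pi^*;\tilde O_t\mid\tilde{\mathcal{H}}_{t-1}) = I(\pi^*;\tilde{\mathcal{H}}_T) = R$ instead of the $\Theta(\log K)$ bits that unrestricted bandit feedback about $\pi^*$ would carry. Since $\tilde O_t$ is a (possibly randomized) function of the true feedback and Thompson sampling matches its action distribution to the posterior of $\pi^*$ given $\tilde{\mathcal H}_{t-1}$, the Russo--Van Roy bound on the numerator $\mathbb{E}[\Delta_t]^2$ still applies verbatim, but the relevant denominator is now the strictly smaller quantity that the degraded channel conveys. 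Quantifying this gap, one shows the constrained information ratio is inflated by a factor of order $\log K/R$ --- informally, forcing the learning process to live on a total budget of $R$ bits means each observation carries only a $\Theta(R/\log K)$ share of what a full reward would --- so that $\bar\Gamma_{\mathrm{constr}} \lesssim \frac{K}{2}\cdot\frac{\log K}{R}$, and $\bar\Gamma_{\mathrm{constr}} \lesssim \frac{d}{2}\cdot\frac{\log K}{R}$ in the linear case.

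Plugging $\bar\Gamma_{\mathrm{constr}} \lesssim \frac{K\log K}{R}$ and $H(\pi^*)\le\log K$ back into $\mathcal{BR}(T)\le\sqrt{\bar\Gamma_{\mathrm{constr}}\,T\,H(\pi^*)}$ yields $\mathcal{BR}(T)\lesssim\sqrt{\frac{K\log K}{R}\cdot T\cdot\log K}=\log K\sqrt{\frac{KT}{R}}$, and the same substitution with $d$ replacing $K$ in the information ratio gives the linear-bandit bound $\log K\sqrt{dT/R}$. The main obstacle is the middle step --- making the information-ratio inflation rigorous. One must bound, uniformly over the rounds and over the worst-case allocation of the $R$ bits across them, how much information about $\pi^*$ the degraded observations can carry relative to full feedback; a data-processing / channel-capacity argument relating $I(\pi^*;\tilde O_t\mid\tilde{\mathcal H}_{t-1})$ to the unrestricted per-round information, or alternatively a blocking argument that reduces the constrained run to an unrestricted Thompson-sampling run on a ``slowed-down'' instance, seems the natural route. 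The delicate point is ensuring that no single round with a near-zero information gain makes $\Gamma_t$ blow up; once that is handled, the Cauchy--Schwarz aggregation and the entropy bound are routine.
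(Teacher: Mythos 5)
Your proposal follows essentially the same route as the paper: define the information ratio of Thompson sampling run on the constrained observations, argue that it exceeds the unconstrained ratio (bounded by $K/2$, resp.\ $d/2$) by a factor of at most $\log K/R$, and plug the inflated ratio into the standard bound $\mathcal{BR}(T)\le\sqrt{\tilde\rho\, T\, \log K}$. The inflation step you flag as the main obstacle is exactly where the paper's proof does its work, via the pointwise identity $\tilde\rho_t/\rho_t = I(\pi^*;O_{t+1}\mid\mathcal{H}_t)/I(\pi^*;\tilde O_{t+1}\mid\tilde{\mathcal{H}}_t)$ combined with the upper bound $I(\pi^*;O_{t+1}\mid\mathcal{H}_t)\le\log K$ and a per-round lower bound of $R$ on the information carried by the constrained observations.
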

Detailed proofs of Propositions \ref{prop:bayes_bits_bound}, \ref{prop:regret_fano_constraint}, and \ref{prop:bits_ub} can be found in Appendix \ref{app:bits_lb_proofs}. 

Now, we can quantify the relationship between information and regret. Consider two agents, $A$ and $B$, that play on the same online decision-making task. $A$ has an advantage over $B$, since he already played multiple rounds before. $B$ can query external information to reach the same level of performance as $A$ on the task. Using the regret upper bound in Proposition \ref{prop:bits_ub} we can get a lower bound on the information $B$ needs. Similarly, Proposition \ref{prop:bayes_bits_bound} gives an upper bound on the number of bits. Thus, Propositions \ref{prop:bayes_bits_bound} and \ref{prop:bits_ub} quantify information in terms of regret. We note that this quantification is not tight, as the upper bounds differ from the lower bounds by a factor of $\sqrt{\log K}$.

These Propositions also allow us to quantify the relationship between the rate of information accumulation and regret. If both parties play for the same number of rounds, the one that accumulates information faster will also suffer less regret. 

\subsection{Entropy Constraint}
While Proposition \ref{prop:bayes_bits_bound} shows how we can lower bound regret under a constraint on accumulated information, we now shift our focus to a constraint on the prior only, i.e $H(\pi^*) \leq R$. 
\cite{russo2014informationdirected, russo2016informationthompson} showed that in this scenario, the regret of Thompson sampling and information-directed sampling (IDS) can be upper bounded using $H(\pi^*)$.
\begin{theorem}[Propositions 2 and 4  of \cite{russo2014informationdirected}]
\label{thm:russo}
    In the $K$-MAB setting, the regret of Thompson sampling and IDS is upper bounded as $O(\sqrt{K T H(\pi^*)})$. Furthermore, if the rewards are a linear function of the decisions, and the decisions are vectors in $\mathbb{R}^d$, then the regret is upper bounded by $O\left(\sqrt{dTH(\pi^*)}\right)$.
\end{theorem}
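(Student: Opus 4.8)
The plan is to reproduce the information-ratio argument of \cite{russo2014informationdirected}. Throughout write $\E_t[\cdot] = \E[\cdot \mid \mathcal{H}_{t-1}]$ and let $I_t(\cdot\,;\cdot)$ denote mutual information computed under the posterior given $\mathcal{H}_{t-1}$; recall that in the $K$-MAB setting $O_t = r_t$ and $\pi^* = \pi^*_M$. Define the per-round \emph{information ratio}
\[
    \Gamma_t \;=\; \frac{\bigl(\E_t[\mu_M(\pi^*) - \mu_M(\pi_t)]\bigr)^2}{I_t\bigl(\pi^*\,;(\pi_t, O_t)\bigr)},
\]
the square of the expected one-step regret divided by the information about $\pi^*$ that playing $\pi_t$ and observing $O_t$ reveals. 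The proof splits into two essentially independent pieces: \textbf{(i)} a uniform bound $\Gamma_t \le c$ with $c = K/2$ in the unstructured case and $c = d/2$ in the linear case, which is a statement purely about the policy and the reward structure; and \textbf{(ii)} a generic reduction turning any such $c$ into a regret bound of order $\sqrt{c\,T\,H(\pi^*)}$.

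For piece \textbf{(ii)} I would first rewrite, using the tower rule and $\E_t[r_t] = \E_t[\mu_M(\pi_t)]$,
\[
    \mathcal{BR}(T) \;=\; \E\!\left[\sum_{t=1}^T \E_t[\mu_M(\pi^*) - \mu_M(\pi_t)]\right] \;=\; \E\!\left[\sum_{t=1}^T \sqrt{\Gamma_t}\,\sqrt{I_t(\pi^*;(\pi_t,O_t))}\right],
\]
where the second equality uses that the one-step regret is nonnegative. Inserting $\Gamma_t \le c$ and then applying the Cauchy--Schwarz inequality $\sum_t \sqrt{I_t} \le \sqrt{T}\,\sqrt{\sum_t I_t}$ together with Jensen's inequality $\E[\sqrt{\cdot}] \le \sqrt{\E[\cdot]}$ gives
\[
    \mathcal{BR}(T) \;\le\; \sqrt{c}\;\E\!\left[\sum_{t=1}^T \sqrt{I_t(\pi^*;(\pi_t,O_t))}\right] \;\le\; \sqrt{c\,T}\;\sqrt{\E\!\left[\sum_{t=1}^T I_t(\pi^*;(\pi_t,O_t))\right]}.
\]
The remaining sum telescopes by the chain rule for mutual information, $\sum_{t=1}^T \E[I_t(\pi^*;(\pi_t,O_t))] = I(\pi^*;\mathcal{H}_T)$, and $I(\pi^*;\mathcal{H}_T) \le H(\pi^*)$ since mutual information never exceeds the entropy of either argument. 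Hence $\mathcal{BR}(T) \le \sqrt{c\,T\,H(\pi^*)}$, which is $O(\sqrt{KTH(\pi^*)})$ when $c=K/2$ and $O(\sqrt{dTH(\pi^*)})$ when $c=d/2$.

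The crux, and the step I expect to be the main obstacle, is piece \textbf{(i)}. For Thompson sampling the key structural fact is \emph{probability matching}: given $\mathcal{H}_{t-1}$, the played arm $\pi_t$ has the same law as $\pi^*$, i.e.\ $\mathbb{P}_t(\pi_t=a)=\mathbb{P}_t(\pi^*=a)$ for every $a$. Using this one expands the numerator of $\Gamma_t$ over arms, writes the denominator as a posterior-weighted sum of KL divergences between the law of $O_t$ given $\{\pi_t=a,\pi^*=a\}$ and the law of $O_t$ given $\{\pi_t=a\}$, lower-bounds each such term (e.g.\ via Pinsker's inequality or a direct second-moment estimate), and finally applies Cauchy--Schwarz across the $K$ arms to collect the factor $K$, yielding $\Gamma_t \le K/2$. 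In the linear case one instead uses that each gap $\mu_M(\pi^*)-\mu_M(a)$ is a linear functional of the unknown parameter, so the relevant information is governed by the $d \times d$ posterior-covariance matrix of that parameter; bounding $\Gamma_t$ then reduces to a trace/rank inequality showing the numerator is at most $d$ times the denominator, i.e.\ $\Gamma_t \le d/2$. Finally, since IDS chooses $\pi_t$ to \emph{minimize} (a sampled estimate of) the information ratio at each round and the Thompson-sampling distribution is an admissible choice, the IDS information ratio is no larger than that of Thompson sampling; thus both bounds in \textbf{(i)}, and hence the regret bounds, carry over to IDS verbatim.
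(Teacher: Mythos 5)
This theorem is imported verbatim from Propositions 2 and 4 of \cite{russo2014informationdirected}, and the paper offers no proof of its own, so the only benchmark is the original source: your reconstruction — the information-ratio definition, the generic reduction via Cauchy--Schwarz, the chain rule $\sum_t \E[I_t(\pi^*;(\pi_t,O_t))]=I(\pi^*;\mathcal{H}_T)\le H(\pi^*)$, the probability-matching/Pinsker argument giving $\Gamma_t\le K/2$ (and the rank-$d$ covariance argument giving $d/2$ in the linear case), and IDS inheriting the bound because it minimizes the ratio over action distributions — is precisely that proof and is correct. The only cosmetic remark is that the paper measures entropy in bits (base-2 logarithms), which changes the argument only by an absolute constant absorbed in the $O(\cdot)$.
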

We next develop an entropy-dependent Bayesian regret lower-bound for algorithms that obey the following assumption.
\begin{assumption} \label{asmp:better_then}
    $\KL{p_t}{P^{\pi^*}} \geq c\bar{A} R$ for all $t$, for some constant $c>0$.
\end{assumption}
Intuitively, this assumption means that we consider only  agents that gather a minimal amount of information regarding the problem. We now state the following theorem.
\begin{proposition} \label{prop:entropy_bits_mab}
    For any agent for the  $K$-MAB problem that satisfy Assumption \ref{asmp:better_then}, the regret is lower-bounded as $\Omega\left(\sqrt{\frac{KH(\pi^*)T}{\log K}}\right)$. Additionally, if the MAB is linear, then the regret is lower bounded by $\Omega\left(\sqrt{\frac{dH(\pi^*)T}{\log K}}\right)$.
\end{proposition}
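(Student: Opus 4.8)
The plan is to re-run the Fano-based reduction of Section~\ref{sec:fano_bounds} for $K$-armed bandits, but on a family of instances whose prior on $\pi^*$ is deliberately concentrated so that $H(\pi^*)\le R$, and then to optimize the free scale $\epsilon$ against the number of hypotheses that this constrained family can support. One cannot simply reuse Proposition~\ref{prop:regret_fano_constraint}: its construction puts a near-uniform (maximal-entropy) prior on $\pi^*$, which violates $H(\pi^*)\le R$, so the entropy constraint has to be built into the instance family, and the resulting loss is exactly what separates the target $\sqrt{KRT/\log K}$ from the stronger $\sqrt{TK\log K/R}$ of Proposition~\ref{prop:bayes_bits_bound}. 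Concretely, I would begin from the standard hard MAB construction behind the $\Omega(\sqrt{KT})$ entry of Table~\ref{tab:lower_bounds} --- a base instance with all means $1/2$ together with, for each arm, a perturbed instance in which that arm has mean $1/2+\epsilon$ --- and reweight it non-uniformly, keeping a support (or a mixture) for which the optimal-arm distribution has entropy of order $R$ instead of order $\log K$; for example by spreading mass $\Theta(R/\log K)$ over the perturbed instances, or by taking a local packing of the subregion of $\Delta(\Pi)$ on which the optimal arm ranges over only about $2^{R}$ coordinates. The mechanism is that, under the entropy constraint, the reduction can expose only $\exp(\Theta(KR/\log K))$ mutually $\epsilon$-separated decisions, as opposed to the $\exp(\Theta(K))$ available with no constraint.

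Given such a construction, I would invoke Proposition~\ref{prop:fano_regret} with the parametric bound $I(V;\mathcal{H}_T)\le 2\bar{A}\epsilon^2 T$ from Theorem~\ref{thm:yang}, so that, writing $K'$ for the number of hypotheses,
\[
\frac{\mathcal{BR}^*(T)}{T\epsilon}\ \ge\ \frac12\left[1-\frac{2\bar{A}\epsilon^2 T+1}{\log K'}\right],\qquad \log K'\ \gtrsim\ \frac{KR}{\log K},
\]
and then take $\epsilon^2\asymp KR/(T\log K)$, which is at most $\epsilon_0^2$ once $T$ is large enough, to make the bracket a positive constant; this gives $\mathcal{BR}^*(T)\gtrsim T\epsilon\gtrsim\sqrt{KRT/\log K}$. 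Assumption~\ref{asmp:better_then} is what lets the bound be stated for the agents in question rather than for a worst-case prior alone: because the prior is concentrated, an agent that simply commits to the prior-optimal arm would suffer only $O\!\left(\sqrt{KT}\cdot R/\log K\right)$ regret, and Assumption~\ref{asmp:better_then} --- which, via Assumption~\ref{asmp:constant_bound}, forces $\rho(p_t,P^{\pi^*})^2\gtrsim R$ --- is precisely what excludes such policies and makes a Fano misidentification of $V$ translate into regret against $\phi_V$. The linear claim follows the same steps with $\Delta(\Pi)$ replaced by a packing of $\Pi\subset\mathbb{B}_2^d$ restricted to the low-entropy prior, giving $\log K'\gtrsim dR/\log K$ and hence $\Omega(\sqrt{dRT/\log K})$; when $R=0$ only a single instance survives and the bound is vacuous (linear), as claimed.

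I expect the crux to be the counting/geometric claim behind the first step: controlling how the constraint $H(\pi^*)\le R$ limits the number of $\epsilon$-separated decisions whose associated instances are mutually near-indistinguishable --- that is, simultaneously (i) realizing $H(\pi^*)\le R$, (ii) retaining $\exp(\Theta(KR/\log K))$ valid hypotheses so the Fano denominator is large enough, and (iii) ensuring that Assumption~\ref{asmp:better_then} agents cannot evade the constructed instances --- and then verifying that the chosen $\epsilon$ is admissible. A more routine point is to check Assumption~\ref{asmp:constant_bound} for the Bernoulli and linear-Gaussian reward models at this scale, so that Theorem~\ref{thm:yang} applies to the non-uniform prior.
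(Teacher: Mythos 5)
Your route is genuinely different from the paper's, and it has a gap that I do not think can be repaired inside the Fano machinery you invoke. Proposition~\ref{prop:fano_regret} and Theorem~\ref{thm:covering_lowerbound} lower-bound $\inf_{\{p_t\}}\mathcal{BR}_\nu(T)$ over \emph{all} agents: Fano's inequality controls the error probability of the \emph{best} estimator of $V$, so whatever instance family you build, the resulting bound can never exceed the regret of the best unconstrained agent on that family. On the low-entropy families you sketch (mass $\Theta(R/\log K)$ spread over the perturbed instances, or a support of size $2^{R}$), there exist agents with regret $O\bigl(\sqrt{KT}\cdot R/\log K\bigr)$ or $O\bigl(\sqrt{2^{R}T}\bigr)$, both strictly below the target $\sqrt{KRT/\log K}$ whenever $R<\log K$ --- you even observe the first of these yourself. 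Hence the displayed conclusion $\mathcal{BR}^*(T)\gtrsim\sqrt{KRT/\log K}$ is simply false for those constructions, and Assumption~\ref{asmp:better_then} must enter the chain of inequalities quantitatively rather than as a remark that it ``excludes such policies''; your proposal gives no mechanism by which the constraint $\KL{p_t}{P^{\pi^*}}\ge c\bar{A}R$ propagates into the Fano bound. Two further symptoms of the same issue: with a non-uniform hypothesis variable Fano's numerator becomes $H(V)-I(V;\mathcal{H}_T)-1$, so a concentrated prior makes the bound \emph{weaker}, not stronger; and the central counting claim $\log K'\gtrsim KR/\log K$ is asserted, not proved, and is precisely the quantity your own upper-bound examples contradict.

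The paper's proof avoids all of this by never building a new packing. Lemma~\ref{lma:entropybitslb_helper} pushes Assumption~\ref{asmp:better_then} through the regularity Assumption~\ref{asmp:constant_bound} to get the pointwise statement $\rho(p_t,P^{\pi^*})^2\ge cR/2$, i.e., a per-round regret of order $\sqrt{R}$ for any constrained agent, and compares it via Pinsker to the at-most-order-$\sqrt{\log K}$ per-round regret of an arbitrary unconstrained agent. This yields the multiplicative comparison $\mathcal{BR}_P(T;\{p_t\})\ge 2\sqrt{cR/\log K}\,\mathcal{BR}^*(T)$ (Proposition~\ref{prop:entropy_bits_lb}), after which the unconstrained $\Omega(\sqrt{KT})$ and $\Omega(\sqrt{dT})$ bounds from Table~\ref{tab:lower_bounds} finish the proof. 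If you want to rescue your approach, the missing ingredient is an argument that the \emph{constrained} agent's probability of misidentifying $V$ stays bounded away from zero even when an unconstrained estimator's would not --- which is exactly the content that the comparison lemma supplies directly and that Fano alone cannot.
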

The proof of Proposition \ref{prop:entropy_bits_mab} can be found in Appendix \ref{sub_sec:entropy_bits_mab}. We note that this lower bound behaves like the upper bound Presented in Theorem \ref{thm:russo}, up to a factor of $\sqrt{\log K}$. We also show in the appendix that Assumption \ref{asmp:better_then} holds for any algorithm that learns the optimal decision. 
\section{Experiments}
\label{sec:experiments}
\begin{figure}[!t]
    \centering
    \begin{subfigure}[b]{0.495\textwidth}
        \centering
        \includegraphics[width=\textwidth]{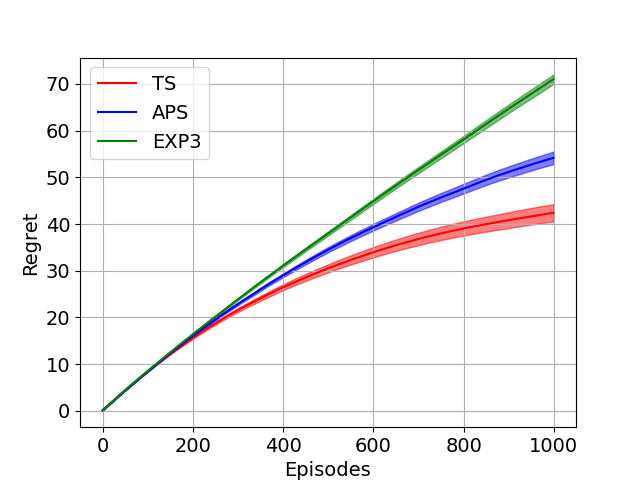}
        \caption{Bayesian regret}
    \end{subfigure}
    \begin{subfigure}[b]{0.495\textwidth}
        \centering
        \includegraphics[width=\textwidth]{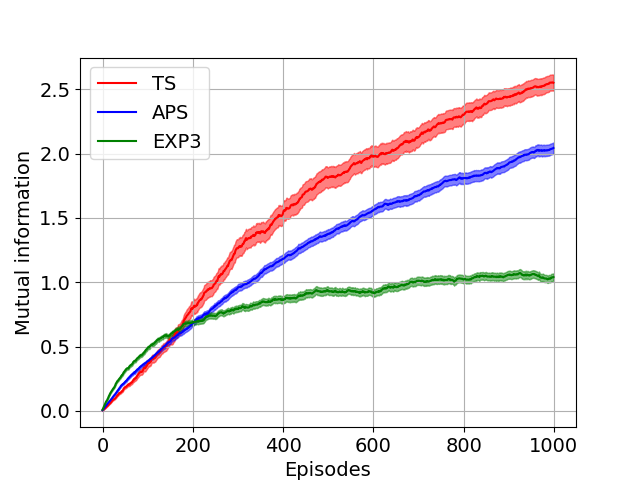}
        \caption{Accumulated information}
    \end{subfigure}
    \caption{(a) The Bayesian regret and (b) the accumulated information in bits for three different bandit algorithms, under the same bandit structure and a uniform prior. The bandit algorithms used are described in the legend. The shadowed areas correspond to 2-sigma error bars.}
    \label{fig:bayes_bandit_algos}
\end{figure}
\subsection{Stochastic Bayesian Bandit}
We begin by demonstrating the trade-off in a simple Bayesian MAB problem. In this problem, our model space contains $K$ possible models. The decision space of the problem also contains $K$ decisions, which are called arms. The mean of all arms in every model is $0.5(1-\varepsilon)$, except for one arm for which the mean is $0.5(1+\varepsilon)$. The optimal arm is different for every model, and the reward of each arm is Bernoulli distributed. We also fix $K=8,\varepsilon=0.1$ and a uniform prior over the problems and compare three different bandit algorithms - EXP3 \citep{lattimore2020bandit}, APS \citep{xu2023bayesian}, and Thompson sampling \citep{thompson1933likelihood}. We also estimate the accumulated information for each algorithm. Results are presented in Figure \ref{fig:bayes_bandit_algos}. We see how algorithms that accumulate information quickly also suffer less regret. We also compare Thompson sampling under different priors $H(\pi^*)$. The results of this experiment can be found in Figure \ref{fig:bayes_bandit_ts}. Additional experimental details are provided in Appendix \ref{app:exp_details_bandits}. 
\subsection{Question Answering with Large Language Models}
In the following experiment, we show how the regret-information trade-off can be utilized in a practical setting. We consider a sequential multiple choice question-answering (MCQA) task, where every round we need to answer a question, given four possible answers. We also have access to two large language models (LLM), where one has significantly more parameters than the other. At every turn, we choose which LLM should be used to answer the question. Every round we are provided with the question, possible answers, and the output of the small LLM for the given prompt. We receive a positive reward of 1 for answering the question correctly. If we choose to deploy the large LLM we also incur a negative reward of 0.1. Our goal is to minimize the accumulated regret.

On the one hand, the larger LLM is more accurate and will output the correct answer with a higher probability. On the other hand, we do not wish to query the large LLM due to the penalty we suffer, if the small LLM already outputs the correct answer with high probability. We present the following method for selecting the LLM using bits: Use the small LLM to quantify the amount of information that can be obtained. If it is above some threshold, we opt to use the large LLM since it provides more information, which results in less regret as we have seen in Section \ref{sec:fano_bounds}. Otherwise, use the small LLM. We call this approach the bits-based policy.

Since the LLM outputs a probability distribution over tokens, we can measure the information the small LLM will gain after answering the question, in bits. We prompt the small LLM with the question and possible answers, which returns scores for each token. We take the scores corresponding only for the tokens [A, B, C, D], each corresponding to a different answer, and use them to get a distribution over these tokens. We measure the information in bits by calculating the KL divergence between this distribution and the uniform one. We compare the mean regret over a horizon of 200 steps between this policy, and one that randomly selects which LLM to use. The threshold value for the bits-based policy is selected to ensure that we query the large LLM the same number of times as the random policy.

We run the experiment described above with the following specifications. To generate the multiple-choice questions, we used the MMLU dataset, which contains multiple-choice questions in a variety of topics, such as algebra, business, etc. \citep{hendryckstest2021mmlu, hendrycks2021ethicsmmlu}. 
\begin{table}
    \centering
        \caption{Mean regret for both policies for a horizon of 200 episodes, under different deployment percentages of the large LLM. The left column denotes the percentage of queries to the large LLM. For all tables, we used Mixtral-8x7B as the large LLM. For the small LLM, we used Mistral-7B (top table) and Gemma-2B (bottom table). Comparison with other LLMs is provided in Appendix {\ref{app:exp_llm}}.}
        \label{tab:llm_diff}
        \begin{tabular}{l l l l l l}
        \toprule
            & \% Deployment & Bits-based & Random & Small only & Large only
            \\ \midrule
            \multirow{3}{5em}{Mistral 7B} 
            & $50\%$ 
            & $\mathbf{36 \pm 1}$ 
            & $38.7 \pm 0.3$ 
            & \multirow{3}{4em}{$59.3 \pm 0.7$} & \multirow{3}{4em}{$18.2 \pm 0.6$}\\
            & $70\%$ & $\mathbf{26.6 \pm 0.8}$ & $30.5 \pm 0.4$ \\
            & $90\%$ & $\mathbf{21.8 \pm 0.4}$ & $22.4 \pm 0.5$ \\ 
            \bottomrule 
            \multirow{3}{5em}{Gemma 2B} 
            & $50\%$ & $\mathbf{38 \pm 2}$ & $44.5 \pm 0.7$ 
            & \multirow{3}{4em}{$67 \pm 1$}
            & \multirow{3}{4em}{$22 \pm 1$}
            \\
            & $70\%$ & $\mathbf{32 \pm 1}$ & $35.6 \pm 0.8$ \\
            & $90\%$ & $\mathbf{26 \pm 1}$ & $27 \pm 1$ 
            \\
            \bottomrule 
        \end{tabular}
\end{table}
We used 10 different seeds to generate 10 sets of 200 questions from MMLU randomly. For the small LLM we either used Mistral 7B \cite{jiang2023mistral}, Falcon 7B \citep{falcon40b}, or Gemma 2B \citep{gemmateam2024gemmaopenmodelsbased}. We used Mixtral 7Bx8 \citep{jiang2024mixtral}, Llama3-70B \citep{llama3modelcard}, or Gemma 7B for the large LLM. We applied 4-bit quantization \citep{dettmers2023quant4bit} for all models and flash attention \citep{dao2022flashattention} for all models excluding Falcon 7B.
% Figure \ref{fig:regret_llm} compares the regret of the random and the bits-based policies across 10 different seeds. 
Table \ref{tab:llm_diff} describes the mean regret of every policy under a different number of large LLM deployments. Tables \ref{tab:llm_full_results_50} and \ref{tab:llm_full_results_70} provide additional results for different combinations of small and large LLMs.

From the results, we see that deciding whether to query the large LLM or not using bits is better than random selection. Furthermore, this improvement becomes more significant as we increase the number of deployments allowed. This demonstrates how we can easily utilize the quantification of information to improve performance in online tasks. Additional details and results for other models are provided in Appendix \ref{app:exp_llm}. Code is available here\footnote{\url{https://github.com/itaishufaro/bitsandbandits}}.
\section{Related Work}
\label{sec:related}
\paragraph*{Bayesian setting.} In the Bayesian setting, prior information is assumed to be expressed as a probability distribution, called the prior. New information is then incorporated using Bayesian inference to update the prior probability, resulting in a posterior distribution. Bayesian algorithms have been extensively studied in multiple online decision-making problems such as multi-armed bandits \citep{russo2014informationdirected, kaufmann2012bayesucb} and reinforcement learning \citep{guez2012efficient, ghavamzadeh2016bayesian}. Upper bounds for Bayesian algorithms have been proved for both the frequentist \citep{kaufmann2011efficiency} and Bayesian settings \citep{russo2014informationdirected, russo2016informationthompson}. Prior dependent regret bounds for Bayesian algorithms have also been studied \citep{bubeck2013priordependent, russo2014informationdirected}. Previous studies have explored prior-dependent lower bounds for Bayesian regret, but these are limited to specific forms of priors, such as Gaussian priors \citep{atsidakou2024logarithmicbayes}.
\paragraph*{Contextual information.} In the contextual setting, information is revealed to the agent before every round, which can be leveraged to minimize regret. This framework has gathered attention within both the bandit \citep{bietti2021contextual} and reinforcement learning frameworks \citep{klink2020self, modi2020no}. The contexts can be selected arbitrarily by an adversary \citep{beygelzimer2011contextualsupervised, chu2011contextual} or generated from some prior probability distribution, similarly to the Bayesian setting \citep{hao2020adaptivecontextual, may2012optimisticbayescontextual}. Bayesian algorithms have also been adapted to the arbitrary context setting \citep{agrawal2013thompson}. Furthermore, the type of information provided to the learner can expand beyond the contextual information provided directly \citep{schneider2024optimal}. Our work introduces a contextual Bayesian setting framework that covers a wide variety of interactive decision-making tasks.
\paragraph*{Information-theoretic methods} have been employed to derive upper bounds for various online tasks. \cite{russo2016informationthompson} introduced an information-theoretic method for establishing Bayesian regret upper bound for Thompson sampling in the multi-armed bandit setting, which depends on the entropy of the prior. This approach was also used to design new Bayesian algorithms that utilize mutual information called information-directed sampling (IDS) \citep{russo2014informationdirected}. IDS and the information-theoretic method it utilizes were also extended to other tasks such as contextual bandits \citep{neu2022lifting}, sparse linear bandits \citep{hao2021sparselinear},  non-stationary bandits \citep{min2023informationnonstationary, liu2023nonstationary}, reinforcement learning \citep{lu2019information}, non-linear control \citep{kakade2020nonlinear}, partial monitoring \citep{lattimore2019informationpartial}, and other online optimization problems \citep{liu2018graphfeedback, dong2019thompsonlogistic, lattimore2021mirror, russo2018satisficing}. The bounds presented in works such as \citep{russo2016informationthompson, russo2014informationdirected, neu2022lifting} are based on the upper bounding of the information ratio. Our work uses a different approach, which obtains nearly matching prior-dependent lower bounds for Thompson sampling and information-directed sampling (IDS). Furthermore, these bounds can be applied beyond the bandit setting. \cite{seldin2011pac} utilized PAC-Bayes bounds to obtain information-theoretic upper bounds on the per-round regret, that depend on mutual information. \cite{arumugam2021deciding, arumugam2022deciding} have explored a different information-theoretic method, utilizing rate-distortion theory to minimize regret.

\section{Conclusions and Future Work}
\label{sec:discussion}
We introduce a general setting that embeds contextual information in a Bayesian setting. This setting covers a wide variety of online decision-making tasks. Using information-theoretic tools, we demonstrated a general method for obtaining regret lower bounds for problems in this setting. We used this method to present regret lower bound which depends on the information accumulated by the agent. We also presented regret upper bounds for Thompson sampling which depends on the accumulated information. These results quantify the relationship between a-priori external information and regret in online settings and the relationship between online information accumulation and regret. We then utilized this trade-off in a multiple-choice question-answering task with LLMs, demonstrating that the aforementioned quantification can be easily used in an online setting.

\paragraph*{Limitations.} A central assumption in our analysis is that exogenous information can be gathered regardless of the process history. However, this assumption may be violated in a general sequential decision process where the action itself may have ramifications on the quality, quantity, and cost of the exogenous information.  Our work only covers the Bayesian setting, and our definition of the measure of information relies on it. In other settings, such as adversarial learning \citep{neu2020efficient} or frequentist settings, a different measure of information is required. Another limitation is the difference in $\sqrt{\log K}$ between the lower and upper bounds presented in this work. Making these bounds tighter can be the topic of future work.

 Finally, and importantly, with the increasing prevalence of LLMs, and foundation models in general, building solid foundations as well as practical algorithms for using prior knowledge in sequential decision-making is an important research endeavor that may be built upon the foundations that are laid in this paper.
 
 \section*{Acknowledgements}
This project has received funding from the European Union’s Horizon 2020 research and innovation programme under the Marie Skłodowska-Curie grant agreement No 101034255. 
\bibliography{bibliography}
\bibliographystyle{iclr2025_conference}

\newpage
\appendix
\section{Table of Notations}
\begin{table}[!hbt]
    \centering
    \caption{Commonly used notations throughout the paper and the appendix.}
    \vspace{0.5em}
    \begin{tabular}{l l}
    \toprule
        \textbf{Notations} & \textbf{Description} \\
        \midrule \midrule
        $\Pi$ & The decision space. \\ \midrule
        $\mathcal{M}$ & The model class. \\ \midrule
        $\mathcal{O}$ & The observation space. \\ \midrule
        $\mathcal{C}$ & The context space. \\ \midrule
        $\pi$ & A decision. \\ \midrule
        $R_M(\pi, C)$ & reward distribution according to decision $\pi$ on model $M$. \\ \midrule
        $\mu_M(\pi, C)$ & mean reward according to decision $\pi$ on model $M$. \\ \midrule
        $P$ & Prior probability of the task. \\ \midrule
        $\pi^*_M$ & The optimal decision for a model $M$. \\ \midrule
        $\mathcal{H}_t$ & The total accumulated history up to round $t$ \\ 
        & (including observations and contexts). \\ \midrule
        $p_t$ & {stochastic decision} selected at round $t$. \\ \midrule
        $r_t$ & Reward sampled at time $t$. \\ \midrule
        $\rho$ & A norm that lower bounds the difference in reward means. \\ \midrule
        $\mathcal{BR}_P(T)$ & The Bayesian regret given prior $P$. \\ \midrule
        $\mathcal{BR}^*(T)$ & The worst-case Bayesian regret. \\ \midrule
        $\mathcal{N}(d,\epsilon, H)$ & $\epsilon$-covering number of $H$ concerning metric $d$. \\ \midrule
        $\mathcal{M}(d,\epsilon, H)$ & $\epsilon$-packing number of $H$ concerning metric $d$. \\ \midrule
        $\mathcal{M}^{\mathrm{loc}}(d,\epsilon, H)$ & $\epsilon$-local packing number of $H$ concerning metric $d$. \\ \midrule
        $\mathrm{Vol}(\cdot)$ & Volume of a set. \\
    \bottomrule
        \end{tabular}
    \label{tab:notations}
\end{table}
\section{Useful Properties of Covering Numbers}
\label{app:covering}
\begin{definition}[Covering and packing numbers \citep{Wainwright_2019}]
\normalfont
    Let $H$ be a normed space, and let $d(\cdot, \cdot)$ be the metric induced by the norm. Let $\epsilon > 0$.
    \begin{itemize}
        \item A set $C \subset H$ is said to be an $\epsilon$-covering set of $H$ if for all $h \in H$ there is $c \in C$ such that $d(h,c) \leq \epsilon$.
        \item The covering number is defined as
        \[
        \mathcal{M}(d,\epsilon, H) = \min \left\{m : \exists C, \abs{C} = m \text{ and $C$ is an $\epsilon$-covering set of $H$}\right\}. 
        \]
        If $d(x,y) = \norm{x-y}_p$, we denote it by $\mathcal{M}_p(\epsilon,H)$.
        \item A set $C \subset H$ is said to be a packing set of $H$ if for all $c_1,c_2 \in C$ we have that $d(c_1,c_2) \geq \epsilon$.
        \item The packing number is defined as
        \[
        \mathcal{N}(d,\epsilon, H) = \max \left\{m : \exists C, \abs{C} = m \text{ and $C$ is an $\epsilon$-packing set of $H$}\right\}.
        \]
        If $d(x,y) = \norm{x-y}_p$, we denote it by $\mathcal{N}_p(\epsilon,H)$.
        \item We now define $B(\theta,\epsilon) = \{\theta' \in H : d(\theta',\theta) \leq \varepsilon\}$. $C \subseteq B$ is an $\epsilon$-local packing set of $B(\theta, \epsilon)$ if for all $c_1, c_2 \in C$, $d(c_1,c_2) \geq \frac{\epsilon}{2}$.
        \item The local packing number is defined as
        \[
        \mathcal{M}^{\mathrm{loc}}(d,\epsilon, H) = \max \left\{
        m : \exists B(\theta,\varepsilon),
        \text{such that $C$ is an } \epsilon-\text{local packing of $B$ and }
        \abs{C} = m
        \right\}.
        \]
    \end{itemize}
\end{definition}
\begin{lemma}
\label{lma:covering_prop}
The following properties hold:
\begin{itemize}
    \item (Theorem 14.2 of \cite{wu2020information}) For any subset of dimension $d$ that is contained in $\mathbb{B}_p^d$ the covering number $\mathcal{N}_p(\epsilon)$ obeys $d \log \left(\frac{2}{\epsilon}\right) \leq \log \mathcal{N}_p(\epsilon) \leq d \log \left(\frac{3}{\epsilon}\right)$. This includes the probability simplex with $d$ variables and $\mathbb{B}_d^2$.
%     \item (Theorems 3.1 and 3.3 of \cite{guntuboyina2012covering}) The covering of functions from $[0,1]^d$ to $[0,1]$ that are $L$-Lipschitz obeys $
% \log \mathcal{N}_\rho(\epsilon) \leq c_1\epsilon^{-d}$ and $\log \mathcal{M}_2(\epsilon) \geq c_2\epsilon^{-d}$.  
\item (Theorem 14.1 of \cite{wu2020information}) Under the same norms, for the same set,
    \[
    \mathcal{M}(2\epsilon) \leq \mathcal{N}(\epsilon) \leq \mathcal{M}(\epsilon)
    \]
    \end{itemize}
\end{lemma}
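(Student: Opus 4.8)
The plan is to treat both bullets as the classical volume-comparison facts they are — they are quoted as Theorems~14.1 and~14.2 of \cite{wu2020information} — so the argument reduces either to citing that reference or to reproducing a short self-contained proof; I would include the latter for completeness. I would establish the second bullet first, since it feeds into the first. For $\mathcal{M}(2\epsilon)\le\mathcal{N}(\epsilon)$: take a maximal $\epsilon$-packing of $H$; by maximality every point of $H$ lies within distance $\epsilon$ of some packing point, so a maximal $\epsilon$-packing is simultaneously an $\epsilon$-cover and hence a $2\epsilon$-cover, while its cardinality realizes the $\epsilon$-packing number. For $\mathcal{N}(\epsilon)\le\mathcal{M}(\epsilon)$: fix any $\epsilon$-cover $C$ and any $2\epsilon$-packing $P$; by the triangle inequality a single radius-$\epsilon$ ball centred at a point of $C$ cannot contain two distinct points of $P$, so $\abs{P}\le\abs{C}$, and taking extremal $P$ and $C$ gives the inequality.

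For the first bullet I would fix a full-dimensional $A\subseteq\mathbb{B}_p^d$ and run two volume comparisons, using that every ball appearing below is a scaling or translate of $\mathbb{B}_p^d$ and so has volume proportional to $r^d$. \emph{Upper bound:} a maximal $\epsilon$-packing $\{x_1,\dots,x_N\}$ of $A$ has pairwise-disjoint balls $B(x_i,\epsilon/2)$, all contained in $(1+\epsilon/2)\mathbb{B}_p^d$; comparing volumes gives $N\le(1+2/\epsilon)^d\le(3/\epsilon)^d$ for $\epsilon\le 1$, and since this packing also covers $A$, the same bound holds for the covering number, so $\log\mathcal{N}_p(\epsilon)\le d\log(3/\epsilon)$. \emph{Lower bound:} the radius-$\epsilon$ balls of any $\epsilon$-cover of $A$ cover $A$, hence $\mathrm{Vol}(A)\le\mathcal{N}_p(\epsilon)\,\epsilon^d\,\mathrm{Vol}(\mathbb{B}_p^d)$; since a full-dimensional $A$ — in particular $\mathbb{B}_p^d$ itself — has volume at least a fixed multiple of a ball's, this yields $\log\mathcal{N}_p(\epsilon)\ge d\log(1/\epsilon)$, and the slightly sharper constant $2/\epsilon$ is obtained by routing the lower bound through a maximal packing, as in \cite{wu2020information}. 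The two special cases are then immediate: $\mathbb{B}_2^d$ is the case $p=2$, and the probability simplex on $d$ variables is an affine image of a full-dimensional body in $\mathbb{R}^{d-1}$, so the same estimates apply with the appropriate intrinsic dimension.

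The step I expect to be the main obstacle is not the geometry but pinning down the hypotheses: the two-sided estimate holds only for subsets that are genuinely $d$-dimensional (contain an interior point), not arbitrary subsets of $\mathbb{B}_p^d$, and the simplex ``on $d$ variables'' is really $(d-1)$-dimensional, so one must decide whether to absorb that off-by-one into the stated exponent or to carry it explicitly; similarly the constants $2$ and $3$ depend on the normalization of the $L_p$ ball and on the $\epsilon\le 1$ regime. For this paper the cleanest route is to invoke \cite{wu2020information}, Theorems~14.1 and~14.2, verbatim and merely check that $\mathbb{B}_2^d$ and the probability simplex meet their hypotheses, rather than re-deriving the constants from scratch.
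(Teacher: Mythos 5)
The paper offers no proof of this lemma at all---it is stated purely as a citation to Theorems~14.1 and~14.2 of \cite{wu2020information}---so your decision to either cite verbatim or reproduce the standard volume-comparison argument matches what the paper does, and your sketch of that argument is the correct classical one. One slip worth fixing: in the second bullet you have attached the two sub-arguments to the wrong inequalities. The ``maximal $\epsilon$-packing is an $\epsilon$-cover'' argument proves $\mathcal{N}(\epsilon)\le\mathcal{M}(\epsilon)$ (covering number at most packing number at the same scale), while the triangle-inequality counting argument (no $\epsilon$-ball of a cover contains two points of a $2\epsilon$-packing) proves $\mathcal{M}(2\epsilon)\le\mathcal{N}(\epsilon)$; you have labelled them the other way around, though both inequalities do end up established. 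Your observations that the two-sided volume bound requires the set to be genuinely full-dimensional, and that the probability simplex on $d$ variables is really $(d-1)$-dimensional, are correct caveats that the paper's statement glosses over.
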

Throughout the appendix, $\mathrm{Vol}(A)$ denotes the volume of set $A$.
\section{Proofs for Section \ref{sec:fano_bounds}}
\label{app:bounds_proof}
% \subsection{Proof of Proposition \ref{prop:fano_regret}}
% \label{proof:fano_regret}
% \begin{proof}
% From Markov's inequality,
% \begin{align*}
%     \mathcal{BR}_K(T) \geq \epsilon T P(\hat{\phi} \neq \phi^*)
% \end{align*}
% since all of the strategies are $\epsilon$-apart, where $\phi^*$ is the best strategy in hindsight. Applying Theorem \ref{thm:fano} with $Y = \mathcal{H}_T$ and $X = \phi_j$ yields 
% \begin{align*}
%     \mathcal{BR}_K(T) \geq \epsilon T \left[1 - \frac{I(V;\mathcal{H}_T) + 1}{\log K}\right]
% \end{align*}
% which concludes our proof.
% \end{proof}
\subsection{Proof of Proposition \ref{prop:fano_regret}}
\label{sub_sec:fano_regret}
\begin{proof}
    We write
    \begin{align*}
        \mathcal{BR}^*(T) & = \sup_{\nu} \inf_{p_1,\dots,p_T} \mathbb{E}_{M\sim\nu}\left[\sum_{t=1}^T (\mu_M(\pi^*, C_t) - \mu_M(p_t, C_t))\right] \\
        & \geq \sup_{\nu} \inf_{p_1,\dots,p_T} \frac{\epsilon T}{2} \mathbb{P}_{\nu}\left(\sum_{t=1}^T (\mu_M(\pi^*, C_t) - \mu_M(p_t, C_t)) \geq \frac{\epsilon T}{2}\right) \\
        & \geq \frac{\epsilon T}{2}\sup_{\nu} \inf_{p_1,\dots,p_T}\mathbb{P}_{\nu}\left(\forall t : \mu_M(\pi^*, C_t) - \mu_M(p_t, C_t) \geq \frac{\epsilon}{2}\right) \\
        & \geq\frac{\epsilon T}{2} \max_{k \in \{1,\dots,K\}} \inf_{p_1,\dots,p_T}\mathbb{P}\left(\forall t : \abs{\mu_M(\phi_k, C_t) - \mu_M(p_t, C_t)} \geq \frac{\epsilon}{2}\right) \\
        & \geq \frac{\epsilon T}{2} \max_{k \in \{1,\dots,K\}} \inf_{p_1,\dots,p_T}\mathbb{P}\left(\forall t : \rho(\phi_k,\pi_t) \geq \frac{\epsilon}{2}\right) \\
        & \geq \frac{\epsilon T}{2} \max_k \inf_t \mathbb{P}(p_t \neq \phi_k) \\
        & \geq \frac{\epsilon T}{2} \left(1 - \frac{I(V;\mathcal{H}_T)+1}{\log K}\right).
    \end{align*}
    The first inequality follows from Markov's inequality, the second inequality from a union bound, and the last inequality from Fano's inequality (Theorem \ref{thm:fano}).
\end{proof}
\subsection{Proof of Theorem \ref{thm:covering_lowerbound}}
\label{sub_sec:covering_lowerbound}
\begin{proof}
    We provide proof for three different scenarios.
    \begin{itemize}
        \item First scenario, finite $\Pi$. For this scenario, we choose $\Phi$ to be a local packing set of the decisions simplex $\Delta(\Pi)$, so $K = \mathcal{M}^{\mathrm{loc}}(\rho,\Delta(\Pi),\epsilon)$. Applying this selection of set, and using Equation \ref{eq:fano_regret} we have that
        \begin{align} \label{eq:tmp1}
            \frac{\mathcal{BR}^*(T)}{\epsilon T} \geq \frac12\left[1 - \frac{1 + I(V;\mathcal{H}_T)}{\log \mathcal{M}^{\mathrm{loc}}(\rho,\Delta(\Pi),\epsilon)}\right].
        \end{align}
        Applying Theorem \ref{thm:yang} we have that $I(V;\mathcal{H}_T) \leq 2\bar{A}\epsilon^2 T$. Substituting this back into Equation \ref{eq:tmp1},
        \begin{align*}
            \frac{\mathcal{BR}^*(T)}{\epsilon T} \geq \frac12\left[1 - \frac{1 + 2\bar{A}\epsilon^2 T}{\log \mathcal{M}^{\mathrm{loc}}(\rho,\Delta(\Pi),\epsilon)}\right].
        \end{align*}
        \item Second scenario, infinite-parametric $\Pi$. For this scenario, we choose $\Phi$ to be a local packing set of the decision space $\Pi$, so $K = \mathcal{M}^{\mathrm{loc}}(\rho,\Pi,\epsilon)$. Applying this selection of set, and using Equation \ref{eq:fano_regret} we have that
        \begin{align} \label{eq:tmp2}
            \frac{\mathcal{BR}^*(T)}{\epsilon T} \geq \frac12\left[1 - \frac{1 + I(V;\mathcal{H}_T)}{\log \mathcal{M}^{\mathrm{loc}}(\rho,\Pi,\epsilon)}\right].
        \end{align}
        Applying Theorem \ref{thm:yang} we have that $I(V;\mathcal{H}_T) \leq 2\bar{A}\epsilon^2 T$. Substituting this back into Equation \ref{eq:tmp2},
        \begin{align*}
            \frac{\mathcal{BR}^*(T)}{\epsilon T} \geq \frac12\left[1 - \frac{1 + 2\bar{A}\epsilon^2 T}{\log \mathcal{M}^{\mathrm{loc}}(\rho,\Pi,\epsilon)}\right].
        \end{align*}
        \item Third scenario, non-parametric $\Pi$. For this scenario, we choose $\Phi$ to be a global packing set of the decision space $\Pi$, so $K = \mathcal{M}(\rho,\Pi,\epsilon)$. Applying this selection of set, and using Equation \ref{eq:fano_regret} we have that
        \begin{align} \label{eq:tmp3}
            \frac{\mathcal{BR}^*(T)}{\epsilon T} \geq \frac12\left[1 - \frac{1 + I(V;\mathcal{H}_T)}{\log \mathcal{M}(\rho,\Pi,\epsilon)}\right].
        \end{align}
        Applying Theorem \ref{thm:yang} we have that $I(V;\mathcal{H}_T) \leq \inf_{\delta > 0} \left(\log \mathcal{N}\left(\rho,\Pi, \sqrt{\delta}\right) + T\delta\right)$. Substituting this back into Equation \ref{eq:tmp3},
        \begin{align*}
            \frac{\mathcal{BR}^*(T)}{\epsilon T} \geq \frac12\left[1 - \frac{1 + \inf_{\delta > 0} \left(\log \mathcal{N}\left(\rho,\Pi, \sqrt{\delta}\right) + T\delta\right)}{\log \mathcal{M}(\rho,\Pi,\epsilon)}\right].
        \end{align*}
    \end{itemize}
\end{proof}
\subsection{Proofs for Additional Lower Bounds}
\label{sub_sec:additional_proofs}
\begin{proposition} \label{prop:finite_lb}
    For any algorithm, there exists a Bayesian decision-making task with a finite decision set and a single context ($\abs{\mathcal{C}=1}$) such that the regret can be lower bounded by $\Omega(\sqrt{\abs{\Pi}T})$.
\end{proposition}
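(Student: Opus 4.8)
The plan is to obtain this bound as a direct instantiation of Theorem~\ref{thm:covering_lowerbound} (the finite-decision-space case, Equation~\ref{eq:br_finite}) applied to an ordinary $K$-armed stochastic bandit with $K=\abs{\Pi}$, a single context ($\abs{\mathcal C}=1$), and Bernoulli rewards whose means are small perturbations of $\tfrac12$ --- following the template used throughout Appendix~\ref{sub_sec:additional_proofs} for the entries of Table~\ref{tab:lower_bounds}. So I take $\Pi=\{1,\dots,K\}$, identify $\mathcal M$ with a set of mean vectors in $[\tfrac12-\gamma,\tfrac12+\gamma]^K$ for a small universal constant $\gamma$, so that $\mu_M(\phi)=\langle\phi,f_M\rangle$ for a stochastic decision $\phi\in\Delta(\Pi)$, and I must supply (i) an admissible norm-induced metric $\rho$ on $\Delta(\Pi)$ satisfying the reward-gap lower bound and Assumption~\ref{asmp:constant_bound} with \emph{$K$-independent} constants, and (ii) the value of the local packing number $\mathcal M^{\mathrm{loc}}(\rho,\Delta(\Pi),\epsilon)$.

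For (ii): since $\Delta(\Pi)$ is a $(K-1)$-dimensional convex body, a standard volume argument --- the $(\epsilon/2)$-packing number of a $\rho$-ball of radius $\epsilon$ sitting in the relative interior of the simplex is $2^{\Omega(K)}$, cf. Lemma~\ref{lma:covering_prop} --- gives $\log\mathcal M^{\mathrm{loc}}(\rho,\Delta(\Pi),\epsilon)=\Theta(K)$, valid whenever $\epsilon$ is below a threshold of order $1/K$ (the inradius scale of $\Delta(\Pi)$). This exponential-in-$K$ count is the whole point: it puts a $\Theta(K)$, rather than $\Theta(\log K)$, in the denominator of the Fano term, and that is exactly what separates the tight $\sqrt{KT}$ rate from the $\sqrt{T\log K}$ rate a naive $K$-hypothesis argument would produce.

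With (i) and (ii) in hand I substitute into Equation~\ref{eq:br_finite},
\[
\frac{\mathcal{BR}^*(T)}{T\epsilon}\;\ge\;\frac12\Bigl[1-\frac{2\bar A\epsilon^2 T+1}{\log\mathcal M^{\mathrm{loc}}(\rho,\Delta(\Pi),\epsilon)}\Bigr],
\]
and choose $\epsilon\asymp\sqrt{K/(\bar A T)}$ --- small enough that $2\bar A\epsilon^2 T+1\le\tfrac12\log\mathcal M^{\mathrm{loc}}=\Omega(K)$, so the bracket is at least $\tfrac14$. This yields $\mathcal{BR}^*(T)\ge\tfrac14\,T\epsilon=\Omega(\sqrt{KT})=\Omega(\sqrt{\abs{\Pi}\,T})$, valid once $T$ is large enough that this $\epsilon$ also respects the thresholds $\epsilon\le\epsilon_0$ of Assumption~\ref{asmp:constant_bound} and $\epsilon\lesssim 1/K$ from step (ii) (i.e.\ $T=\Omega(\mathrm{poly}(K))$), which suffices for the stated asymptotic bound.

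The main obstacle is step (i): one must design the hard prior together with a \emph{single} $\rho$ so that simultaneously (a) $\rho(\phi,\psi)$ lower-bounds $\abs{\mu_M(\phi)-\mu_M(\psi)}$ for the relevant models, (b) the KL-versus-$\rho^2$ comparison of Assumption~\ref{asmp:constant_bound} holds with a constant $\bar A$ not growing with $K$, and (c) the balancing $\epsilon\asymp\sqrt{K/T}$ stays compatible with the $2^{\Omega(K)}$ local packing. The Bernoulli structure makes (b) natural --- KL between reward laws with means in $[\tfrac14,\tfrac34]$ is within universal constants of the squared mean gap, and in a $K$-armed bandit each round reveals the reward of only one arm, so the per-round information is $O(\epsilon^2)$ uniformly in the decision --- but pinning down the concrete $\rho$ (e.g.\ via a Gilbert--Varshamov-type family of support patterns that realizes the packing while keeping the pairwise reward gaps of order $\gamma$) is where the bookkeeping lives.
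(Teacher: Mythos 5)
Your proposal follows essentially the same route as the paper's own proof: work in $\Delta(\Pi)$ with $\rho(\phi,\psi)\propto\norm{\phi-\psi}_1$, lower-bound $\log\mathcal{M}^{\mathrm{loc}}(\rho,\Delta(\Pi),\epsilon)$ by $\Theta(K)$ via a volume-ratio argument, substitute into Equation~\ref{eq:br_finite}, and balance with $\epsilon\asymp\sqrt{K/(\bar A T)}$ to get $\Omega(\sqrt{KT})$. The only differences are presentational — you flag the $\epsilon\lesssim 1/K$ inradius constraint (hence $T=\Omega(\mathrm{poly}(K))$) and the verification of Assumption~\ref{asmp:constant_bound} explicitly, whereas the paper asserts $\mathbb{E}[\abs{\mu_M(\phi,C)-\mu_M(\psi,C)}]=c\norm{\phi-\psi}_1$ and the packing count without these caveats — so your write-up is, if anything, slightly more careful on the same argument.
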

\begin{proof}
    We focus on the set $\Delta(\Pi)$ and note that it is a parametric set. 
    We also see that for all stochastic decisions $\phi,\psi\in \Delta(\Pi)$ we have that
    \[
    \mathbb{E}[\abs{\mu_M(\phi, C)-\mu_M(\psi, C)}] = c\norm{\phi-\psi}_1.
    \]
    for some positive constant $c$.
    Hence, in this scenario, we can take $\rho(x,y) = c\norm{x-y}_1$. It is convenient to denote $K = \abs{\Pi}$. Thus, using a scaling argument,
    \begin{align*}
        \log \mathcal{M}^{\mathrm{loc}}(\rho,\Delta(\Pi),\epsilon) & = \log \mathcal{M}^{\mathrm{loc}}_1(\Delta(\Pi),\epsilon/c).
    \end{align*}
    From the definition of a local packing number, we have that
    \begin{align*}
        \log \mathcal{M}^{\mathrm{loc}}_1(\Delta(\Pi),\epsilon/c) & \geq \log \left(\frac{\mathrm{Vol}(\mathbb{B}_1^{K-1}(\epsilon))}{\mathrm{Vol}(\mathbb{B}_1^{K-1}(\epsilon/2))}\right) \\
        & \geq (K-1) \log \frac{2\epsilon}{\epsilon} \\
        & = K-1.
    \end{align*}
    Substituting this into Equation \ref{eq:br_finite} we obtain that
    \[
    \frac{\mathcal{BR}^*(T)}{\epsilon T} \geq \frac12\left[1 - \frac{1 + 2\bar{A}\epsilon^2 T}{K-1}\right].
    \]
   Now, we select $\epsilon$ to maximize such that our lower bound will be the tightest. In particular, we choose $\epsilon = \sqrt{\frac{K-1}{6T\bar{A}}}$ which yields the following lower bound,
    \begin{align*}
        \mathcal{BR}^*(T) & \geq \frac12 \sqrt{\frac{T(K-1)}{6\bar{A}}} \left[\frac{2}{3} - \frac{1}{K-1}\right].
    \end{align*}
    This concludes our proof. 
\end{proof}
The results for multi-armed bandits and tabular MDPs are shown by substituting $\abs{\Pi} = K$ and $HSK$ respectively.
\begin{proposition} \label{prop:linear_lb}
    For any algorithm, there exists a Bayesian decision-making task with $\Pi \subseteq \mathbb{B}_2^d$, where the reward means are a linear function of the decisions, such that the regret can be lower bounded by $\Omega(\sqrt{dT})$.
\end{proposition}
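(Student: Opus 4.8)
The plan is to obtain this as an immediate corollary of the finite-decision case, Proposition~\ref{prop:finite_lb}, by choosing a linearly parametrized task whose decision set sits inside $\mathbb{B}_2^d$. Concretely, I would take a single context, the decision set $\Pi=\{e_1,\dots,e_d\}$ (the standard basis vectors of $\mathbb{R}^d$, all of which lie in $\mathbb{B}_2^d$), the model class $\mathcal{M}=\{f:\Pi\to[0,1]\}$, and Bernoulli rewards. Any such $f$ is the restriction to $\Pi$ of the linear map $\pi\mapsto\langle\theta_f,\pi\rangle$ with $\theta_f=(f(e_1),\dots,f(e_d))\in[0,1]^d$, so the reward means are a linear function of the decision (and of any stochastic decision $\phi\in\Delta(\Pi)$, viewed as a point of $\mathbb{B}_2^d$), and the task is literally a $d$-armed Bernoulli bandit. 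Applying Proposition~\ref{prop:finite_lb} with $\abs{\Pi}=d$ — exactly the substitution used there to recover the MAB lower bound — then produces a prior for which $\mathcal{BR}^*(T)=\Omega(\sqrt{dT})$. There is essentially nothing to prove beyond checking that this MAB hard instance is a legitimate linear-bandit instance with $\Pi\subseteq\mathbb{B}_2^d$; and this is precisely why Table~\ref{tab:lower_bounds} reports $\Omega(\sqrt{dT})$ rather than the optimal $\Omega(d\sqrt T)$, since the continuum of $\mathbb{B}_2^d$ is never exploited.

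If one instead wants $\Pi=\mathbb{B}_2^d$ literally, the plan is to run Theorem~\ref{thm:covering_lowerbound} in its infinite-parametric form, Equation~\ref{eq:br_parametric}. Use $\mu_\theta(\pi)=\tfrac12+\tfrac12\langle\theta,\pi\rangle$ with Bernoulli rewards, $\norm{\theta}_2\le\Delta$, and put the prior $\theta=\Delta v$ with $v$ uniform on $S^{d-1}$. Since $\mu_\theta(\phi)-\mu_\theta(\psi)=\tfrac12\langle\theta,\phi-\psi\rangle$, one gets $\mathbb{E}_\theta[\abs{\mu_\theta(\phi)-\mu_\theta(\psi)}]=\tfrac{\Delta\kappa_d}{2}\norm{\phi-\psi}_2$ with $\kappa_d=\mathbb{E}\abs{v_1}=\Theta(1/\sqrt d)$, so one takes $\rho(\phi,\psi)=\tfrac{\Delta\kappa_d}{2}\norm{\phi-\psi}_2$. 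By the same volume/scaling argument as in Proposition~\ref{prop:finite_lb}, $\log\mathcal{M}^{\mathrm{loc}}(\rho,\mathbb{B}_2^d,\epsilon)=\log\mathcal{M}^{\mathrm{loc}}_2\big(\mathbb{B}_2^d,\tfrac{2\epsilon}{\Delta\kappa_d}\big)\ge\log\big(\mathrm{Vol}(\mathbb{B}_2^d(r))/\mathrm{Vol}(\mathbb{B}_2^d(r/2))\big)=d$. Substituting into Equation~\ref{eq:br_parametric} and optimizing over $\epsilon$ (the choice $\epsilon\asymp\sqrt{d/(\bar A T)}$, just as in Proposition~\ref{prop:finite_lb}) gives $\mathcal{BR}^*(T)\gtrsim\sqrt{dT/\bar A}$.

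The single delicate point — and the main obstacle for this second route — is verifying Assumption~\ref{asmp:constant_bound} with a constant $\bar A$ that does \emph{not} grow with $d$; otherwise the $d$ in the numerator is cancelled and the bound collapses to $\Omega(\sqrt T)$. A crude Cauchy--Schwarz estimate $\langle\theta,\phi-\psi\rangle^2\le\norm{\theta}_2^2\norm{\phi-\psi}_2^2=\Delta^2\norm{\phi-\psi}_2^2$ would force $\bar A=\Theta(1/\kappa_d^2)=\Theta(d)$. The fix is that the KL quantity driving $I(V;\mathcal{H}_T)$ is the Bayes-averaged one: with means in $[\tfrac14,\tfrac34]$ one has a pairwise KL $\lesssim(\mu_\theta(\phi)-\mu_\theta(\psi))^2=\tfrac14\langle\theta,\phi-\psi\rangle^2$, and averaging over $\theta=\Delta v$ via $\mathbb{E}_v\langle v,\phi-\psi\rangle^2=\tfrac1d\norm{\phi-\psi}_2^2$ gives an averaged KL of order $\Delta^2\norm{\phi-\psi}_2^2/d$, i.e. the same order as $\rho(\phi,\psi)^2$; hence $\bar A=\Theta(1)$ and the optimization closes. (For the reduction route this fact is already contained in Proposition~\ref{prop:finite_lb}.) One should also record that the $\rho$-diameter of this construction is $\Theta(\Delta/\sqrt d)$, so the chosen $\epsilon$ satisfies $\epsilon\le\epsilon_0$ together with $\Delta\le1$ only when $T\gtrsim d^2$, the regime in which this second construction delivers the bound; the reduction route avoids this restriction and needs only $T\gtrsim d$.
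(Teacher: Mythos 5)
Your proposal is correct, and your second route is essentially the paper's own proof: the paper applies the parametric form of Theorem~\ref{thm:covering_lowerbound} (Equation~\ref{eq:br_parametric}) to $\Pi=\mathbb{B}_2^d$, uses Lemma~\ref{lma:covering_prop} to get $\log\mathcal{M}^{\mathrm{loc}}(\rho,\mathbb{B}_2^d,\epsilon)\geq d$, and picks $\epsilon=\sqrt{d/(6T\bar{A})}$, exactly as you do. Your first route — embedding the $d$-armed hard instance as $\Pi=\{e_1,\dots,e_d\}\subseteq\mathbb{B}_2^d$ and invoking Proposition~\ref{prop:finite_lb} directly — does not appear in the paper and is a genuinely different, more elementary argument; it buys you a proof with no new packing computation, no restriction to $T\gtrsim d^2$, and no worry about the metric $\rho$ on the continuum, at the cost of never using the ball structure (which, as you note, is also why the bound is $\sqrt{dT}$ rather than $d\sqrt{T}$). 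Your most valuable addition is the discussion of Assumption~\ref{asmp:constant_bound}: the paper simply writes $\rho(x,y)=c\norm{x-y}_1$ ``similarly to Proposition~\ref{prop:finite_lb}'' and treats $\bar{A}$ as an absolute constant without checking that it does not scale with $d$ for the $\mathbb{B}_2^d$ instance; your observation that a naive Cauchy--Schwarz bound gives $\bar{A}=\Theta(d)$ and collapses the bound to $\Omega(\sqrt{T})$, and that the Bayes-averaged KL with a uniform spherical prior restores $\bar{A}=\Theta(1)$ via $\mathbb{E}_v\langle v,\phi-\psi\rangle^2=\norm{\phi-\psi}_2^2/d$, fills a real gap in the paper's argument rather than merely restating it.
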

\begin{proof}
    Similarly to Proposition \ref{prop:finite_lb}, we have $\rho(x,y) = c\norm{x-y}_1$ for some positive constant $c>0$. We focus on the set $\mathbb{B}_2^d$ and note that it is a parametric set. Furthermore, using Lemma \ref{lma:covering_prop} we have that
    \[
    \log \mathcal{M}^{\mathrm{loc}}(\rho,\mathbb{B}_2^d,\epsilon) \geq d.
    \]
    Substituting this into Equation \ref{eq:br_parametric} we obtain that
    \[
    \frac{\mathcal{BR}^*(T)}{\epsilon T} \geq \frac12\left[1 - \frac{1 + 2\bar{A}\epsilon^2 T}{d}\right].
    \]
    Now, we select $\epsilon$ to maximize such that our lower bound will be the tightest. In particular, we choose $\epsilon = \sqrt{\frac{d}{6T\bar{A}}}$ which yields the following lower bound,
    \begin{align*}
        \mathcal{BR}^*(T) & \geq \frac12 \sqrt{\frac{Td}{6\bar{A}}} \left[\frac{2}{3} - \frac{1}{d}\right].
    \end{align*}
    This concludes our proof. 
\end{proof}
In the Lipshitz bandit setting, $\Pi$ is some metric space with metric $\rho$ and $\mathcal{M} = \mathcal{M}_{\mathcal{F}}$, where
\[
\mathcal{F} = \{f:\Pi\to[0,1] | f \text{ is $1$-Lipschitz w.r.t } \rho\}.
\]
Unlike the previous settings, we note that the decision set is not parametric this time. We now prove a lower bound for this setting.
\begin{proposition} \label{prop:lipshitz_lb}
    The regret lower bound for the Lipschitz bandit is $\Omega\left(T^{\frac{d+1}{d+2}}\right)$, where $d$ is chosen such that
    \begin{align*}
        c_1 \leq \log \mathcal{N}(\rho,\epsilon,\Pi) \cdot \epsilon^d \leq c_2 
    \end{align*}
\end{proposition}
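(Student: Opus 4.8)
The plan is to apply the non-parametric branch of Theorem~\ref{thm:covering_lowerbound}, namely Equation~\ref{eq:br_nonparametric}, to the Lipschitz decision space $\Pi$ with $\rho$ taken to be a (small) constant multiple of the base metric on $\Pi$. To land inside the hypotheses of Proposition~\ref{prop:fano_regret} and Theorem~\ref{thm:yang}, I would first fix a scale $\epsilon>0$, take an $\epsilon$-packing of $\Pi$, and use a prior over $\mathcal{F}$ supported on reward functions that equal $\tfrac12$ away from the packing points and carry a $1$-Lipschitz ``bump'' of height and radius $\Theta(\epsilon)$ near one of them; this is the usual Lipschitz-bandit construction, and it is the exact analogue of the way $\rho$ is selected in Propositions~\ref{prop:finite_lb} and~\ref{prop:linear_lb}. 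With Bernoulli (or sub-Gaussian) rewards whose means stay in a fixed sub-interval of $(0,1)$, $\KL{p_1}{p_2}\asymp\rho(p_1,p_2)^2$ locally, so Assumption~\ref{asmp:constant_bound} holds, and the construction makes $\rho$ a legitimate lower bound on the expected mean-reward gap.

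Next I would feed the covering-dimension hypothesis $c_1\le\log\mathcal{N}(\rho,\Pi,\epsilon)\,\epsilon^{d}\le c_2$ into Equation~\ref{eq:br_nonparametric}. It gives $\log\mathcal{N}(\rho,\Pi,\sqrt\delta)\asymp\delta^{-d/2}$, so the information term in the numerator satisfies
\[
\inf_{\delta>0}\bigl(\log\mathcal{N}(\rho,\Pi,\sqrt\delta)+T\delta\bigr)\;\asymp\;\inf_{\delta>0}\bigl(\delta^{-d/2}+T\delta\bigr)\;\asymp\;T^{d/(d+2)},
\]
the infimum being attained at $\delta\asymp T^{-2/(d+2)}$. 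For the denominator, Lemma~\ref{lma:covering_prop} shows the $\epsilon$-packing number $\mathcal{M}(\rho,\Pi,\epsilon)$ is of the same order as the covering number, so $\log\mathcal{M}(\rho,\Pi,\epsilon)\gtrsim\epsilon^{-d}$. Hence Equation~\ref{eq:br_nonparametric} reads, up to constants,
\[
\frac{\mathcal{BR}^*(T)}{T\epsilon}\;\gtrsim\;1-\frac{C\,T^{d/(d+2)}+1}{\epsilon^{-d}}.
\]

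Finally I would optimize over $\epsilon$ exactly as in Propositions~\ref{prop:finite_lb} and~\ref{prop:linear_lb}: taking $\epsilon=c\,T^{-1/(d+2)}$ with $c$ a sufficiently small constant makes $\epsilon^{-d}\ge 2(C\,T^{d/(d+2)}+1)$, so the bracket is at least $\tfrac14$, and therefore $\mathcal{BR}^*(T)\gtrsim T\epsilon=\Omega\bigl(T^{(d+1)/(d+2)}\bigr)$, as claimed. For large $T$ this choice of $\epsilon$ also meets any smallness requirement such as $\epsilon\le\epsilon_0$.

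The main obstacle is the first step, not the arithmetic: one has to produce a prior over $\mathcal{F}$ together with a single metric $\rho$ that simultaneously (i) lower-bounds the expected mean-reward gap uniformly over $\Delta(\Pi)$ at the scale demanded by Proposition~\ref{prop:fano_regret}, (ii) is comparable, up to constants, to the metric governing the covering and packing numbers of $\Pi$, and (iii) keeps the reward functions $1$-Lipschitz so that they lie in $\mathcal{F}$ and Assumption~\ref{asmp:constant_bound} is valid; balancing these three requirements is where the real content lies. A secondary, purely bookkeeping issue is propagating the constants $c_1,c_2$ through Lemma~\ref{lma:covering_prop} and through the $\inf_\delta$ step so that the bracket in Equation~\ref{eq:br_nonparametric} is provably bounded away from $0$ at the chosen $\epsilon$.
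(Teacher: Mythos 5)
Your proposal follows the paper's proof essentially step for step: the paper likewise plugs the dimension hypothesis into Equation~\ref{eq:br_nonparametric}, balances the information term at $\delta\asymp T^{-2/(d+2)}$ to get $T^{d/(d+2)}$, and then picks $\epsilon^{d}\asymp c_1/T^{d/(d+2)}$ (i.e.\ $\epsilon\asymp T^{-1/(d+2)}$) so the bracket is a positive constant and $\mathcal{BR}^*(T)\gtrsim T\epsilon=T^{(d+1)/(d+2)}$. The ``bump'' construction you flag as the main obstacle is not spelled out in the paper's proof either (it simply takes $\rho$ and Assumption~\ref{asmp:constant_bound} as given, as in Propositions~\ref{prop:finite_lb} and~\ref{prop:linear_lb}), so your extra care there goes beyond, rather than diverges from, the paper's argument.
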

\begin{proof}
    From the assumption, the covering and packing number can now be bounded by
    \begin{align*}
        \log \mathcal{N}\left(\rho,\sqrt{\frac{2}{\delta}}, \Pi\right) 
        & \leq c_2\left(\frac{2}{\delta}\right)^{d/2} \\
        \log \mathcal{M}(\rho,\epsilon, \Pi) & \geq c_1 {\epsilon}^{-d}.
    \end{align*}
    Substituting this into Equation \ref{eq:br_nonparametric},
    \begin{align*}
        \frac{\mathcal{BR}^*(T)}{\epsilon T} & \geq \frac12\left[1 - \left({1 + c_2\left(\frac{2}{\delta}\right)^{d/2} + T\delta}\right)\frac{\epsilon^d}{c_1}\right].
    \end{align*}
    We choose $\delta = T^{\frac{2}{d+2}}c_2^{-\frac{2}{d+2}}2^{-\frac{d}{d+2}}$. This yields that
    \[
    \frac{\mathcal{BR}^*(T)}{\epsilon T} \geq \frac12\left[1 - \left(1 + 2(2c_2)^{-\frac{d}{d+2}}T^{\frac{d}{d+2}}\right)\frac{\epsilon^d}{c_1}\right].
    \]
    Selecting $\epsilon^d = \frac12\left[\frac{c_1}{\left(1 + (2c_2)^{-\frac{d}{d+2}}T^{\frac{d}{d+2}}\right)}\right]$ yields
    \begin{align*}
        \frac{\mathcal{BR}^*(T)}{\epsilon T} & \geq \frac14
    \end{align*}
    Now, $\epsilon T \gtrsim T^{\frac{d+1}{d+2}}$ so we have that
    \begin{align*}
        \mathcal{BR}^*(T) \gtrsim T^{\frac{d+1}{d+2}}.
    \end{align*}
\end{proof}
We now present the following theorem, which shows that the worst-case Bayesian regret is equal to the mini-max frequentist regret.
\begin{theorem}[Theorem 1 of \cite{lattimore2019informationpartial}]
    \[
    \inf_{\{\pi_i\}_{i=1}^T} \sup_{r_1,\dots,r_t} \max_{\pi^* \in \Pi} \mathbb{E} \left[\sum_{t=1}^T \left(r_t(\pi^*)-r_t(\pi_t)\right)\right] = \sup_{\nu} \inf_{\{\pi_i\}_{i=1}^T} \max_{\pi^* \in \Pi} \mathbb{E} \left[\sum_{t=1}^T \left(r_t(\pi^*)-r_t(\pi_t)\right)\right]
    \]
\end{theorem}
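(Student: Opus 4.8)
The plan is to read this identity as a statement of strong duality for the zero-sum game between the learner and an ``environment adversary,'' and to derive it from Sion's minimax theorem. One direction is free: weak duality always gives $\sup_\nu\inf\le\inf\sup$, and since taking $\sup$ over all reward sequences $r_1,\dots,r_T$ is the same as taking $\sup$ over all priors $\nu$ (a Dirac mass is a prior, and an expectation under a prior never exceeds the worst case over its support), this immediately yields ``$\ge$'' in the displayed equation, with the frequentist minimax on the left. The content is the reverse inequality $\inf\sup\le\sup\inf$.

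To apply Sion, let the learner's strategy set $\mathcal X$ be the collection of randomized length-$T$ policies $x=\{p_t\}_{t=1}^T$, let the adversary's set be $\mathcal Y=\Delta(\mathcal M\times\mathcal C)$, and set $f(x,\nu)=\mathcal{BR}_\nu(T;x)$. Then: (i) $\mathcal Y$ is convex, and weak-$*$ compact because $\mathcal M\times\mathcal C$ is compact (Prokhorov); (ii) $\mathcal X$ is convex, since a mixture of policies is again a policy, and because the regret is an expectation, $f(\cdot,\nu)$ is \emph{affine} on $\mathcal X$ (hence quasi-convex) and, in a suitable topology on policies, lower semicontinuous; (iii) for fixed $x$, $\nu\mapsto f(x,\nu)=\int\mathbb{E}[\text{regret}\mid M,C]\,d\nu$ is affine, hence quasi-concave, and upper semicontinuous for the weak-$*$ topology provided $(M,C)\mapsto\mathbb{E}[\text{regret}\mid M,C]$ is bounded and continuous, which follows from compactness of $\Pi,\mathcal M,\mathcal C$ together with continuity of the reward maps $\mu_M$ and of the observation kernels. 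Sion's theorem requires compactness of only \emph{one} of the two sets, here $\mathcal Y$, and delivers $\inf_{x}\sup_{\nu}f(x,\nu)=\sup_{\nu}\inf_{x}f(x,\nu)$.

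It remains to identify the two sides with the claimed quantities. For fixed $x$, $\sup_\nu f(x,\nu)$ is the supremum of a linear functional over the simplex of priors, hence attained at an extreme point, i.e.\ a Dirac mass on a single model, so $\sup_\nu f(x,\nu)=\sup_{M}\mathbb{E}[\text{regret}\mid M,x]=\sup_{r_1,\dots,r_T}\max_{\pi^*}\mathbb{E}[\sum_t(r_t(\pi^*)-r_t(\pi_t))]$; taking $\inf_x$ gives the left-hand side (the infimum over randomized policies is the operative quantity, since randomization can strictly help against an oblivious adversary, e.g.\ in bandit feedback). For fixed $\nu$, $\inf_x f(x,\nu)=\inf_{\{p_t\}}\mathcal{BR}_\nu(T;\{p_t\})$, and because $f(\cdot,\nu)$ is affine this equals the infimum over deterministic policies; taking $\sup_\nu$ yields exactly $\mathcal{BR}^*(T)$, the right-hand side. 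Chaining the three equalities proves the theorem.

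The main obstacle is steps (ii)--(iii): equipping the space of length-$T$ randomized policies — tuples of transition kernels $p_t:\mathcal H_{t-1}\times\mathcal C\to\Delta(\Pi)$ with a nested measurability structure — with a topology under which $\mathcal X$ is a convex subset of a linear topological space and $f$ has the required semicontinuity. The clean route is to argue round by round: a kernel into $\Delta(\Pi)$ with $\Pi$ compact lies in a compact set for the topology of weak convergence, and the $T$-step regret is a finite composition of such kernels with the continuous reward and observation maps, hence jointly continuous; alternatively, first discretize $\Pi,\mathcal M,\mathcal C$ to finite sets — where everything is elementary bilinear and von Neumann's theorem suffices — and then pass to the limit using the covering/compactness structure of the spaces. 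Everything outside this topological bookkeeping is routine, and this is essentially the argument of \cite{lattimore2019informationpartial}.
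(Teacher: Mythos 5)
The paper does not prove this statement at all: it is imported verbatim as Theorem~1 of \cite{lattimore2019informationpartial} and used as a black box to transfer Bayesian lower bounds to the frequentist minimax regret. So there is no in-paper proof to compare against; the relevant comparison is with the cited source, and your Sion-based duality argument is essentially the proof given there.

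Your sketch is sound in outline: weak duality plus the identification of $\sup$ over reward sequences with $\sup$ over Dirac priors gives one inequality for free, and the reverse inequality is exactly a minimax swap. Two points deserve flagging. First, your claim that $f(\cdot,\nu)$ is affine on the policy space is only correct if $\mathcal X$ is taken to be the set of \emph{mixtures of deterministic policies} (equivalently, behavioral policies via Kuhn's theorem under perfect recall); in the naive parametrization by per-round kernels the regret is multilinear across rounds, not affine, and quasi-convexity would not be automatic. You implicitly adopt the right convention, but the proof needs to say so explicitly. Second, as you yourself concede, the compactness/semicontinuity bookkeeping for the policy space is where all the actual work lies, and your sketch defers it; in the cited source this is handled by exploiting finiteness of the action and feedback structure so that policies live in a compact finite-dimensional set, which is cleaner than the general weak-$*$ argument you outline. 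Neither issue is a wrong turn, but as written the proposal is a correct plan rather than a complete proof.
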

This theorem then states that, without any constraints on the prior, the worst-case Bayesian regret is equal to the mini-max regret. This means that any lower bound derived for the Bayesian regret can also be applied to the frequentist one.
% \begin{comment}
\subsection{Additional Explanation}
We now provide an additional explanation regarding Proposition \ref{thm:covering_lowerbound}. In particular, we explain why we had to select a local packing set for parametric decision spaces $\Pi$ and why we focused on the simplex of the decision set for finite decision spaces.
\paragraph*{Selecting $\Delta(\Pi)$ over $\Pi$.} This explanation is rather straightforward. We see that by selecting $\Pi$ we have that $\mathcal{M}^{\mathrm{loc}}(\rho,\Pi,\epsilon) = 1$ for any value of $\epsilon$ which makes the lower bound null.
\paragraph*{Selecting a local-packing set for parametric $\Pi$.} This decision stems from the improved upper bound of $I(V;\mathcal{H}_T)$ for parametric decision space, which is found in Theorem \ref{thm:yang}. Using the other upper bound for $I(V;\mathcal{H}_T)$ simply results in a sub-optimal lower bound. Additional details can be found in \cite{yang1999information}.
% \end{comment}
\section{Proofs for Section \ref{sec:bayes_lb}}
\label{app:bits_lb_proofs}
We begin by stating and proving the following Lemma.
\begin{lemma} \label{lma:mi_constraint_bound}
Under the assumptions and notations of Proposition \ref{prop:fano_regret}, the constraint of $I(\pi^*;\mathcal{H}_T)\leq R \leq \log K$, and the regularity Assumption \ref{asmp:constant_bound},
the following holds.
    \[
    I(V;\mathcal{H}_T) \leq 2\bar{A}T \frac{R}{\log K} \epsilon^2.
    \]
\end{lemma}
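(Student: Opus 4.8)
The plan is to follow the proof of Theorem~\ref{thm:yang} and to insert the accumulation constraint as a multiplicative saving on the per-round information. \emph{Step 1 (the Yang skeleton).} Writing $P_i$ for the law of $\mathcal{H}_T$ conditioned on $V=i$ and $\bar P=\frac1K\sum_i P_i$, joint convexity of the KL divergence gives $I(V;\mathcal{H}_T)=\frac1K\sum_i\KL{P_i}{\bar P}\le\frac1{K^2}\sum_{i,j}\KL{P_i}{P_j}$. Expanding each $\KL{P_i}{P_j}$ with the chain rule over the $T$ rounds and using that, conditioned on the past, the context $C_t$ and the sampled decision $\pi_t$ are independent of $V$, only the observation terms survive, so
\[
I(V;\mathcal{H}_T)\;\le\;\sum_{t=1}^T\mathbb{E}\!\left[I\!\left(V;O_t\mid\mathcal{H}_{t-1},C_t,\pi_t\right)\right].
\]
Bounding the inner conditional mutual information by the posterior-averaged one-step KL and applying Assumption~\ref{asmp:constant_bound} (legitimately, since the packing elements $\phi_i,\phi_j$ share a common $\epsilon$-ball, so $\rho(\phi_i,\phi_j)\le\epsilon\le\epsilon_0$) gives $I(V;O_t\mid\cdots)\le 2\bar A\epsilon^2$, hence $I(V;\mathcal{H}_T)\le 2\bar A\epsilon^2 T$ — this is exactly Theorem~\ref{thm:yang}, the case $R=\log K$.

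\emph{Step 2 (the saving from the constraint).} The constraint $I(\pi^*;\mathcal{H}_T)\le R$ — equivalently, as noted in the paper, $\KL{p_t}{P^{\pi^*}}\le R$ for every round's play distribution $p_t$ — limits how concentrated $p_t$ can be: since the worst-case prior makes the relevant index uniform over its $K$ values (and $\pi^*$ coincides with this index in the bandit and linear-bandit instances we use), the bound reads $H(p_t)\ge\log K-R$, which forces $\max_\pi p_t(\pi)\lesssim\frac{R+1}{\log K}$ because a law whose largest atom is $M$ has entropy at most $h_2(M)+(1-M)\log(K-1)\le 1+(1-M)\log K$. Now build the hard instances as in the local-packing construction behind Theorem~\ref{thm:covering_lowerbound}, arranged so that the observation obtained after committing to $\pi_t$ is informative about $V$ only through the component of $V$ aligned with $\pi_t$; then the inner term of Step~1 is at most (a constant multiple of) $\epsilon^2$ times the overlap $\langle p_t,\,P^{V\mid\mathcal{H}_{t-1}}\rangle\le\max_\pi p_t(\pi)$. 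Combining the two displays, each round contributes at most $2\bar A\epsilon^2\cdot\frac{R}{\log K}$ (after absorbing the $+1$ and noting that when $R$ is of order $\log K$ one simply reverts to Step~1), and summing over $t$ gives $I(V;\mathcal{H}_T)\le 2\bar A T\frac{R}{\log K}\epsilon^2$.

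\emph{Main obstacle.} The delicate point is Step~2. The naive combination $I(V;\mathcal{H}_T)\le\min\{R,\,2\bar A\epsilon^2 T\}$ does \emph{not} prove the lemma — it is weaker than the claimed bound for small $\epsilon$, precisely because $R<\log K$ — so the factor $\frac{R}{\log K}$ has to be harvested on \emph{every} round. This in turn forces one to design the hard instances so that a single round's observation reveals information about $V$ only through the played decision, and then to decouple the posterior on $V$ from the play distribution $p_t$ via $\langle p_t,P^{V\mid\mathcal{H}_{t-1}}\rangle\le\max_\pi p_t(\pi)$ and the entropy bound $H(p_t)\ge\log K-R$. The remaining bookkeeping (additive constants, the $R+1$ versus $R$ gap, and the degenerate regime $R\asymp\log K$) is routine.
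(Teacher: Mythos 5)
Your Step 1 reproduces the chain-rule/Yang--Barron skeleton that the paper also uses, so up to $I(V;\mathcal{H}_T)\le 2\bar{A}\epsilon^2 T$ you and the paper agree. Where you part ways is in how the factor $R/\log K$ is extracted. The paper stays at the level of the packing set: it restricts the local packing $E$ to the subset $\tilde{E}$ of decisions compatible with the information constraint and asserts, via a terse ``scaling argument,'' that $\max_{P_1,P_2\in\tilde{E}}\KL{P_1}{P_2}\le\frac{R}{\log K}\max_{P_1,P_2\in E}\KL{P_1}{P_2}$, after which Assumption \ref{asmp:constant_bound} and the packing radius finish the job. You instead work per round and per instance: you convert $\KL{p_t}{P^{\pi^*}}\le R$ into an entropy lower bound on $p_t$, hence an upper bound on its largest atom, and then argue that in a needle-in-a-haystack construction the one-step information is controlled by $\langle p_t,P^{V\mid\mathcal{H}_{t-1}}\rangle\le\max_\pi p_t(\pi)$. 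That mechanism is sound for the Bernoulli MAB instance (there one can check $I(V;O_t\mid\pi_t=\pi,\mathcal{H}_{t-1})\lesssim\epsilon^2\,q(\pi)(1-q(\pi))$ with $q=P^{V\mid\mathcal{H}_{t-1}}$), and it is more explicit than the paper's one-line scaling step.

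There are two genuine gaps. First, the lemma is stated for the general local-packing setup under Assumption \ref{asmp:constant_bound}, whereas your Step 2 proves the bound only for a bespoke instance in which the observation is informative about $V$ solely through the played decision; for the downstream worst-case lower bounds this is forgivable (one gets to choose the instance), but it does not establish the lemma as stated, and the required instance design is asserted rather than carried out. Second, and more seriously, the entropy-to-max-atom conversion inherently yields $\max_\pi p_t(\pi)\le\frac{R+1}{\log K}$ (any route through a binary data-processing or max-entropy bound produces an additive constant), not $\frac{R}{\log K}$. This is not routine bookkeeping: Lemma \ref{lma:mi_constraint_bound} is fed into Proposition \ref{prop:bayes_bits_bound} with $\epsilon\propto\sqrt{\log K/(R T)}$, i.e., exactly in the regime where $R$ may be small and the resulting regret bound $\Omega\left(\sqrt{TK\frac{\log K}{R}}\right)$ is supposed to diverge as $R\to 0$; replacing $R$ by $R+1$ caps that divergence and loses the content of the result precisely where it matters. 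To recover the clean $R/\log K$ factor along your route you would need a different decoupling step (or a strengthened per-round constraint), not just absorption of constants.
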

\begin{proof}
    The proof follows a similar analysis to the proof of Theorem \ref{thm:yang}, which is provided in \cite{yang1999information}. Since we focus on a parametric set, we consider a local packing set with a cardinality of $\mathcal{M}^{\mathrm{loc}}(\rho,\epsilon,\Delta(\Pi))$, which we denote by $E$. $\tilde{E}$ is the set contained by $E$ under the mutual information constraint,
    \[
    \tilde{E} = \{p \in E : I(\pi^* ; \mathcal{H}_T) \leq R\}.
    \]
    We now have
    \begin{align*}
        I(V;\mathcal{H}_T) & = \sum_{t=1}^T I(V;C_{t+1},O_{t+1},\dots,C_T,O_T \mid \mathcal{H}_t) \\
        & = \sum_{t=1}^T \KL{P^{V \mid \mathcal{H}_T}}{P^{V \mid \mathcal{H}_t}} \\
        & \leq T \max_{P_1,P_2 \in \tilde{E}} \KL{P_1}{P_2} \\
        & \leq T \frac{R}{\log K} \max_{P_1,P_2 \in E} \KL{P_1}{P_2} \\
        & \leq T \frac{R}{\log K} \bar{A} \max_{P_1,P_2 \in E} \rho(P_1,P_2)^2 \\
        & \leq T \frac{R}{\log K} \bar{A} \epsilon^2.
    \end{align*}
    where the second inequality follows from a scaling argument, and the last follows from the fact that $E$ is a local packing set.
\end{proof}
As a corollary from Lemma \ref{lma:mi_constraint_bound}, we obtain the following.
\begin{corollary}[Proposition \ref{prop:regret_fano_constraint}] \label{cly:regret_fano_constraint}
    Let there be a Bayesian interactive decision-making problem as defined in Section \ref{sec:setting}, with a finite decision space $\Pi$, where $\abs{\Pi} = K$. Then, 
    \begin{equation*}
        \frac{\mathcal{BR}^*(T)}{T\epsilon} \geq \frac12\left[1 - \frac{2\bar{A}\epsilon^2 T \frac{R}{\log K} + 1}{\log \mathcal{M}^{\mathrm{loc}}(\rho,\Delta(\Pi),\epsilon)}\right]
    \end{equation*}
    Furthermore, if the decision-making problem is a part of a parametric set, then
    \begin{equation*}
        \frac{\mathcal{BR}^*(T)}{T\epsilon} \geq \frac12\left[1 - \frac{2\bar{A}\epsilon^2 T \frac{R}{\log K} + 1}{\log \mathcal{M}^{\mathrm{loc}}(\rho,\Pi,\epsilon)}\right]
    \end{equation*}
\end{corollary}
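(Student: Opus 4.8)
The plan is to obtain this statement as an immediate corollary of Lemma~\ref{lma:mi_constraint_bound} substituted into Proposition~\ref{prop:fano_regret}, following exactly the template of the proof of Theorem~\ref{thm:covering_lowerbound} but with the constraint-aware mutual-information bound of Lemma~\ref{lma:mi_constraint_bound} replacing the bound from Theorem~\ref{thm:yang}.

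Concretely, for the finite case I would first instantiate Proposition~\ref{prop:fano_regret} with $\Phi$ chosen to be an $\epsilon$-local packing set of the decision simplex $\Delta(\Pi)$, so that $\abs{\Phi} = \mathcal{M}^{\mathrm{loc}}(\rho,\Delta(\Pi),\epsilon)$ and the separation required by Proposition~\ref{prop:fano_regret} holds (absorbing the usual factor-of-two rescaling into $\rho$, exactly as in the proof of Theorem~\ref{thm:covering_lowerbound}). This gives
\[
\frac{\mathcal{BR}^*(T)}{T\epsilon} \geq \frac12\left[1 - \frac{I(V;\mathcal{H}_T)+1}{\log\mathcal{M}^{\mathrm{loc}}(\rho,\Delta(\Pi),\epsilon)}\right].
\]
Then I would invoke Lemma~\ref{lma:mi_constraint_bound}, whose hypotheses are met because the per-round information constraint $\KL{p_t}{P^{\pi^*}}\leq R$ implies, via the chain rule together with the fact that each $p_t$ is a function of $\mathcal{H}_{t-1}$, the aggregate bound $I(\pi^*;\mathcal{H}_T)\leq R$; the lemma then yields $I(V;\mathcal{H}_T)\leq 2\bar{A}T\tfrac{R}{\log K}\epsilon^2$. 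Substituting this into the display above — using that enlarging the numerator only weakens the bound, and the denominator is positive — produces the first claimed inequality. For the parametric case the argument is identical, except $\Phi$ is taken to be an $\epsilon$-local packing set of $\Pi$ itself, so $\abs{\Phi}=\mathcal{M}^{\mathrm{loc}}(\rho,\Pi,\epsilon)$, which is the correct choice when the parametric form of the information bound is available; the same substitution then gives the second inequality.

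I expect the only point requiring genuine care — more a matter of bookkeeping than a real obstacle — to be the two distinct roles played by ``$K$'': the cardinality $\abs{\Pi}=K$ that fixes the information budget (both the requirement $R\leq\log K$ and the scaling factor $R/\log K$ appearing inside Lemma~\ref{lma:mi_constraint_bound} refer to this $K$), versus the cardinality of the constructed hypothesis family $\Phi$, which equals the relevant local packing number and sits in the denominator of the Fano bound. One must also make sure the per-round KL constraint is fed correctly into the scaling step of the proof of Lemma~\ref{lma:mi_constraint_bound}. Beyond this, the statement follows by pure substitution, with no new technical content relative to Lemma~\ref{lma:mi_constraint_bound} and Proposition~\ref{prop:fano_regret}.
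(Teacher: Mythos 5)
Your proposal matches the paper's proof: the authors likewise obtain this corollary by substituting the constraint-aware mutual-information bound of Lemma~\ref{lma:mi_constraint_bound} into the Fano-based regret bound of Proposition~\ref{prop:fano_regret}, with $\Phi$ chosen as a local packing set of $\Delta(\Pi)$ (resp.\ $\Pi$) and $\rho(x,y)=\norm{x-y}_1$ as in Proposition~\ref{prop:finite_lb}. No gaps; your extra remark on reconciling the per-round KL constraint with the aggregate bound $I(\pi^*;\mathcal{H}_T)\leq R$ is consistent with the identification the paper itself makes in Section~\ref{sub_sec:mi_constraint}.
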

\begin{proof}
    The proof follows immediately by utilizing Lemma \ref{lma:mi_constraint_bound} and using the fact that in this scenario, $\rho(x,y) = \norm{x-y}_1$, similarly to the proof of Proposition \ref{prop:finite_lb}.
\end{proof}
\subsection{Proof of Proposition \ref{prop:bayes_bits_bound}}
\begin{proof}
    We start by considering $R=0$. From Lemma \ref{prop:regret_fano_constraint},
    \[
    \mathcal{BR}^*(T) \geq \frac{\epsilon T}{2} \left(1 - \frac{1}{K}\right)
    \]
    for any $\epsilon > 0$ such that lower bounds the difference between two different decisions. Since the maximal difference between two decisions is $1$, we set $\epsilon = 1$ and obtain the linear lower bound,
    \[
    \mathcal{BR}^*(T) \geq \frac{T}{2} \left(1 - \frac{1}{K}\right).
    \]
    We now consider $R>0$. We denote the polytope space with the mutual information constraint by $\Delta_R$. Similarly to the proof of Proposition \ref{prop:finite_lb},
    \[
    \log \mathcal{M}^{\mathrm{loc}}(\rho,\Delta(\Pi),\epsilon) \geq (K-1).
    \]
    Substituting this into Equation \ref{eq:br_finite_constrained},
    \[
    \frac{\mathcal{BR}^*(T)}{T\epsilon} \geq \frac12\left[ 1 - \frac{2T \bar{A}\frac{R}{\log K} \epsilon^2 + 1}{(K-1)}\right].
    \]
    Selecting $\epsilon = \sqrt{\frac{(K-1) \log K}{6RT\bar{A}}}$ we now have that
    \begin{align*}
        \mathcal{BR}^*(T) & \geq \frac12 \sqrt{\frac{TK\log K}{6R\bar{A}}}\left(\frac{2}{3} - \frac{1}{K-1}\right) 
    \end{align*}
    We present the proof for the linear case. Now,
    \[
    \log \mathcal{M}^{\mathrm{loc}}(\rho,\Pi,\epsilon) \geq d.
    \]
     So now we have
    \[
    \frac{\mathcal{BR}^*(T)}{T\epsilon} \geq \frac12\left[1 - \frac{2T \frac{R}{\log K} \bar{A}\epsilon^2 + 1}{d}\right].
    \]
    Selecting $\epsilon = \sqrt{\frac{d\log K}{6R\bar{A}T}}$ we now have that
    \begin{align*}
        \mathcal{BR}^*(T) & \geq \frac12 \sqrt{\frac{Td\log K}{6R\bar{A}}}\left(\frac{2}{3} - \frac{1}{d}\right)
    \end{align*}
\end{proof}
Proposition \ref{prop:bayes_bits_bound} can then be applied directly on the MAB setting. 
\begin{corollary} \label{cly:bounds_mab_bits}
    Consider a Bayesian MAB with $K$ arms. If $I(\pi^*;\mathcal{H}_T) = 0$ then the worst-case regret is linear. Otherwise, for $I(\pi^*;\mathcal{H}_T) \geq 0$, the worst-case regret can be lower-bounded by $\Omega\left(\sqrt{\frac{TK \log K}{I(\pi^*;\mathcal{H}_T)}}\right)$. Additionally, if $\Pi = \mathbb{B}_2^d$ and the reward means are linear functions of the decisions, we have a lower bound of $\Omega\left(\sqrt{\frac{Td\log K}{I(\pi^*;\mathcal{H}_T)}}\right)$.
\end{corollary}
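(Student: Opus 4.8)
The plan is to derive this corollary as a direct specialization of Proposition \ref{prop:bayes_bits_bound} to the MAB setting, using the key observation stated in the excerpt that for a general agent $I(\pi^*;\mathcal{H}_T) = \KL{P^{\pi^*\mid\mathcal{H}_T}}{P^{\pi^*}}$ governs the per-round KL quantity $\KL{p_t}{P^{\pi^*}}$. First I would handle the degenerate case $I(\pi^*;\mathcal{H}_T) = 0$: when no information is accumulated, the constrained Fano bound (Proposition \ref{prop:regret_fano_constraint} / the $R=0$ case in the proof of Proposition \ref{prop:bayes_bits_bound}) forces $\mathcal{BR}^*(T) \geq \frac{T}{2}(1 - 1/K)$, i.e. linear regret, since one may take $\epsilon = 1$ (the maximal reward-mean gap) and the mutual-information term vanishes identically.

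For the main case $I(\pi^*;\mathcal{H}_T) = R > 0$, I would instantiate the general framework exactly as in the proof of Proposition \ref{prop:bayes_bits_bound}: the $K$-armed bandit has a finite decision space of size $K$, the metric is $\rho(x,y) = c\norm{x-y}_1$ on $\Delta(\Pi)$ as established in Proposition \ref{prop:finite_lb}, and the local packing bound gives $\log\mathcal{M}^{\mathrm{loc}}(\rho,\Delta(\Pi),\epsilon) \geq K-1$. Plugging into Equation \ref{eq:br_finite_constrained} and optimizing over $\epsilon$ by taking $\epsilon = \sqrt{(K-1)\log K / (6RT\bar{A})}$ yields $\mathcal{BR}^*(T) \geq \frac12\sqrt{TK\log K/(6R\bar{A})}\,(2/3 - 1/(K-1)) = \Omega(\sqrt{TK\log K / R})$. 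For the linear case I would instead use $\log\mathcal{M}^{\mathrm{loc}}(\rho,\Pi,\epsilon) \geq d$ from Lemma \ref{lma:covering_prop} applied to $\mathbb{B}_2^d$, substitute into the parametric form of Equation \ref{eq:br_finite_constrained}, and pick $\epsilon = \sqrt{d\log K/(6R\bar{A}T)}$ to get $\Omega(\sqrt{Td\log K/R})$.

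The only subtlety — and the step I'd expect needs the most care — is justifying that the per-round information constraint required by Proposition \ref{prop:regret_fano_constraint}, namely $\KL{p_t}{P^{\pi^*}} \leq R$ for all $t$, follows from the aggregate statement $I(\pi^*;\mathcal{H}_T) = R$. This is where the identity $I(\pi^*;\mathcal{H}_T) = \mathbb{E}_{\mathcal{H}_T}\!\left[\KL{P^{\pi^*\mid\mathcal{H}_T}}{P^{\pi^*}}\right]$ and the chain-rule/monotonicity argument from the proof of Lemma \ref{lma:mi_constraint_bound} come in: since information is non-decreasing in the history, each intermediate $\KL{p_t}{P^{\pi^*}}$ is controlled by the terminal mutual information, so the constraint used in Lemma \ref{lma:mi_constraint_bound} is met with $R = I(\pi^*;\mathcal{H}_T)$. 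Once this is in place, everything else is a routine substitution into the already-proved bounds, so the corollary is essentially a restatement of Proposition \ref{prop:bayes_bits_bound} with $R$ replaced by $I(\pi^*;\mathcal{H}_T)$.
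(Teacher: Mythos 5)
Your proposal matches the paper's treatment: the paper proves this corollary simply by remarking that Proposition \ref{prop:bayes_bits_bound} ``can be applied directly on the MAB setting'' with $R = I(\pi^*;\mathcal{H}_T)$, and your substitutions into Equation \ref{eq:br_finite_constrained} with $\epsilon = \sqrt{(K-1)\log K/(6RT\bar{A})}$ (resp.\ $\epsilon = \sqrt{d\log K/(6R\bar{A}T)}$) reproduce exactly the computation in that proposition's proof, including the $R=0$ linear-regret case. Your added care in linking the per-round quantity $\KL{p_t}{P^{\pi^*}}$ to the terminal mutual information via Lemma \ref{lma:mi_constraint_bound} is consistent with, and slightly more explicit than, what the paper writes.
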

\subsection{Proof of Proposition \ref{prop:bits_ub}}
\label{app:bits_ub}
\begin{proof}
    We consider the following two algorithms. The first is Thompson sampling, which means that, $p_t = P^{\pi^* \mid \mathcal{H_t}}$. The second algorithm is Thompson sampling with the constrained observations $\tilde{O}_t$ such that $I(\pi^*;\tilde{\mathcal{H}}_T) = \KL{P^{\pi^* \mid \tilde{\mathcal{H}}_T}}{P^{\pi^*}} = R$, where $\tilde{\mathcal{H}}_t$ is the history with constrained observations. Now, we define the information ratio for both the normal and constrained variants of Thompson sampling:
    \[
    \rho_t = \frac{\mathbb{E}[r_M(\pi^*(C_t),C_t) - r_M(p_t,C_t)]^2}{I(\pi^*;O_{t+1} \mid \mathcal{H}_t)} \quad ; \quad  
    \tilde{\rho}_t = \frac{\mathbb{E}[r_M(\pi^*(C_t),C_t) - r_M(p_t,C_t)]^2}{I(\pi^*;\tilde{O}_{t+1} \mid \tilde{\mathcal{H}_t})}.
    \]
    We see that
    \begin{align*}
        \frac{\rho_t}{\tilde{\rho}_t} & = \frac{I(\pi^*;\tilde{O}_{t+1} \mid \tilde{\mathcal{H}_t})}{I(\pi^*;O_{t+1} \mid \mathcal{H}_t)} \\
        & \geq \frac{I(\pi^*;\tilde{O}_{t+1} \mid \tilde{\mathcal{H}_t})}{\log K} 
    \end{align*}
    where the inequality follows from $I(\pi^*;O_{t+1} \mid \mathcal{H}_t) \leq \log K$.
    Thus,
    \begin{align*}
        \max_{P} \max_t \tilde{\rho}_t & \leq \max_{P} \max_t \frac{\log K}{\KL{P^{\pi^* \mid (\tilde{O_t},\tilde{\mathcal{H}}_t)}}{P^{\pi^* \mid \tilde{\mathcal{H}}_t}}} \rho_t \\
        & \leq \frac{\log K}{R} \max_t \rho_t \\
        & \leq \frac{K \log K}{2R}
    \end{align*}
    Where the third inequality follows by the information ratio upper bound presented by \cite{russo2016informationthompson}.
    If the MAB is linear then the information ratio $\rho_t$ can be upper-bounded by $\frac{d}{2}$. Then, we have the following upper bound,
    \[
    \max_P \max_t \tilde{\rho}_t \leq \frac{d \log K}{2R}
    \]
    Now, by the analysis done by \cite{russo2016informationthompson}, we know that if $\max_t \tilde{\rho}_t \leq \tilde{\rho}$ for all $t$, then the Bayesian regret can be upper bounded by
    \begin{align*}
        \mathcal{BR}^*(T) & \leq \sqrt{\tilde{\rho} T \log K}
    \end{align*}
    Thus, we have for $K$-MAB:
    \[
    \mathcal{BR}^*(T) \leq \log K\sqrt{\frac{KT}{2R}}.
    \]
    And for linear bandits:
    \[
    \mathcal{BR}^*(T) \leq \log K\sqrt{\frac{dT}{2R}}.
    \]
\end{proof}
\subsection{Proof of Proposition \ref{prop:entropy_bits_mab}}
\label{sub_sec:entropy_bits_mab}
We begin by stating and proving the following Lemma.
\begin{lemma} \label{lma:entropybitslb_helper}
    Let $P_1,P_2$ be two priors such that $H(P_1^{\pi^*}) = R \leq \log K$. Also let $\psi, \phi$ be two decisions such that $\KL{\psi}{P_1^{\pi^*}} \geq c\bar{A}R$ and $\KL{\phi}{P_2^{\pi^*}} \leq \log K$. Then, 
    \begin{align*}
            \frac{\mathbb{E}_{P_1}[\mu_M(\pi^*, C) - \mu_M(\psi, C)]}{\mathbb{E}_{P_2}[\mu_M(\pi^*, C) - \mu_M(\phi, C)]} \geq 
            2\sqrt{c\frac{R}{\log K}}
    \end{align*}
\end{lemma}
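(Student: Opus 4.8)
The plan is to bound the numerator from below and the denominator from above, each by an expression that decouples into a "gap scale" times a function of the relevant KL/entropy quantity, and then take the ratio. For the denominator, I would use that the per-round Bayesian regret of any decision in a $K$-armed problem is trivially bounded: since the reward means lie in $[0,1]$, $\mathbb{E}_{P_2}[\mu_M(\pi^*,C)-\mu_M(\phi,C)] \le 1$, and more usefully it should be controlled by the mutual-information/KL budget $\KL{\phi}{P_2^{\pi^*}} \le \log K$ together with a Pinsker-type or information-ratio argument of the kind used in Proposition~\ref{prop:bits_ub}. For the numerator, the hypothesis $\KL{\psi}{P_1^{\pi^*}} \ge c\bar A R$ forces $\psi$ to be "far" from the posterior-optimal distribution, and via Assumption~\ref{asmp:constant_bound} (read in the direction $\rho(p_1,p_2)^2 \ge \KL{p_1}{p_2}/(2\bar A)$) this translates into a lower bound $\rho(\psi,P_1^{\pi^*})^2 \gtrsim cR$; since $\rho$ lower-bounds the expected mean-reward gap, $\mathbb{E}_{P_1}[\mu_M(\pi^*,C)-\mu_M(\psi,C)] \ge \rho(\psi,\pi^*) \gtrsim \sqrt{cR}$.

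Concretely, the key steps in order are: (i) express both the numerator and denominator through $\rho$ using the standing assumption $\mathbb{E}[\abs{\mu_M(\psi,C)-\mu_M(\phi,C)}]\ge \rho(\phi,\psi)$; (ii) lower-bound $\rho(\psi,P_1^{\pi^*})$ using $\KL{\psi}{P_1^{\pi^*}}\ge c\bar A R$ and the regularity Assumption~\ref{asmp:constant_bound}, obtaining $\rho(\psi,P_1^{\pi^*}) \ge \sqrt{cR}$ after the $\bar A$ cancels; (iii) upper-bound the denominator by noting $\mathbb{E}_{P_2}[\mu_M(\pi^*,C)-\mu_M(\phi,C)] \le \sqrt{\tfrac{1}{2}\KL{\phi}{P_2^{\pi^*}}} \le \sqrt{\tfrac12\log K}$ (a Pinsker/information-theoretic estimate, consistent with how the $\log K$ normalization enters elsewhere in the paper, e.g. the $I(\pi^*;O_{t+1}\mid\mathcal H_t)\le\log K$ bound in the proof of Proposition~\ref{prop:bits_ub}); (iv) divide, collect the constants, and absorb any loss into the universal constant so that the ratio is at least $2\sqrt{c\,R/\log K}$.

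I expect the main obstacle to be getting the constants to line up so that the clean bound $2\sqrt{cR/\log K}$ comes out exactly rather than up to an unspecified multiplicative factor — in particular matching the factor $2$ and the placement of $\bar A$ in Assumption~\ref{asmp:better_then} ($\KL{p_t}{P^{\pi^*}}\ge c\bar A R$) against the factor $2\bar A$ in Assumption~\ref{asmp:constant_bound}. The natural reading is that $c$ in the statement is chosen/renamed precisely to make this cancellation work, so the "hard part" is really bookkeeping: carefully tracking whether $\rho$ here is the $L_1$ metric with its scaling constant (as in Proposition~\ref{prop:finite_lb}) and making sure the denominator estimate uses the correct normalization. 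Once the two one-sided bounds are in hand the division is immediate, and this lemma then feeds directly into the Fano-style argument of Proposition~\ref{prop:entropy_bits_mab} by rescaling the separation parameter $\epsilon$ by the factor $\sqrt{R/\log K}$.
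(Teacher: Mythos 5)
Your proposal follows essentially the same route as the paper's proof: the numerator is lower-bounded by reading Assumption \ref{asmp:constant_bound} in reverse to get $(\mathbb{E}_{P_1}[\mu_M(\pi^*,C)-\mu_M(\psi,C)])^2 \gtrsim cR$, the denominator is upper-bounded by Pinsker via $\KL{\phi}{P_2^{\pi^*}} \leq \log K$, and the two are divided. The only discrepancies are the constant-factor bookkeeping issues you already flag (factors of $2$ from Pinsker and from the $2\bar{A}$ in Assumption \ref{asmp:constant_bound}), which are present in the paper's own computation as well.
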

\begin{proof}
    By the regularity assumption, we know that
    \begin{align*}
        (\mathbb{E}_{P_1}[\mu_M(\pi^*, C) - \mu_M(\psi, C)])^2 \geq \frac{cR}{2}.
    \end{align*}
    From Pinsker's inequality,
    \begin{align*}
        (\mathbb{E}_{P_2}[\mu_M(\pi^*, C) - \mu_M(\phi, C)])^2 \leq 2\KL{\phi}{P_2^{\pi^*}} \leq \log K.
    \end{align*}
    Dividing these two inequalities yields
    \begin{align*}
        \frac{(\mathbb{E}_{P_1}[\mu_M(\pi^*, C) - \mu_M(\psi, C)])^2}{(\mathbb{E}_{P_2}[\mu_M(\pi^*, C) - \mu_M(\phi, C)])^2} \geq 4c \frac{R}{\log K}
    \end{align*}
    which concludes our proof.
\end{proof}
Now, we prove the following Proposition.
\begin{proposition} \label{prop:entropy_bits_lb}
    Let $P$ be a prior $P$ with $H(\pi^*) = R \leq \log K$. If Assumption \ref{asmp:better_then} holds, then
    \begin{equation} \label{eq:entropy_bits_regret}
        \mathcal{BR}_P(T;\{p_t\}) \geq 2 \sqrt{\frac{cR}{\log K}} \mathcal{BR}^*(T).
    \end{equation}
\end{proposition}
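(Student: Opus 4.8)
The plan is to reduce the statement to a round-by-round application of Lemma \ref{lma:entropybitslb_helper}, comparing the per-round regret of our algorithm $\{p_t\}$ under $P$ against that of a suitable reference algorithm run against a near-worst-case prior. Concretely, write $\mathcal{BR}_P(T;\{p_t\}) = \sum_{t=1}^{T}\mathbb{E}_P[\mu_M(\pi^*,C_t)-\mu_M(p_t,C_t)]$. Fix $\eta>0$ and, by definition of $\mathcal{BR}^*(T)$ as a supremum of minimax values, choose a prior $P_2$ with $\inf_{\{q_t\}}\mathcal{BR}_{P_2}(T;\{q_t\}) \ge \mathcal{BR}^*(T)-\eta$; then \emph{any} algorithm $\{q_t\}$ run against $P_2$ satisfies $\mathcal{BR}_{P_2}(T;\{q_t\}) = \sum_{t=1}^{T}\mathbb{E}_{P_2}[\mu_M(\pi^*,C_t)-\mu_M(q_t,C_t)] \ge \mathcal{BR}^*(T)-\eta$.

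I would take $\{q_t\}$ to be Thompson sampling against $P_2$, so that $q_t = P_2^{\pi^*\mid\mathcal{H}_{t-1}}$. The only role of this particular choice is to make the hypothesis $\KL{q_t}{P_2^{\pi^*}}\le\log K$ of Lemma \ref{lma:entropybitslb_helper} hold at every round: when the worst-case prior has a uniform $\pi^*$-marginal this KL equals $\log K - H(q_t)\le\log K$ pointwise, and in any case that hypothesis is used inside the lemma only to supply $(\mathbb{E}_{P_2}[\mu_M(\pi^*,C)-\mu_M(q_t,C)])^2\le\log K$, which also follows directly from the reward gap being at most $1\le\log K$.

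Next, apply Lemma \ref{lma:entropybitslb_helper} for each $t$ with $P_1=P$, $\psi=p_t$, $\phi=q_t$ (conditioning on the history so that $p_t,q_t$ are fixed stochastic decisions): the hypothesis $\KL{p_t}{P^{\pi^*}}\ge c\bar{A}R$ is exactly Assumption \ref{asmp:better_then}, and $\KL{q_t}{P_2^{\pi^*}}\le\log K$ was arranged above, so the lemma gives $\mathbb{E}_P[\mu_M(\pi^*,C_t)-\mu_M(p_t,C_t)] \ge 2\sqrt{cR/\log K}\;\mathbb{E}_{P_2}[\mu_M(\pi^*,C_t)-\mu_M(q_t,C_t)]$ for every $t$. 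Summing over $t=1,\dots,T$ yields $\mathcal{BR}_P(T;\{p_t\}) \ge 2\sqrt{cR/\log K}\,\mathcal{BR}_{P_2}(T;\{q_t\}) \ge 2\sqrt{cR/\log K}\,(\mathcal{BR}^*(T)-\eta)$, and letting $\eta\downarrow 0$ completes the proof. The two regrets live over different context distributions, but this is harmless, since the comparison is term-by-term with each expectation taken under its own prior, which is precisely the shape of Lemma \ref{lma:entropybitslb_helper}.

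The main obstacle is the interface with Lemma \ref{lma:entropybitslb_helper}: exhibiting a reference algorithm against the worst-case prior for which $\KL{q_t}{P_2^{\pi^*}}\le\log K$ holds at every round, rather than merely in expectation (where $\mathbb{E}[\KL{q_t}{P_2^{\pi^*}}] = I(\pi^*;\mathcal{H}_{t-1}) \le H(\pi^*)\le\log K$ is immediate). I expect either the Thompson-sampling-against-a-maximum-entropy-prior route or the reward-boundedness observation to settle this cleanly, and I would present whichever is tidier once the details are written out. Everything else — unfolding the regret as a sum of per-round gaps, the $\eta$-argument for non-attainment of the supremum, and the conditioning that turns $p_t,q_t$ into fixed decisions — is routine bookkeeping.
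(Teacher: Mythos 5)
Your proposal is correct and follows essentially the same route as the paper: apply Lemma \ref{lma:entropybitslb_helper} round by round with $P_1=P$, $\psi=p_t$ and a reference pair $(P_2,\phi)$, sum over $t$, and pass to the worst-case Bayesian regret. The only difference is bookkeeping for the constraint $\KL{\phi}{P_2^{\pi^*}}\leq\log K$ --- the paper simply notes the per-round bound holds for every constrained $p'_t$ and then drops the constraint from the infimum (which only decreases it), whereas you construct an explicit constrained reference algorithm; your fallback observation that the constraint is only used to ensure $(\mathbb{E}_{P_2}[\mu_M(\pi^*,C)-\mu_M(\phi,C)])^2\leq\log K$, which already follows from reward boundedness, settles the point just as well.
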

\begin{proof}
    We see that
    \begin{align*}
            \mathcal{BR}_{P}(T;{p_t}) & = \sum_{t=1}^T \mathbb{E}_{P}[\mu_M(\pi^*, C_t)-\mu_M(p_t, C_t)] \\
            & \geq 2\sqrt{c\frac{R}{\log K}} \sum_{t=1}^T \mathbb{E}_{\nu}[\mu_M(\pi^*, C_t)-\mu_M(p_t', C_t)]
        \end{align*}
    for any prior $\nu$ and stochastic decisions $p'_t$ such that $\KL{p'_t}{\nu^{\pi^*}} \leq \log K$. So we can take the worst-case regret,
    \begin{align*}
        \mathcal{BR}_{P}(T;{p_t}) & \geq 2\sqrt{c\frac{R}{\log K}} \max_{\nu} \inf_{\{p'_t\}:\KL{p'_t}{\nu^{\pi^*}} \leq \log K} \sum_{t=1}^T \mathbb{E}_{\nu}[\mu_M(\pi^*, C_t)-\mu_M(p_t', C_t)] \\
        & \geq 2\sqrt{c\frac{R}{\log K}} \max_{\nu} \inf_{\{p'_t\}} \sum_{t=1}^T \mathbb{E}_{\nu}[\mu_M(\pi^*, C_t)-\mu_M(p_t', C_t)] \\
        & = 2\sqrt{c\frac{R}{\log K}} \mathcal{BR}^*(T)
    \end{align*}
    where the second inequality follows by removing a constraint on the available decisions.
\end{proof}
This results in the following Corollary.
\begin{corollary}[Proposition \ref{prop:entropy_bits_mab}] \label{cly:entropy_bits_mab}
    Let there be a $K$-MAB. Then for any algorithm such that $\KL{p_t}{P^{\pi^*}} \geq c\bar{A}H(\pi^*)$, the regret can be lower-bounded by $\Omega\left(\sqrt{\frac{KH(\pi^*)T}{\log K}}\right)$. Additionally, if the MAB is linear, then the regret can be lower bounded by $\Omega\left(\sqrt{\frac{dH(\pi^*)T}{\log K}}\right)$.
\end{corollary}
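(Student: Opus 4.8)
The plan is to read the bound off directly from the entropy-transfer inequality of Proposition~\ref{prop:entropy_bits_lb}, fed with the worst-case regret lower bounds already produced by the Fano machinery of Section~\ref{sec:fano_bounds}; once those ingredients are in hand the corollary is essentially a substitution.

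First I would set $R = H(\pi^*)$. This choice is admissible in Proposition~\ref{prop:entropy_bits_lb}: since $\pi^*$ is supported on at most $K$ arms we have $H(\pi^*) \le \log K$, and the standing hypothesis $\KL{p_t}{P^{\pi^*}} \ge c\bar{A}H(\pi^*)$ is exactly Assumption~\ref{asmp:better_then} with this value of $R$. Proposition~\ref{prop:entropy_bits_lb} (whose proof rests on the termwise ratio estimate of Lemma~\ref{lma:entropybitslb_helper}) then gives
\begin{equation*}
\mathcal{BR}_P(T;\{p_t\}) \;\ge\; 2\sqrt{\frac{c\,H(\pi^*)}{\log K}}\;\mathcal{BR}^*(T),
\end{equation*}
where $\mathcal{BR}^*(T)$ is the \emph{unconstrained} worst-case Bayesian regret for the relevant decision space.

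Second I would substitute the appropriate worst-case lower bound. For the $K$-armed bandit, Proposition~\ref{prop:finite_lb} with $\abs{\Pi}=K$ gives $\mathcal{BR}^*(T) = \Omega(\sqrt{KT})$, so the display becomes $\mathcal{BR}_P(T;\{p_t\}) = \Omega\big(\sqrt{K H(\pi^*) T/\log K}\big)$. For the linear case with $\Pi \subseteq \mathbb{B}_2^d$ and linear reward means, Proposition~\ref{prop:linear_lb} gives $\mathcal{BR}^*(T) = \Omega(\sqrt{dT})$, and the same substitution yields $\Omega\big(\sqrt{d H(\pi^*) T/\log K}\big)$. The constant prefactors of the form $\tfrac23 - \tfrac1{K-1}$ and $\tfrac23 - \tfrac1d$ in Propositions~\ref{prop:finite_lb} and~\ref{prop:linear_lb} are positive and bounded away from $0$ for $K,d\ge 3$, hence absorbed into $\Omega(\cdot)$.

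The one place requiring care — and the only thing resembling an obstacle, since the quantitative work is already done in Proposition~\ref{prop:entropy_bits_lb} and Lemma~\ref{lma:entropybitslb_helper} — is that the transfer inequality must be applied to a prior $P$ that is genuinely a $K$-MAB (resp.\ linear bandit) instance \emph{and} has $H(\pi^*)$ equal to the target value, so that the right-hand side is non-vacuous and of the claimed order. This is arranged by taking the standard hard instance underlying Propositions~\ref{prop:finite_lb} and~\ref{prop:linear_lb} and spreading the mass of $\pi^*$ (near-)uniformly over $\lceil 2^{H(\pi^*)}\rceil$ arms (or the analogous parametric construction in the linear case), which leaves $\mathcal{BR}^*(T)$ of the stated order while pinning the prior entropy. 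With that in place no further calculation is needed.
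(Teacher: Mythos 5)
Your proposal matches the paper's own derivation: the paper obtains this corollary by applying the transfer inequality $\mathcal{BR}_P(T;\{p_t\}) \geq 2\sqrt{cR/\log K}\,\mathcal{BR}^*(T)$ of Proposition~\ref{prop:entropy_bits_lb} with $R = H(\pi^*)$ and then substituting the worst-case bounds $\Omega(\sqrt{KT})$ and $\Omega(\sqrt{dT})$ from Propositions~\ref{prop:finite_lb} and~\ref{prop:linear_lb}, exactly as you do. Your final paragraph about constructing a hard instance with prescribed entropy is not actually needed, since the transfer inequality already bounds $\mathcal{BR}_P$ for the \emph{given} prior $P$ in terms of the prior-independent worst-case regret $\mathcal{BR}^*(T)$, but it does no harm.
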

We now explain why in scenarios where the optimal decision is unique, an algorithm that selects the optimal arm with high probability obeys Assumption \ref{asmp:better_then}. 
\begin{proposition}
    Let there be an algorithm such that $p_t$ selects the optimal decision $\pi^*$ with probability $1-\delta$. Then,
    \[
    \KL{p_t}{P^{\pi^*}} \geq (1-\delta)H(\pi^*) - c_{\delta}
    \]
    where $c_{\delta}\to 0$ as $\delta \to 0$.
\end{proposition}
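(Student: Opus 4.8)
The plan is to decompose the KL divergence into a (negative) Shannon entropy term plus a cross-entropy term and bound each separately. Abbreviate $\nu := P^{\pi^*}$ for the prior marginal of the optimal decision, and recall that in the $K$-MAB the decision set is finite with $\abs{\Pi}=K$. For any distribution $p_t$ on $\Pi$,
\begin{equation*}
    \KL{p_t}{\nu} = -H(p_t) + \sum_{\pi \in \Pi} p_t(\pi)\log\frac{1}{\nu(\pi)},
\end{equation*}
so it suffices to (i) upper bound $H(p_t)$ by a quantity $c_\delta$ with $c_\delta\to 0$ as $\delta\to 0$, and (ii) lower bound the cross-entropy term by $(1-\delta)H(\pi^*)$.

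For step (i) I would use that $p_t$ places mass at least $1-\delta$ on the single decision $\pi^*$. Among all distributions on $\Pi$ assigning mass $\ge 1-\delta$ to a fixed element, Shannon entropy is maximized by the one with mass exactly $1-\delta$ there and the remaining $\delta$ spread uniformly over the other $K-1$ elements — the grouping/max-entropy inequality, where the relevant single-variable function $p_0 \mapsto H_b(p_0) + (1-p_0)\log(K-1)$ is decreasing on $[1-\delta,1]$ once $1-\delta > 1/K$. Hence
\begin{equation*}
    H(p_t) \le H_b(\delta) + \delta\log(K-1) =: c_\delta,
\end{equation*}
with $H_b(\delta) = -\delta\log\delta-(1-\delta)\log(1-\delta)$; for fixed $K$ this tends to $0$ as $\delta\to 0$.

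For step (ii), every summand $p_t(\pi)\log\frac1{\nu(\pi)}$ is nonnegative (since $\nu(\pi)\le 1$), so I would keep only the term $\pi=\pi^*$, which gives $\sum_{\pi} p_t(\pi)\log\frac1{\nu(\pi)} \ge (1-\delta)\log\frac1{\nu(\pi^*)}$. Averaging over the prior draw of the model, i.e.\ over $\pi^*\sim\nu$, turns the right-hand side into $(1-\delta)\,\mathbb{E}\!\left[-\log\nu(\pi^*)\right] = (1-\delta)H(\pi^*)$ by the definition of Shannon entropy. Combining the two estimates yields $\mathbb{E}\!\left[\KL{p_t}{P^{\pi^*}}\right] \ge (1-\delta)H(\pi^*) - c_\delta$, which is the claimed bound.

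The main obstacle here is not a hard inequality but the bookkeeping about the meaning of the hypothesis: since $\pi^* = \pi^*_M$ depends on the unknown model while $p_t$ is history-measurable, ``$p_t$ selects $\pi^*$ with probability $1-\delta$'' is most cleanly read as $\mathbb{E}[p_t(\pi^*_M)]\ge 1-\delta$, and the displayed bound correspondingly holds in expectation over all the randomness; pushing the entropy estimate through the averaging requires only concavity of $H_b$ together with Jensen's inequality. Uniqueness of the optimal decision (so that a single atom carries the mass $1-\delta$) and finiteness of $K$ (so that $\delta\log(K-1)\to 0$) are precisely the standing hypotheses, and they are exactly what the argument uses.
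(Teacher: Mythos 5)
Your proof is correct, and it takes a slightly different route from the paper's. The paper works directly with the KL sum $\sum_{\pi} p_t(\pi)\log\bigl(p_t(\pi)/P^{\pi^*}(\pi)\bigr)$, keeps only the $\pi=\pi^*$ summand, lower bounds it by $(1-\delta)\log\bigl((1-\delta)/P^{\pi^*}(\pi^*)\bigr)$, and takes the expectation over the prior, arriving at $c_\delta=(1-\delta)\log\frac{1}{1-\delta}$. You instead split $\KL{p_t}{P^{\pi^*}}=-H(p_t)+\sum_\pi p_t(\pi)\log\frac{1}{P^{\pi^*}(\pi)}$, control the entropy term by the Fano/grouping bound $H(p_t)\le H_b(\delta)+\delta\log(K-1)$, and keep only the $\pi^*$ term of the cross-entropy. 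Your decomposition has the advantage that every cross-entropy summand is genuinely nonnegative, so discarding the others is immediate; the paper's discarding of the remaining KL summands is more delicate (individual KL summands can be negative, and justifying the drop requires a log-sum-inequality argument that the paper does not spell out, at the cost of an extra $O(\delta\log(1/\delta))$ in $c_\delta$). The trade-off is that your constant $c_\delta=H_b(\delta)+\delta\log(K-1)$ depends on $K$, whereas the paper's does not; both vanish as $\delta\to 0$ for fixed $K$, which is all the proposition claims. Your remark on the interpretation of the hypothesis is well taken: the paper's own proof reads ``$p_t(\pi^*)\ge 1-\delta$'' as a pointwise condition, and under that reading both arguments go through without the Jensen step; under the purely in-expectation reading, your step (ii) would additionally need to rule out correlation between $p_t(\pi^*_M)$ and $\log\frac{1}{P^{\pi^*}(\pi^*_M)}$, so it is safest to state the result under the pointwise hypothesis as the paper implicitly does.
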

\begin{proof}
    We can write that
    \begin{align*}
        \KL{p_t}{P^{\pi^*}} & = \mathbb{E}\left[\sum_{\pi}p_t(\pi) \log \left(\frac{p_t(\pi)}{P^{\pi^*}(\pi)}\right)\right] \\
        & \geq \mathbb{E}_{P}\left[(1-\delta) \log \left(\frac{1-\delta}{P^{\pi^*}(\pi^*)}\right)\right] \\
        & = (1-\delta)H(\pi^*) - (1-\delta)\log\left(\frac{1}{1-\delta}\right)
    \end{align*}
    where the second inequality follows from the fact that $p_t(\pi^*) \geq 1-\delta$.
\end{proof}
From this, we can conclude that any algorithm whose stochastic decision at round $t$ converges to the optimal decision obeys the assumption.
\begin{figure}[!b]
    \centering
    \begin{subfigure}[b]{0.49\textwidth}
        \centering
        \includegraphics[width=\textwidth]{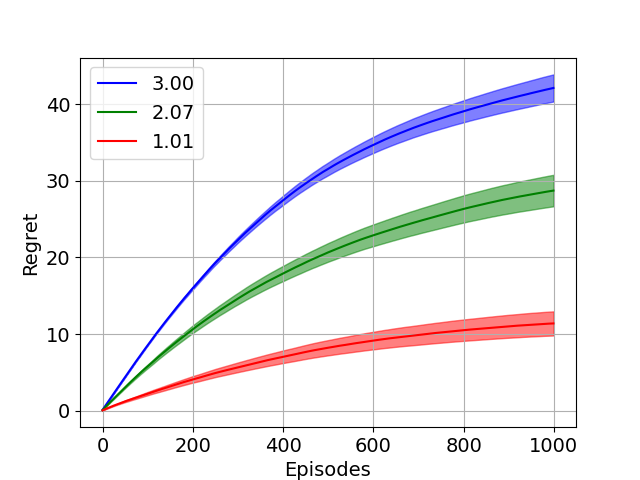}
        \caption{Bayesian regret}
    \end{subfigure}
    \begin{subfigure}[b]{0.49\textwidth}
        \centering
        \includegraphics[width=\textwidth]{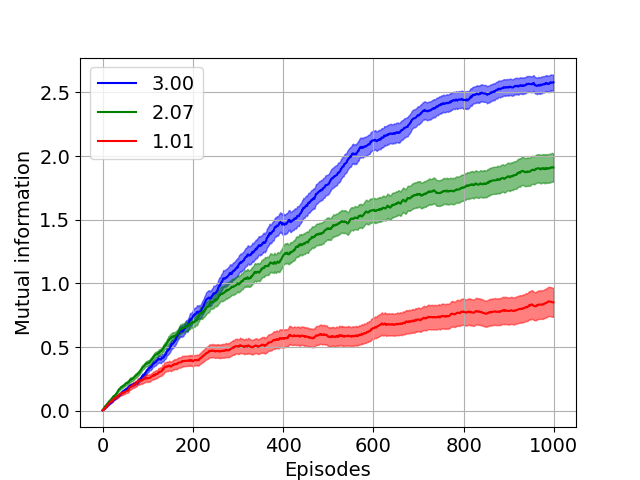}
        \caption{Accumulated information}
    \end{subfigure}
    \caption{(a) The Bayesian regret and (b) the accumulated information in bits for Thompson sampling under three different priors over the same bandit structure. The entropy of each prior is described in the legend. The shadowed areas correspond to 2-sigma error bars.}
    \label{fig:bayes_bandit_ts}
\end{figure}
\section{Bandit Experiments}
\label{app:exp_details_bandits}
\subsection{Additional Details}
The mutual information for the experiments in this setting is estimated using the KL divergence between the output probability distribution of every algorithm with the prior probability, $\KL{p_t}{P^{\pi^*}}$. This is an estimator of the mutual information, similar to the one used by \cite{seldin2011pac}. The mutual information was then calculated according to $\KL{p_t}{P^{\pi^*}}$, where $p_t$ is the real posterior. In our problem, the posterior update is done in the following manner.
\begin{align*}
    P(\pi^* = i \mid \mathcal{H}_t) 
    & = \frac{P(\mathcal{H}_t \mid i)P(i)}{P(\mathcal{H}_t)} \\
   & = \frac{P(\mathcal{H}_t \mid i)P(i)}{\sum_j P(\mathcal{H}_t \mid i)P(i)}
\end{align*}
where
\begin{align*}
    P(\mathcal{H}_t \mid i) & = \left(\frac{1+\epsilon}{2}\right)^{N_{s,i}}\left(\frac{1-\epsilon}{2}\right)^{N_{f,i}} \prod_{j \neq i} \left[\left(\frac{1-\epsilon}{2}\right)^{N_{s,j}}\left(\frac{1+\epsilon}{2}\right)^{N_{f,j}}\right],
\end{align*}
and $N_{s_i}$ is the number of successful pulls (pulls that received reward 1) of arm $i$. $N_{f,i}$ is the number of failed pulls of arm $i$.

In both bandit experiments, we used 200 random seeds. Each random seed considers both the randomness in the problem generated (since the bandit problem is generated via a prior) and the randomness during the algorithm's execution. We then reported in each graph the standard error for both the regret and mutual information, which is given by $\mathrm{err} = \frac{2\sigma}{\sqrt{n}}$ where $\sigma$ is the standard deviation and $n$ is the number of seeds.
\subsection{Additional Experiments}
We use the same bandit setting as in Section \ref{sec:experiments}. In this experiment, we compare the regret of Thompson sampling under three $H(\pi^*)$ values. We use the same model space and number of arms. Results are presented in Figure \ref{fig:bayes_bandit_ts}. As we can see, when less information can be accumulated, less regret is suffered. 

\begin{figure}[t]
    \centering
    \includegraphics[width=1\linewidth]{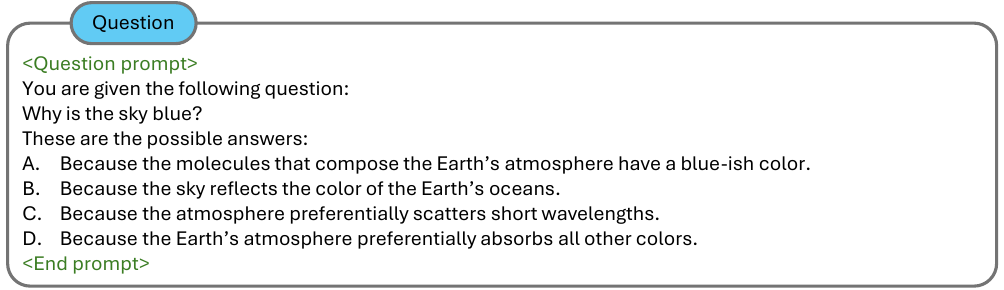}
    \caption{The prompting done for the LLMs in our experiments. The question is preceded by a question prompt followed by "Answer the following question:". Following this, the options are presented. \textless{Question prompt\textgreater} and \textless{End prompt\textgreater} are both replaced by a different prompt for every model.}
    \label{fig:llm_prompt}
\end{figure}

\section{MCQA with LLM Experiment}
\label{app:exp_llm}
\subsection{Additional details}
In this experiment, we used a small LLM and a large LLM in every experiment. The small LLM was one of the following: Mistral 7B \citep{jiang2023mistral}, Gemma 2B \citep{gemmateam2024gemmaopenmodelsbased} and Falcon 7B \citep{falcon40b}. For the large LLM we used Mixtral 8x7B \citep{jiang2024mixtral}, Llama3-70B \citep{llama3modelcard} and Gemma 7B \citep{gemmateam2024gemmaopenmodelsbased}. 4-bit quantization \citep{dettmers2023quant4bit} and flash attention \citep{dao2022flashattention} were applied for all models (excluding Falcon 7B, for which flash attention was not applied).
In every round, we queried the LLM with a prompt, which is described later on.
An example of the prompt used is provided in Figure \ref{fig:llm_prompt}.
Using the LLM's output we obtain a probability distribution over the tokens of the possible answers [A,B,C,D]. We then selected the answer according to this distribution and received feedback on whether the answer was correct, which was our reward. We also see that performance improvement is not clear when we use Mistral for the small model and Gemma 7B for the large one. This is because they are similar in size, while for the rest of the experiments, the larger LLM has significantly more parameters.

\begin{table}[b]
\caption{Regret for 10 different seeds for random and bits-based policies in LLM selection for MCQA tasks. Both policies were allowed to query the large LLM for only 50\% of the episodes. The top row corresponds with the large LLM and the left column with the small one.}
    \centering
    \begin{tabular}{lllllll}
        & \multicolumn{3}{c}{\textbf{Random}} & \multicolumn{3}{c}{\textbf{Bits-Based}}\\
        \bottomrule
                    &  {Mixtral}          & Llama3                  & {Gemma 7B} 
                    & {Mixtral}           & Llama3                  & {Gemma 7B}  \\
        \bottomrule
        {Mistral}   & $38.7 \pm 0.3$      & ${47.0 \pm 0.5}$        & $40.4 \pm 0.3$
                    & $\mathbf{36 \pm 1}$ & $\mathbf{46 \pm 1}$  & $\mathbf{40 \pm 1}$
        \\
        {Gemma 2B} & $44.5 \pm 0.7$      & $54.1 \pm 0.6$           & $39.3 \pm 0.3$ 
                   & $\mathbf{38 \pm 2}$ & $\mathbf{46 \pm 2}$  & $\mathbf{35.3 \pm 0.9}$ \\
        Falcon 7B   & $44.0 \pm 0.3$          & $52.2 \pm 0.4$         &$45.5 \pm 0.3$
                    & $\mathbf{35.0 \pm 0.8}$ & $\mathbf{44 \pm 1}$&$\mathbf{39 \pm 1}$
    \end{tabular}
    \label{tab:llm_full_results_50}
\end{table}

\begin{table}[t]
\caption{Regret for 10 different seeds for random and bits-based policies in LLM selection for MCQA tasks. Both policies were allowed to query the large LLM for only 70\% of the episodes. The top row corresponds with the large LLM and the left column with the small one.}
    \centering
    \begin{tabular}{lllllll}
        & \multicolumn{3}{c}{\textbf{Random}} & \multicolumn{3}{c}{\textbf{Bits-Based}}\\
        \bottomrule
                    &  {Mixtral}            & Llama3                & {Gemma 7B}  
                    & {Mixtral}             & Llama3                & {Gemma 7B} \\
        \bottomrule
        {Mistral} & $30.5 \pm 0.4$          & ${33.8 \pm 0.4}$        & $34.8 \pm 0.3$
                  & $\mathbf{26.6 \pm 0.8}$ & $\mathbf{31 \pm 1}$ & $\mathbf{33.0 \pm 0.8}$
        \\
        {Gemma 2B} & $35.6 \pm 0.8$ &       $40.1 \pm 0.6$ &          $32.9 \pm 0.3$ 
                   & $\mathbf{32 \pm 1}$ &  $\mathbf{36 \pm 2}$ & $\mathbf{30 \pm 1}$ \\
        Falcon 7B  & $34.0 \pm 0.3$ &           $37.0 \pm 0.3$ &         ${37.9 \pm 0.2}$
                   & $\mathbf{27.0 \pm 0.7}$ &  $\mathbf{31 \pm 1}$ &$\mathbf{33 \pm 1}$
    \end{tabular}
    \label{tab:llm_full_results_70}
\end{table}
We repeated this process for 10 different question seeds. Table \ref{tab:llm_diff} reports the regret after 200 steps for all combinations of LLMs across all 10 seeds. Additional results are presented in Tables \ref{tab:llm_full_results_50} and \ref{tab:llm_full_results_70}. Both tables compare the random selection method and the bits-based one for different LLMs. Table \ref{tab:llm_full_results_50} compares the regret for 50\% deployment rate of the large LLM and Table \ref{tab:llm_full_results_70} compares the regret for 70\% deployment rate of the large LLM. We conclude from both tables that the effect of the bits-based policy increases with the deployment rate of the large LLM. Furthermore, we also see that bits-based selection becomes more significant, as the performance gap between the small and large LLM increases as well.
\subsection{Compute Specifications}
The bandit experiments are performed on a CPU and do not require special compute workers. 
The experiments using Mistral 7B, Gemma 2B, Gemma 7B and Falcon 7B were evaluated using a machine with RTX4090. The experiments using Mixtral 8x7B and Llama3-70B were performed using a machine with 3xA40.
\subsection{Dataset and Models License}
The MMLU dataset \citep{hendryckstest2021mmlu, hendrycks2021ethicsmmlu} is available under the \textit{MIT license}. Mistral 7B \citep{jiang2023mistral}, Mixtral 8x7B \citep{jiang2024mixtral} and Falcon 7B \citep{falcon40b} are available under the \textit{Apache license 2.0}. Gemma 2B and 7B \citep{gemmateam2024gemmaopenmodelsbased} are available under the \textit{Gemma license}. Llama3-70B \citep{llama3modelcard} is available under the \textit{Llama license}.
\subsection{LLM Prompts}
We now describe the different prompts for every LLM.
% \paragraph*{Mistral / Mixtral:} 
\subsubsection{Mistral and Mixtral}
\begin{tcolorbox}[colback=blue!5,colframe=blue!40!black,title=\textbf{Prompt for generating answers for Mistral 7B and Mixtral 8x7B}]
You will answer the following question using one of the following letters, A, B, C, or D. Do not explain or describe the answer. You are given the following question: \\
\textless{Question\textgreater} \\
The possible answers are: \\
A. \textless{Option A\textgreater} \\
B. \textless{Option B\textgreater} \\
C. \textless{Option C\textgreater} \\
D. \textless{Option D\textgreater} \\
Please output only the letter corresponding with the correct answer - A, B, C or D. Don't explain or describe the answer. \\
Your answer: \\
\end{tcolorbox}

% "You will answer the following question using one of the following letters, A, B, C, or D. Do not explain or describe the answer. You are given the following question: \\
% \textless{Question\textgreater} \\
% The possible answers are: \\
% A. \textless{Option A\textgreater} \\
% B. \textless{Option B\textgreater} \\
% C. \textless{Option C\textgreater} \\
% D. \textless{Option D\textgreater} \\
% Please output only the letter corresponding with the correct answer - A, B, C or D. Don't explain or describe the answer. \\
% Your answer: "
\subsubsection{Llama3}
\begin{tcolorbox}[colback=blue!5,colframe=blue!40!black,title=\textbf{Prompt for generating answers for Llama3 8B / 70B}]
You are a bot that only outputs one of the following letters - A, B, C or D.
You are designed to answer multiple choice questions.
Do not explain or describe the answer.\\
Question: You are given the following question: \\
\textless{Question\textgreater} \\
The possible answers are: \\
A. \textless{Option A\textgreater} \\
B. \textless{Option B\textgreater} \\
C. \textless{Option C\textgreater} \\
D. \textless{Option D\textgreater} \\
You must output only the letter corresponding with the correct answer - A, B, C or D.
Don't explain or describe the answer.\\
Output: \\
\end{tcolorbox}
% "You are a bot that only outputs one of the following letters - A, B, C or D.
% You are designed to answer multiple choice questions.
% Do not explain or describe the answer.\\
% Question: You are given the following question: \\
% \textless{Question\textgreater} \\
% The possible answers are: \\
% A. \textless{Option A\textgreater} \\
% B. \textless{Option B\textgreater} \\
% C. \textless{Option C\textgreater} \\
% D. \textless{Option D\textgreater} \\
% You must output only the letter corresponding with the correct answer - A, B, C or D.
% Don't explain or describe the answer.\\
% Output: "
\subsubsection{Falcon} 
\begin{tcolorbox}[colback=blue!5,colframe=blue!40!black,title=\textbf{Prompt for generating answers for Falcon 7B}]
You are an AI assistant designed to answer multiple choice questions.
You will answer the following question using one of the following letters, A, B, C, or D.
Do not explain or describe the answer.
You are given the following question: \\
\textless{Question\textgreater} \\
The possible answers are: \\
A. \textless{Option A\textgreater} \\
B. \textless{Option B\textgreater} \\
C. \textless{Option C\textgreater} \\
D. \textless{Option D\textgreater} \\
Please output only the letter corresponding with the correct answer - A, B, C or D.
Don't explain or describe the answer.\\
Your answer: \\
\end{tcolorbox}
% "You are an AI assistant designed to answer multiple choice questions.
% You will answer the following question using one of the following letters, A, B, C, or D.
% Do not explain or describe the answer.
% You are given the following question: \\
% \textless{Question\textgreater} \\
% The possible answers are: \\
% A. \textless{Option A\textgreater} \\
% B. \textless{Option B\textgreater} \\
% C. \textless{Option C\textgreater} \\
% D. \textless{Option D\textgreater} \\
% Please output only the letter corresponding with the correct answer - A, B, C or D.
% Don't explain or describe the answer.\\
% Your answer: " \\
\subsubsection{Gemma 2B / 7B}
\begin{tcolorbox}[colback=blue!5,colframe=blue!40!black,title=\textbf{Prompt for generating answers for Gemma 2B / 7B}]
\textless{start\_of\_turn\textgreater}user \\
You will answer the following question using one of the following letters, A, B, C, or D.
Do not explain or describe the answer.
You are given the following question: \\
\textless{Question\textgreater} \\
The possible answers are: \\
A. \textless{Option A\textgreater} \\
B. \textless{Option B\textgreater} \\
C. \textless{Option C\textgreater} \\
D. \textless{Option D\textgreater} \\
Please output only the letter corresponding with the correct answer - A, B, C or D.
Don't explain or describe the answer.\textless{end\_of\_turn\textgreater} \\
\textless{start\_of\_turn\textgreater}model \\
\end{tcolorbox}
% "\textless{start\_of\_turn\textgreater}user \\
% You will answer the following question using one of the following letters, A, B, C, or D.
% Do not explain or describe the answer.
% You are given the following question: \\
% \textless{Question\textgreater} \\
% The possible answers are: \\
% A. \textless{Option A\textgreater} \\
% B. \textless{Option B\textgreater} \\
% C. \textless{Option C\textgreater} \\
% D. \textless{Option D\textgreater} \\
% Please output only the letter corresponding with the correct answer - A, B, C or D.
% Don't explain or describe the answer.\textless{end\_of\_turn\textgreater} \\
% \textless{start\_of\_turn\textgreater}model "

\end{document}